\documentclass[twoside]{article}

\usepackage[accepted]{aistats2026}

\usepackage[utf8]{inputenc} 
\usepackage[T1]{fontenc}    
\usepackage{hyperref}       
\usepackage{url}            
\usepackage{booktabs}       
\usepackage{amsfonts}       
\usepackage{nicefrac}       
\usepackage{microtype}      
\usepackage{xcolor}         

\usepackage{multirow,paralist,mathrsfs,amsfonts,dsfont,tikz}

\usepackage{enumitem}

\usepackage{amsmath,amsthm,amssymb,multirow,paralist,mathrsfs,amsfonts,dsfont,tikz,bbm}
\usepackage{amssymb}
\usepackage{mathtools}
\usepackage{amsthm}

\usepackage{amsmath}
\usepackage{amssymb}
\usepackage{mathtools}
\usepackage{amsthm}

\usepackage{soul}
\usepackage{url}
\usepackage[utf8]{inputenc}
\usepackage[small]{caption}
\usepackage{graphicx}
\usepackage{amsmath}
\usepackage{amsthm}
\usepackage{booktabs}
\usepackage{algorithm}
\usepackage{algorithmic}
\usepackage{bbm}
\usepackage{comment}
\usepackage{multirow}
\usepackage{siunitx}
\usepackage{tabularx}
\usepackage{nicematrix}
\usepackage{subfigure}
\usepackage{footnote}

\usepackage{wrapfig}

\usepackage{multirow,paralist,mathrsfs,amsfonts,dsfont,tikz,stmaryrd,wrapfig}

\usepackage{enumitem}

\usepackage[round]{natbib}

\newtheorem{theorem}{Theorem}
\newtheorem{lemma}{Lemma}

\newtheorem{proposition}{Proposition}

\newtheorem{theoreminappendix}{Theorem (Re)}
\newtheorem{lemmainappendix}{Lemma (Re)}

\newtheorem{propositioninappendix}{Proposition (Re)}

\def\cl{\text{cl}}

\def\E{\mathbb{E}}

\def\P{\mathbb P}
\def\R{\mathbb R}

\def\indicator{\mathbbm{1}}

\def\calX{\mathcal X}
\def\calY{\mathcal Y}

\def\calD{\mathcal D}

\def\calP{\mathcal P}

\def\calB{\mathcal B}

\def\calF{\mathcal F}
\def\calL{\mathcal L}

\def\calT{\mathcal T}

\def\E{\mathbb E}
\def\P{\mathbb P}
\def\R{\mathbb R}

\def\min{\text{min}}
\def\sup{\text{sup}}
\def\max{\text{max}}
\def\tr{\text{tr}}

\def\test{\text{test}}

\def\cl{\text{cl}}

\def\noisy{\text{noisy}}
\def\crf{{\text{cr}}}
\def\temp{{\texttt{temp}}}

\begin{document}

%
\runningtitle{Conformal Margin Risk Minimization: An Envelope Framework for Robust Learning under Label Noise}

%

\twocolumn[

\aistatstitle{Conformal Margin Risk Minimization: \\
An Envelope Framework for Robust Learning under Label Noise}

\aistatsauthor{Yuanjie Shi$^{*}$ \And Peihong Li$^{*}$ \And Zijian Zhang \And Jana Doppa \And  Yan Yan }

\aistatsaddress{School of EECS, Washington State University, Pullman, WA, USA} ]

\let\thefootnote\relax\footnotetext{$^{*}$Equal contribution.}

\begin{abstract}
Most methods for learning with noisy labels require 
privileged knowledge such as noise transition matrices, clean subsets or pretrained feature extractors, resources typically unavailable when robustness is most needed.
We propose \emph{{\bf C}onformal {\bf M}argin {\bf R}isk {\bf M}inimization (CMRM)}, a plug-and-play envelope framework that improves \emph{any} classification loss under label noise by adding a single quantile-calibrated regularization term, with no privileged knowledge or training pipeline modification.
CMRM measures the confidence margin between the observed label and competing labels, and thresholds it with a conformal quantile estimated per batch to focus training on high-margin samples while suppressing likely mislabeled ones.
We derive a learning bound for CMRM under arbitrary label noise requiring only mild regularity of the margin distribution. 
Across five base methods and six benchmarks with synthetic and real-world noise, 
CMRM consistently improves accuracy (up to $+3.39\%$), reduces conformal prediction set size (up to $-20.44\%$) and does not hurt under 0\% noise, 
showing that CMRM captures a method-agnostic uncertainty signal that existing mechanisms did not exploit.
\end{abstract}

\section{Introduction}

Deep neural networks have achieved remarkable success across domains such as vision \citep{zhao2024review,zhang2024vision}, language \citep{naveed2025comprehensive,teubner2023welcome}, and healthcare \citep{thirunavukarasu2023large,celard2023survey}, but their performance typically depends on clean labels \citep{arpit2017closer}. 
However, in real-world scenarios, labels are often corrupted by human annotation errors (e.g., crowd-sourcing) or automated data collection pipelines 
\citep{song2022learning,gupta2019dealing}. 
Learning with label noise (LNL) formalizes this setting, where corrupted labels distort empirical risk and lead to overfitting \citep{natarajan2013learning,zhang2021understanding,arpit2017closer}. 
This issue is particularly severe in high-stakes domains such as medical imaging \citep{shi2024survey} and autonomous driving \citep{li2021evaluation}, making robustness to label noise an important problem in modern machine learning \citep{song2022learning,frenay2013classification}.

Learning under label noise has been extensively studied \citep{song2022learning}, with existing approaches broadly falling into the following categories.
Noise modeling approaches assume a specific corruption process (e.g., symmetric, class-conditional, or instance-dependent) and often require estimating or knowing the noise transition matrix \citep{natarajan2013learning,patrini2017making,li2021provably,cheng2020learning}, which is typically unobservable and difficult to model under complex noise \citep{yao2020dual}.
Loss correction methods adjust training objectives to counteract label corruption but typically depend on auxiliary information, such as small trusted clean subsets or accurate noise rate estimates \citep{hendrycks2018using,xia2019anchor}, both of which are often unavailable or unreliable in practice.
Auxiliary methods leverage additional sources, such as peer networks, semi-supervised learning (SSL) pipelines, or external expert models such as large language models (LLMs), which introduce strong data and supervision requirements \citep{han2018co,li2020dividemix,wang2024noisegpt}.
Table \ref{tab:previous_methods} summarizes representative methods and their key assumptions (see Section~\ref{sec:problem_setup} for details).

Despite this progress, these approaches share a common limitation: 
they all require some form of privileged knowledge about the noise process or access to auxiliary resources,
such as a noise transition matrix, a clean data subset, a peer network or a pretrained feature extractor,
which is typically unavailable in the settings where robustness to label noise is most needed,
especially under severe or heterogeneous noise \citep{arazo2019unsupervised,song2022learning}.
This raises the main research question of this paper: 
\emph{Can we design a flexible envelope framework that enhances the robustness of existing methods under arbitrary label noise, requiring only standard mathematical regularity conditions,
instead of any knowledge of the noise process or access to auxiliary supervision?}

\begin{table}[t]
\centering
\resizebox{\columnwidth}{!}{
\begin{tabular}{|c|c|}
\hline
\texttt{Methods} & \texttt{Assumptions} \\ \hline
LNL \citep{natarajan2013learning}
&
Symmetric noise
\\ \hline
Forward \citep{patrini2017making}
& 
Known label noise transition matrix
\\ \hline
GLC \citep{hendrycks2018using} 
&
Small clean data subsets
\\ \hline
Co-teaching \citep{han2018co} 
& 
Two peer nets
\\
\hline 
CORES \citep{cheng2020learning}
&
Instance-dependent noise
\\
\hline
VolMinNet \citep{li2021provably}
&
Class-conditional noise
\\
\hline 
FINE \citep{kim2021fine}
&
Eigen-structure noise
\\
\hline 
ELR+ \citep{liu2020early}
&
Clean samples learned first
\\
\hline 
NCFW \citep{zhang2024multiclass}
&
Known class posteriors
\\
\hline 
CSGN \citep{lin2024learning} 
&
Latent causal graph transition 
\\
\hline 
NI-ERM \citep{zhu2024label}
&
Strong pretrained feature extractor
\\
\hline 
NoiseGPT \citep{wang2024noisegpt}
&
External LLM
\\
\hline 
\textbf{CMRM (ours)} & Smooth CDF + positive density
\\
\hline 
\end{tabular}
}
\caption{
\textbf{Classical and recent methods for learning from noisy labels and their assumptions}. 
CMRM only assumes mild regularity conditions, a smooth CDF with positive density at the target quantile. 
Details in Section~\ref{sec:problem_setup}.
CMRM's assumptions are automatically satisfied in standard training (see Figure~\ref{fig:multi_justification}(d)).
}
\label{tab:previous_methods}
\end{table}

To answer this question, we propose \emph{{\bf C}onformal {\bf M}argin {\bf R}isk {\bf M}inimization (CMRM)},
an envelope framework to improve the robustness and accuracy of prior methods. 
CMRM does not rely on noise models, clean subsets or auxiliary supervision. 
CMRM achieves the goal through an uncertainty-aware training objective that integrates confidence margins and conformal quantiles. 
Confidence margin, defined as the gap between the confidence of the observed label and competing labels, provides a principled signal to distinguish reliable from uncertain training samples \citep{cui2019class,pleiss2020identifying}.
Conformal quantiles offer distribution-free, statistically valid thresholds for these margins \citep{angelopoulos2021gentle,lei2018distribution}. 

This algorithm design is motivated by two prior observations: 
(1) label noise creates a mismatch between corrupted labels and the evolving confidence structure of the model \citep{pleiss2020identifying}, and 
(2) ignoring this uncertainty often leads to overfitting on noisy samples and prevents algorithms from leveraging informative signals \citep{zhang2021understanding,arpit2017closer}.
Building on this motivation, CMRM formulates a conformal margin risk that directly incorporates uncertainty into the training objective.
Specifically, CMRM computes per-example confidence margins, estimates a conformal quantile to set an adaptive threshold, and minimizes a conformal risk defined as the average negative margin above that threshold.
Importantly, CMRM requires no architectural modification, no additional networks and no clean validation data,
as it adds only a single quantile-calibrated regularization term to any existing training objective.
We further establish a learning bound for CMRM under arbitrary noise. 
Our extensive experiments on both binary and multi-class classification benchmarks demonstrate that,
without requiring prior knowledge of noise,
CMRM consistently improves methods with fundamentally different LNL design principles, 
indicating that conformal margin calibration captures a method-agnostic uncertainty signal that existing mechanisms fail to fully exploit.

{\bf Contributions.} Our key contributions include:
\begin{itemize}
    \item 
    We propose CMRM, an envelope framework whose robustness requires only mild regularity of the margin distribution that is automatically satisfied in standard training,
    instead of assumptions on noise models, clean data or auxiliary resources (see Table~\ref{tab:previous_methods}).
    
    \item 
    We derive a learning bound under arbitrary label noise that decomposes the gap between the noisy surrogate and clean conditional margin risk into function-class complexity, quantile estimation error, and distribution shift.

    \item 
    We demonstrate that CMRM, requiring only a single regularization term, improves accuracy across all base method and dataset combinations tested and reduces prediction set size in nearly all cases, across five base methods (CE, Focal, LDAM, GCE, NI-ERM) and six benchmarks with synthetic and real-world noise, and incurs no accuracy penalty when labels are clean. 
    The CMRM code is available at \url{https://github.com/YuanjieSh/CMRM}.
\end{itemize}

\section{Related Work}


{\bf Learning with Label Noise (LNL)} studies how to train models when labels are corrupted by annotation errors, heuristic labeling, or unreliable sources \citep{frenay2013classification,song2022learning,johnson2022survey}.
Many approaches have been proposed, often under specific assumptions on the noise mechanism or requiring auxiliary supervision (see Table~\ref{tab:previous_methods} and Section~\ref{sec:problem_setup}).
These include loss or posterior correction with known noise models \citep{natarajan2013learning,patrini2017making,li2021provably}, methods using clean side information or peer networks \citep{hendrycks2018using,han2018co,li2020dividemix,liu2020early}, approaches for instance-dependent or structured noise \citep{cheng2020learning,kim2021fine,lin2024learning}, and methods leveraging external knowledge such as pretrained features or LLMs \citep{zhang2024multiclass,zhu2024label,wang2024noisegpt}.
In contrast, our envelope approach requires only mild regularity conditions and is compatible with most prior methods.

{\bf Conformal Prediction (CP)} is a distribution-free uncertainty quantification framework that constructs prediction sets with guaranteed finite-sample coverage \citep{vovk2005algorithmic,lei2018distribution,angelopoulos2021gentle,fontana2023conformal,NCP,PRCP,R3CP,LLM-CP}.
Recent conformal training methods incorporate these principles into the training process, focusing on minimizing prediction set size \citep{stutz2021learning,shi2025direct} or refining coverage calibration \citep{einbinder2022training,kiyani2024length}, but assume clean training data.
In contrast, CMRM leverages conformal principles to enhance discriminative ability under noisy supervision.
Importantly, CMRM is not a conformal prediction method in the standard sense; it does not construct prediction sets or provide coverage guarantees at inference time.

\section{Problem Setup and Motivation}
\label{sec:problem_setup}

\textbf{Notations.}
Suppose $X \in \calX$ is an input from $\mathcal{X}$, and $Y \in \calY = \{ 0,1,\cdots, K-1\}$ is the ground-truth label, where $K$ is the number of candidate classes. 
Let the underlying data distribution be $\calP(X,Y)$, which characterizes the relationship between inputs and class labels. 
We denote by $\calP(X)$ the marginal distribution of inputs, and by $\calP(Y|X)$ the conditional label distribution given inputs.  
Let $f: \calX \rightarrow \R^K$ denote the logit vector produced by soft classifier $f \in \calF$, where $\calF$ denotes a hypothesis class.
We also define $P_f(X) = \sigma \circ f(X): \calX \rightarrow \Delta_+^K$ as the corresponding confidence score, where $\Delta_+^K$ is the (K-1)-dimensional probability simplex, 
and $\sigma$ is the Sigmoid function in binary and Softmax function in multi-class setting, respectively.
Let $P_f(X)_y$ denote the confidence score of class $y$.
Define $\indicator[\cdot]$ as an indicator function.
Denote $\calD_\tr$ and $\calD_\test$ as the training and test sets.
Let $\calB$ be a randomly sampled batch of training data of size $s$.
We denote by $W_1(\mathcal{P}, \mathcal{Q}) 
= \inf_{\pi \in \Pi(\mathcal{P}, \mathcal{Q})} 
\int_{\mathcal{X} \times \mathcal{X}} d(x,x')\, \mathrm{d}\pi(x,x')$ as the Wasserstein-1 distance between two distributions $\mathcal{P}$ and $\mathcal{Q}$ on metric space $(\mathcal{X}, d)$, where $\Pi(\mathcal{P}, \mathcal{Q})$ is the set of all joint distributions of $\mathcal{P}$ and $\mathcal{Q}$.

\textbf{Learning with Noisy Labels (LNL).}
In many real-world scenarios, the training data are corrupted by noisy labels.
Instead of the clean label $Y$, we only observe a potentially corrupted label $\widetilde{Y} \in \mathcal{Y}$ generated from $Y$ through an unknown label noise transition matrix $T(\widetilde{Y}\mid X,Y)$ \citep{zhu2024label}.
For example, symmetric noise corresponds to $T(\widetilde{Y}\mid X,Y)$ being uniform over all incorrect labels, while class-conditional noise assumes dependence only on $Y$. 
We highlight that {\it our proposed CMRM framework does not rely on such assumptions and permits $T$ to be arbitrary.}

Such noisy labels setting arises typically due to annotator mistakes, inconsistent labeling criteria, or spurious labels introduced by large-scale data collection pipelines \citep{natarajan2013learning,patrini2017making}.
Formally, we define the noisy training data distribution as:
\begin{align*}
\calP_{\noisy}(X,\widetilde{Y}) \;=\; \sum_{y \in \calY} \calP(X,y)\,T(\widetilde{Y}\mid X,y).
\end{align*}
Accordingly, the training set $\mathcal{D}_{\tr} = \{(X_i,\widetilde{Y}_i)\}_{i=1}^n$ of size $n$ is drawn from $\mathcal{P}_{\noisy}$, while the clean test set $\mathcal{D}_{\test}$ is drawn from $\mathcal{P}$.


\textbf{Limitations of Existing Approaches}.
Despite substantial progress on learning from noisy labels, most existing methods remain fundamentally constrained by strong assumptions. As summarized in Table~\ref{tab:previous_methods}, these assumptions fall into three major categories:

First, many approaches impose explicit noise-model assumptions, from symmetric noise or known transition matrices~\citep{natarajan2013learning,patrini2017making} to class-conditional~\citep{li2021provably}, instance-dependent~\citep{cheng2020learning}, or eigen-structure noise~\citep{kim2021fine}, but these still rarely align with real-world corruption.

Second, a range of methods rely on auxiliary information, such as clean subsets~\citep{hendrycks2018using}, noise transition matrices \citep{patrini2017making}, strong pretrained feature extractors~\citep{zhu2024label}, or external LLMs~\citep{wang2024noisegpt},
which are costly and unavailable at scale.

Third, some approaches exploit architectural assumptions or model-specific heuristics,
such as peer networks~\citep{han2018co,li2020dividemix},
early-learning assumptions~\citep{liu2020early},
or latent causal structures~\citep{lin2024learning}. 

Overall, these assumptions limit existing methods, especially under severe or heterogeneous noise~\citep{arazo2019unsupervised,song2022learning}.
Therefore, an envelope framework that provides robustness guarantees under only mild regularity conditions is needed.

\section{CMRM Framework}
\label{sec:proposed_method}

In this section, we first describe the general {\it Conformal Margin Risk Minimization (CMRM)} envelope framework and its variant for binary classification. 
Next, we develop a practical optimization algorithm for CMRM and theoretically analyze its learning bound. 

\subsection{General Framework}
\label{sec:framework}

\textbf{Confidence Margin.} 
The confidence margin quantifies the separation between the confidence assigned to the observed label and the highest confidence among other candidate labels:  
\begin{align}
\label{eq:framework_margin}
M_f(X_i,\widetilde Y_i) 
= P_f(X_i)_{\widetilde Y_i} - \max_{y \in \calY \setminus \{\widetilde Y_i\}} P_f(X_i)_y .
\end{align}
This notion of confidence margin has been widely used in the machine learning literature \citep{cui2019class,bartlett2002rademacher}.
Large margins indicate a strong preference for the observed label, whereas small or negative margins reflect uncertainty \citep{cui2019class}. 
Margins are widely used to characterize decision boundary distance~\citep{bartlett2002rademacher,neyshabur2017exploring} and to measure predictive uncertainty~\citep{elsayed2018large,jiang2018predictive,lakshminarayanan2017simple}. 

\textbf{Conformal Quantile.}
Conformal prediction (CP)~\citep{vovk2005algorithmic,angelopoulos2021gentle} provides a distribution-free mechanism for defining quantile thresholds. 
Let $V$ be a real-valued random variable, and let $\{V_i\}_{i=1}^m$ denote an i.i.d.\ sample from $V$ with size $m$. 
Given a target level $\alpha \in (0,1)$, the empirical conformal quantile is defined as $\widehat \tau_{\alpha} = 
Q \big (\alpha, \{V_i\}^m_{i=1} \big)$,
where $Q$ selects the $\lceil \alpha(m+1) \rceil$-th largest value in $\{V_i\}^m_{i=1}$.
Hence, at most an $\alpha$-fraction of the sample values lie below $\widehat \tau_{\alpha}$, i.e., the empirical $\alpha$-quantile.
Through CP, this quantile enjoys distribution-free validity guarantee, making it a principled tool for UQ.

\textbf{Conformal Margin Risk.}
While conformal quantiles provide principled thresholds, their effectiveness depends on using a score that reflects label reliability under noise.
Confidence margin serves this role: clean samples typically exhibit large margins, whereas noisy labels tend to have small margins~\citep{zhang2018generalized,liu2020early,zhang2021understanding}.
This is intuitive: correct labels align with dominant class evidence, while corrupted labels conflict with the input features and are outscored by alternative labels.
Figure~\ref{fig:margin_distribution} illustrates this separation on CIFAR-100 under different types of noise (class-conditional and human annotation noise).
However, such observations have mainly been used for diagnostic analysis or heuristic filtering, without formal risk formulations. 

CMRM closes this gap by calibrating confidence margins with conformal quantiles to yield an assumption-light and uncertainty-aware risk.
Specifically, we instantiate $V$ with confidence margin $M_f(X, \widetilde Y)$, and define the conformal quantile threshold on $\{ M_f(X_i,\widetilde Y_i) \}_{i=1}^n$:
\begin{align}
\label{eq:framework_quantile}
\widehat \tau_{\alpha}(f) 
= 
Q \Big (\alpha, \{ M_f(X_i,\widetilde Y_i) \}^n_{i=1} \Big),
\end{align}
so that at most an $\alpha$-fraction of samples fall below this threshold, a property that always holds without assumptions on the noise process.
$\alpha$ is a hyper-parameter tuned on validation set. 
Larger $\alpha$ filters more aggressively, improving noise robustness but risking loss of clean samples, while smaller $\alpha$ retains more signal but with corruption. $\alpha$ is a hyper-parameter selected through validation data (see sensitivity study in Section \ref{subsection:multiclass_classification_experiments}).

Next, we define the conformal margin risk $\widehat \calL_{\text{cr}}(f)$ as:
\begin{align}
\label{eq:CMRM}
&
\widehat \calL_{\text{cr}}(f)
= \frac{1}{n}\sum^n_{i=1} \widehat\ell_{\text{cr}}(f, X_i,\widetilde Y_i), \text{ such that}  
\\
&
\widehat\ell_{\text{cr}}(f, X_i,\widetilde Y_i) = 
- M_f(X_i,\widetilde Y_i) \cdot \widetilde \indicator[M_f(X_i,\widetilde Y_i) \ge \widehat \tau_\alpha(f)]
,
\nonumber
\end{align}
where $\widehat\ell_{\text{cr}}(f, X_i,\widetilde Y_i)$ is the per-sample conformal margin loss of $(X_i,\widetilde Y_i)$, and $\widetilde \indicator[x\geq y] = 1/(1+\exp(-(x-y)/\texttt{temp} ))$ is the smoothed indicator by Sigmoid function with temperature parameter $\texttt{temp}$. 
This formulation assigns a soft weight in $(0,1)$ to each sample rather than hard $0/1$ filtering, smoothly downweighting low-margin samples during training, similar to soft reweighting strategies in \cite{tjandra2023leveraging}.

\begin{figure}[t]
    \centering
    \begin{minipage}[t]{0.48\linewidth}
    \centering
    \textbf{(a)} Class-cond. noise
    \includegraphics[width=\linewidth]{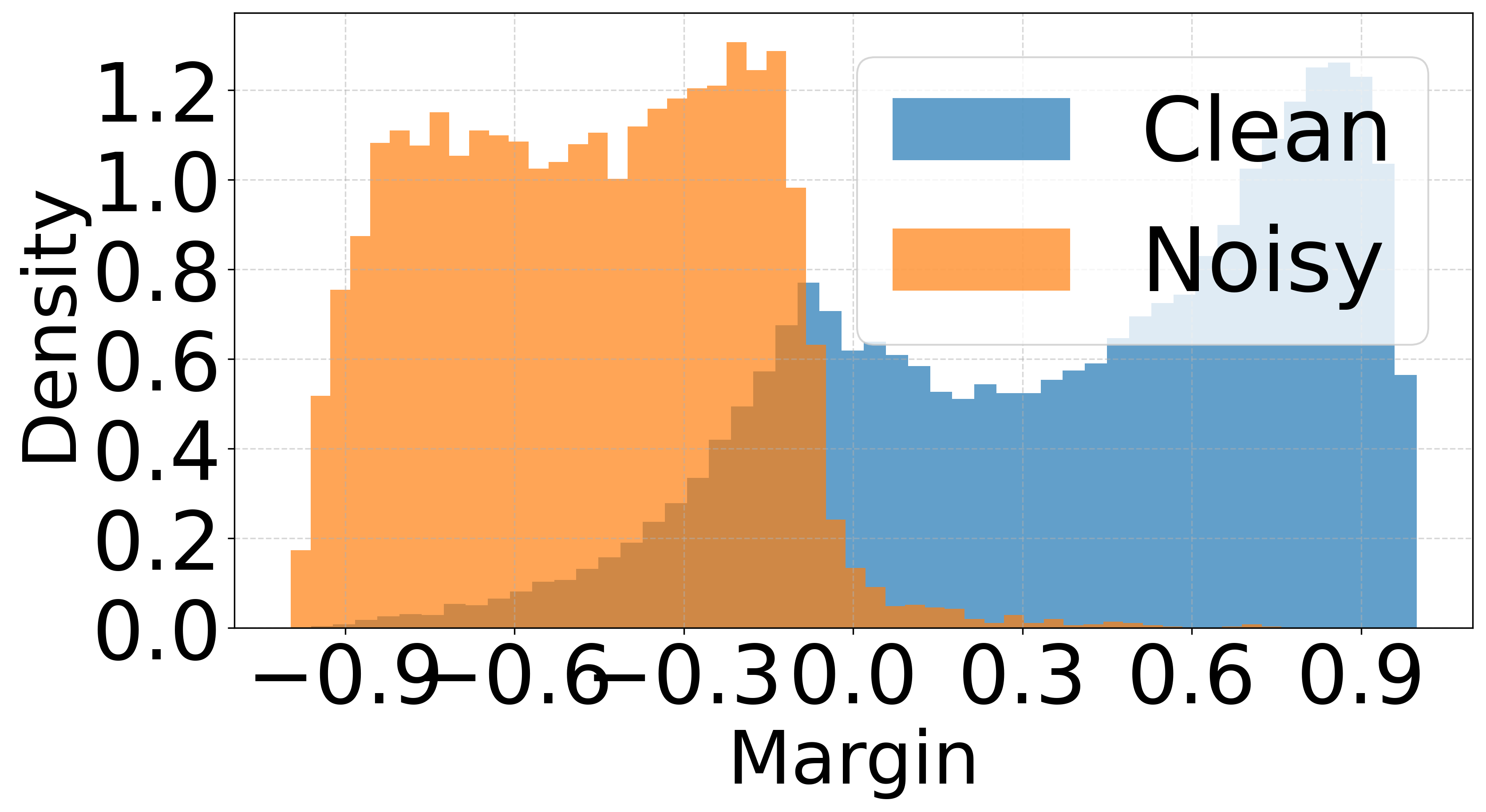}
    \end{minipage}
    \begin{minipage}[t]{0.48\linewidth}
    \centering
    \textbf{(b)} Human annota. noise
    \includegraphics[width=\linewidth]{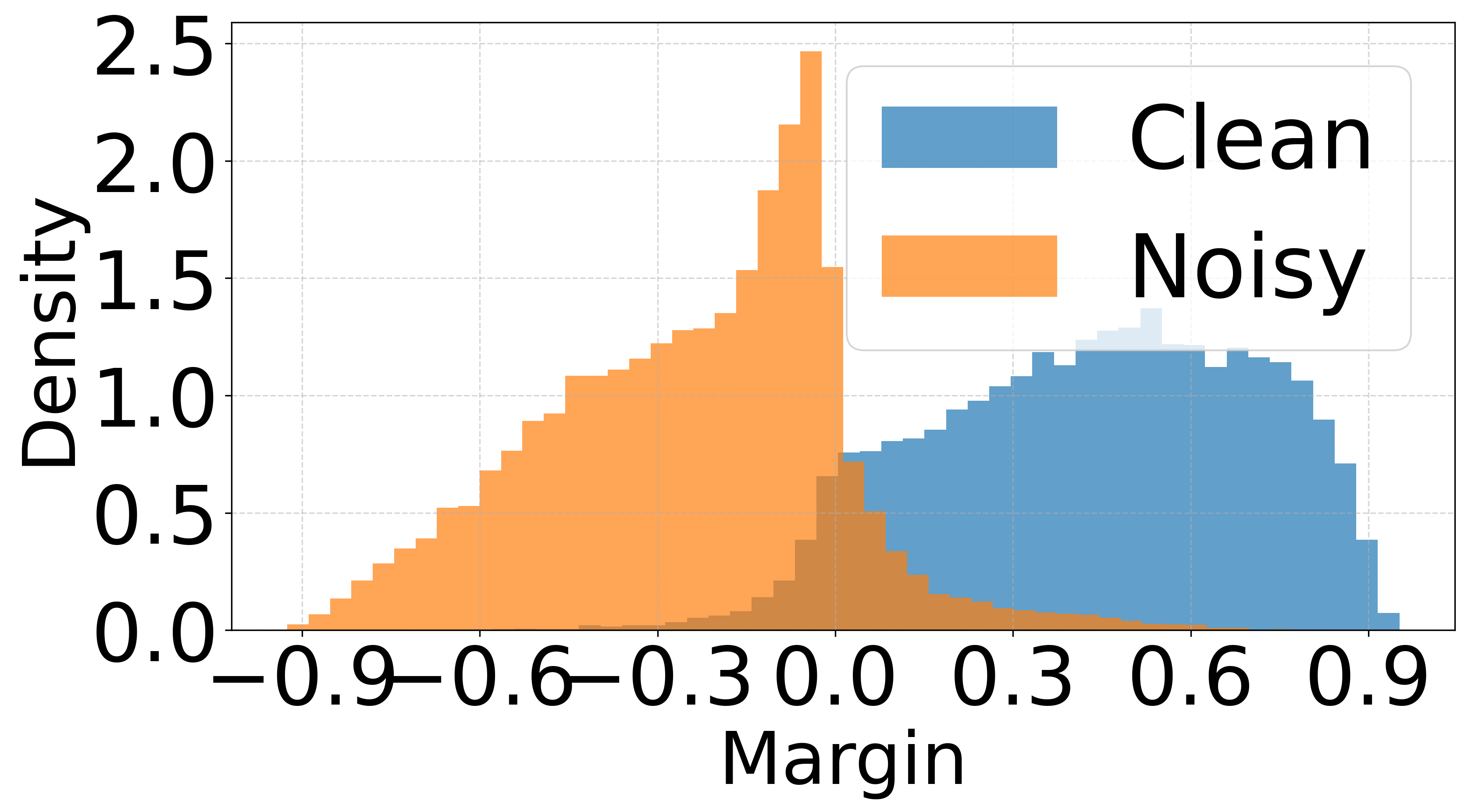}
    \end{minipage}
    \caption{
    \textbf{Confidence margin distributions for clean (blue) and noisy (orange) samples on CIFAR-100.} 
    \textbf{(a)} Class-conditional noise at $20\%$ and \textbf{(b)} human annotation noise at $40\%$. 
    In both cases, clean samples concentrate on positive margins while noisy samples shift to negative, showing that confidence margins can distinguish clean from noisy labels without assumptions on the noise process.
    }
    \label{fig:margin_distribution}
\end{figure}

By construction, $\widehat \calL_{\text{cr}}(f)$ is the average negative margin over the selected samples, i.e., those with margins above the threshold.
Minimizing $\widehat \calL_{\text{cr}}(f)$ increases the separation on selected high-confidence samples while discarding low-confidence samples. 
Thus, this principle aligns training with predictive uncertainty and filters out potentially corrupted labels (empirically verified in Figure~\ref{fig:multi_justification}(b)), without assumptions on the noise model or auxiliary supervision. 

Finally, we incorporate CMRM as a plug-and-play regularizer for any classification loss $\widehat \calL_{\cl}(f)$ (e.g., cross-entropy), including prior LNL loss functions, as demonstrated in our empirical evaluation:
\begin{align}
\label{eq:framework_total}
\widehat \calL(f)
= \widehat \calL_{\cl}(f) + \lambda \cdot \widehat \calL_{\text{cr}}(f),
\end{align}
where $\lambda \geq 0$ controls the strength of the CMRM term relative to $\widehat \calL_{\cl}(f)$. The above formulation is general for multi-class classification problems with standard accuracy metrics. However, binary classification tasks require optimizing more specific metrics. Hence, we provide an instantiation for binary classification below.

\textbf{CMRM for Binary Classification.}
Binary classification often relies on class-conditional performance measures such as false positive and false negative rates (FPR and FNR), which allow users to impose class-conditional tolerances \citep{zhou2006training}.
Since CMRM's quantile-based formulation naturally extends to class-conditional settings, we introduce a binary variant that adapts its margin thresholding to each class separately, enabling asymmetric error control while preserving its core structure.

First, we define class-conditional quantile thresholds for tolerances $\alpha^+$ and $\alpha^-$:
\begin{align*}
\widehat \tau^{-}(f) 
& \! = \! \min\Bigl\{ t \! : \! \,
    \! \frac{1}{n_0} \! \sum_{i: \widetilde Y_i \! = \! 0} 
    \!\indicator \!\bigl[P_f(X_i)_1 \! \ge \! t \bigr] 
    \! \le \! \frac{\lceil \alpha^{-}(n_0 \! + \!1) \rceil}{n_0} \Bigr\}, 
\\
\widehat \tau^{+}(f) 
& \! = \! \max\Bigl\{ t \! : \!\,
    \! \frac{1}{n_1} \! \sum_{i: \widetilde Y_i \! = \! 1} 
    \! \indicator \! \bigl[P_f(X_i)_1 \! \le \! t \bigr] 
    \! \le \! \frac{\lceil \alpha^{+}(n_1 \! + \! 1) \rceil}{n_1} \Bigr\},
\end{align*}
where $n_0$ and $n_1$ are the numbers of observed negative and positive samples, respectively.
These thresholds control the upper tail of the negative class (potential false positives) and the lower tail of the positive class (potential false negatives).

Next, we define a two-sided hinge formulation relative to these thresholds:
\begin{align}
\label{eq:binary-CMRM-empirical}
\widehat \calL^{\text{bin}}_{\text{cr}}(f)
= &
\frac{1}{n}\sum^n_{i=1} \widehat\ell^{\text{bin}}_{\text{cr}}(f, X_i,\widetilde Y_i), \text{ such that} 
\\
\widehat\ell^{\text{bin}}_{\text{cr}}(f, X_i,\widetilde Y_i) 
= &
- \lambda^{-} \widetilde \indicator[\widetilde Y_i=0] \cdot \bigl(P_f(X_i)_1 - \widehat\tau^{-}(f) \bigr)^{+}
\nonumber \\
& - \lambda^{+} \widetilde \indicator[\widetilde Y_i=1] \cdot 
\bigl(\widehat\tau^{+}(f) - P_f(X_i)_1\bigr)^{+},
\nonumber
\end{align}
where $(z)^+ = \max\{0,z\}$ and $\lambda^+,\lambda^- > 0$ control the relative strength of the two penalties.

This binary classification variant preserves the two core ingredients of CMRM: conformal quantile-based filtering and margin maximization on retained samples. 
By separately controlling the class-conditional distribution, it aligns naturally with standard binary performance metrics such as FPR and FNR. 

\subsection{Optimization and Theoretical Analysis}
\label{subsec:learning_theory}

\noindent {\bf Optimization Algorithm.} A key challenge in CMRM is that its objective depends on set-level quantiles of confidence margins over the full training data distribution, requiring $O(n\log n)$ sorting per iteration and thus prohibitive for large-scale training.  
We address this by replacing set-level quantiles with batch-level quantiles $\widehat \tau^s_{\alpha} (f)$: each iteration estimates the quantile from a batch of size $s$, reducing the cost to $O(s\log s)$. 
This yields a tractable surrogate objective $\widehat \calL^s_{\text{cr}}(f)$ by using $\widehat \tau^s_{\alpha} (f)$:
\begin{align}
\label{eq:batch_objective}
\widehat \calL^s_{\text{cr}}(f)
= \mathop{\E}\limits_{\widehat \tau^s_{\alpha} (f) \sim \calT_{\alpha}(f)}\bigg[ \sum^n_{i=1} &  - M_f(X_i,\widetilde Y_i) \cdot 
\\
& \widetilde \indicator[M_f(X_i,\widetilde Y_i) \ge \widehat \tau^s_\alpha(f)] \bigg],
\nonumber 
\end{align}
where $\calT_{\alpha}(f)$ is the underlying distribution of $\widehat \tau^s_{\alpha} (f)$.
Although this introduces an approximation gap relative to population quantiles, our theoretical analysis shows that the gap is bounded.

Algorithm~\ref{alg:CMRM} summarizes the CMRM training procedure.
The algorithm follows a standard stochastic optimization loop. 
For each iteration $t$, we first compute the confidence margins on batch $\calB_t$ (Line~\ref{alg:line:margin}) and then estimate the batch-wise conformal quantile threshold (Line~\ref{alg:line:threshold}). 
Next, we compute both the classification loss (Line~\ref{alg:line:cl_loss}) and the conformal margin risk (Line~\ref{alg:line:CMRM_loss}). 
The final update (Line~\ref{alg:line:update_f}) jointly minimizes both terms. 
Importantly, CMRM requires no change to model architecture or optimization, introducing only a quantile-calibrated regularizer that is compatible with arbitrary classifiers and loss functions.

\begin{algorithm}[t]
\caption{Conformal Margin Risk Minimization}
\label{alg:CMRM}
\begin{algorithmic}[1]

\STATE \textbf{Input}: training dataset $\calD_\tr$, regularization parameter $\lambda$, batch size $s$, learning-rate $\eta >0$, exclusion rate $\alpha$

\STATE Randomly initialize the deep neural network $f_0$ 

    \FOR{$t \leftarrow 0 : T-1$}

        \STATE Randomly sample batch $\calB_t \subset \calD_\tr$
        
        \STATE Compute 
        $M_{f_t}(X_i,\widetilde Y_i)$ on $\calB_t$
        \label{alg:line:margin}

        \STATE Compute batch-wise quantile $\widehat \tau^s_{\alpha}(f_t)$ on $\calB_t$
        \label{alg:line:threshold}

        \STATE Compute classification loss $\widehat \calL_{\cl}(f_t)$ on $\calB_t$
        \label{alg:line:cl_loss}

        \STATE Compute conformal margin risk $\widehat \calL_{\text{cr}}(f_t)$ on $\calB_t$
        \label{alg:line:CMRM_loss}

        \STATE $f_{t+1} 
        \leftarrow 
        f_t - \eta \nabla_f \big(\widehat \calL_{\cl}(f_t) + \lambda \widehat \calL_{\text{cr}}(f_t) \big)$
        \label{alg:line:update_f}
        
    \ENDFOR
    \label{alg:line:reinitialize}
\STATE \textbf{Output}: the trained model $f_T$
\end{algorithmic}
\end{algorithm}

\noindent {\bf Theoretical Analysis.} Building on the optimization procedure, we analyze the effect of using batch-level estimates and the corresponding learning bound. 
Proposition~\ref{proposition:quantile_gap} shows that the batch-level quantile in $\widehat \calL^s_{\text{cr}}(f)$ concentrates around the population quantile at rate $\tilde O(1/\sqrt{s})$ where $s$ is the batch size. 
Theorem~\ref{theorem:learning_bound} further bounds the gap between empirical margin risk on noisy data and population risk on clean data. 
Together, these results characterize the robustness of CMRM under arbitrary label noise with minimal assumptions.

We start with the definition of population quantile $\tau_{\alpha}(f)$ on the noisy data distribution:
\begin{align*}
\tau_{\alpha}(f) = \min \Big \{t: \P_{(X, \widetilde Y) \sim \calP_{\noisy}} [M_f(X,\widetilde Y) \leq t \big ] \geq \alpha \Big \}.
\end{align*}
Next, we analyze the gap between the batch-level quantile and population one.

\begin{proposition}[Gap between $\tau_\alpha$ and $\widehat \tau_\alpha^s$] 
\label{proposition:quantile_gap}
Denote by $G(t)$ the cumulative distribution function (CDF) of $M_f(X,\widetilde Y)$ under the noisy distribution $\mathcal{P}_{\noisy}$.
Assume that $G(t)$ is continuously differentiable in a neighborhood of $\tau_\alpha(f)$ with density $g(t)$ and $g(\tau_\alpha(f)) > 0$. 
Then, for any $\delta \in (0,1)$, we have:
\begin{align*}
\P \bigg ( |\tau_{\alpha}(f) - \widehat\tau^s_{\alpha}(f) | \leq \tilde O \Big (\frac{1}{\sqrt{s}} \Big ) \bigg ) \geq 1-\delta,
\end{align*}
where $\tilde O$ hides the logarithmic factors.
\end{proposition}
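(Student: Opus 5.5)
The plan is to prove a standard quantile-concentration result in two steps. First, the Dvoretzky--Kiefer--Wolfowitz (DKW) inequality controls the empirical CDF of the batch margins uniformly. Second, that uniform control is converted into control of the quantile itself, using the local invertibility of $G$ near $\tau_\alpha(f)$ that the positive-density assumption provides. Throughout, $f$ is fixed, so the batch margins $V_i = M_f(X_i,\widetilde Y_i)$, $i\in\calB$, are i.i.d.\ draws from $G$. (If $f$ is data-dependent one would add a union or uniform-convergence argument, but the statement is for fixed $f$.)

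\textbf{Step 1 (uniform CDF deviation).} Let $\widehat G_s(t)=\frac1s\sum_{i\in\calB}\indicator[V_i\le t]$. By DKW with Massart's constant,
\begin{align*}
\P\Big(\sup_t |\widehat G_s(t)-G(t)|>\varepsilon_s\Big)\le 2e^{-2s\varepsilon_s^2},
\qquad \varepsilon_s=\sqrt{\log(2/\delta)/(2s)} .
\end{align*}
Work on the complementary event, which has probability at least $1-\delta$.

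\textbf{Step 2 (sandwich the empirical quantile).} The conformal quantile $\widehat\tau^s_\alpha$ is an order statistic of rank $\lceil \alpha(s+1)\rceil$. Hence $\widehat G_s(\widehat\tau^s_\alpha)\ge \lceil\alpha(s+1)\rceil/s \ge \alpha$. Also, $\widehat G_s(t)<\lceil\alpha(s+1)\rceil/s\le \alpha+2/s$ for every $t<\widehat\tau^s_\alpha$.

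Set $\eta_s=\varepsilon_s+2/s$ and define $t^\pm=\tau_\alpha(f)\pm c\,\eta_s$ for a constant $c$ chosen below.

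Since $G$ is continuously differentiable near $\tau_\alpha$ with $g(\tau_\alpha)>0$, there is a neighborhood $[\tau_\alpha-r,\tau_\alpha+r]$ on which $g\ge g_0:=g(\tau_\alpha)/2$. On that neighborhood $G(\tau_\alpha)=\alpha$ by continuity and the definition of $\tau_\alpha$. Taking $c=2/g_0$, for $s$ large enough that $c\eta_s\le r$ we get:
\begin{itemize}
\item $G(t^+)\ge \alpha+2\eta_s$, so $\widehat G_s(t^+)\ge\alpha+2\eta_s-\varepsilon_s>\alpha+2/s$, which forces $\widehat\tau^s_\alpha\le t^+$;
\item $G(t^-)\le\alpha-2\eta_s$, so $\widehat G_s(t^-)<\alpha$, which forces $\widehat\tau^s_\alpha> t^-$.
\end{itemize}
Together these give
\begin{align*}
|\widehat\tau^s_\alpha-\tau_\alpha|\le \frac{4}{g(\tau_\alpha)}\Big(\sqrt{\tfrac{\log(2/\delta)}{2s}}+\tfrac2s\Big)=\tilde O(1/\sqrt s).
\end{align*}
The paper's ordering convention for ``$\lceil\alpha(s+1)\rceil$-th'' (smallest versus largest) only changes the $O(1/s)$ rank offset, which is absorbed into the bound.

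\textbf{Main obstacle.} The delicate point is that the density assumption is only local. The inversion step needs $c\eta_s\le r$, i.e.\ $s\gtrsim \log(1/\delta)/(g(\tau_\alpha)^2r^2)$. For smaller $s$ the bound still holds trivially, because margins lie in $[-1,1]$: the deviation is at most $2$, which can be written as $\tilde O(1/\sqrt s)$ with a constant depending on $r$ and $g(\tau_\alpha)$.

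We will also verify that $G(\tau_\alpha)=\alpha$ exactly, which requires $G$ to have no atom at $\tau_\alpha$; continuity of $G$ near $\tau_\alpha$ guarantees this. This is the only place the smoothness assumption is truly needed beyond positivity of $g$.
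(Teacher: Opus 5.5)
Your proof is correct and follows essentially the paper's route. You apply DKW to the batch empirical CDF, then convert the uniform CDF error into a quantile error through the local density lower bound $g\ge g_0>0$ near $\tau_\alpha(f)$, which gives $|\widehat\tau^s_\alpha-\tau_\alpha(f)|\lesssim g_0^{-1}\bigl(\sqrt{\log(2/\delta)/(2s)}+O(1/s)\bigr)$. Two small refinements go beyond the paper. Bracketing $\widehat\tau^s_\alpha$ between the deterministic points $t^\pm$ avoids the paper's implicit no-ties assumption in $\widehat G^s(\widehat\tau^s_\alpha)\le\alpha+1/s$ and its separate check that $\widehat\tau^s_\alpha\in U$. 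You also cover small $s$ using $|M_f|\le 1$.

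One aside is wrong, though your argument does not rely on it. The $\lceil\alpha(s+1)\rceil$-th \emph{largest} value is the $(s+1-\lceil\alpha(s+1)\rceil)$-th smallest and estimates $\tau_{1-\alpha}(f)$, so ``smallest versus largest'' is not an $O(1/s)$ rank shift. The statement only holds with the smallest-value convention, which is exactly what your argument uses, as does the paper's $\widehat\tau^s_\alpha=\inf\{t:\widehat G^s(t)\ge\alpha\}$.
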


{\bf Remark 1.} Proposition~\ref{proposition:quantile_gap} shows that the batch-level quantile $\widehat{\tau}^s_\alpha(f)$ closely approximates the population quantile $\tau_\alpha(f)$, with an error of order $\tilde O(1/\sqrt{s})$, which quantifies the statistical accuracy of the threshold estimation step in CMRM. 
This relies on a mild regularity assumption on the CDF (smoothness and positive density), standard in quantile estimation theory (e.g.,~\cite{serfling2009approximation,van2000asymptotic}). 
We empirically verify these assumptions in Figure~\ref{fig:multi_justification}(c).

Next, we analyze the learning bound.
Instead of evaluating the unconditional classification risk, CMRM focuses on the retained high-margin region where the model is confident.
This reflects the mechanism of the method: low-margin samples—more likely to be corrupted under label noise—are suppressed during training, while high-margin samples dominate the learning signal.
Accordingly, the appropriate clean-distribution objective is the conditional margin risk restricted to this retained region, analogous in spirit to Conditional Value at Risk (CVaR)-style tail-risk analyses in the robustness literature.

To this end, 
we first define the conformal margin risk on the clean distribution as:
\begin{align*}
\mathcal{L}_{\text{cr}}(f)
= - \E_{(X,Y) \sim \calP} \left[ M_f(X,Y) \mid M_f(X,Y) \ge \tau_\alpha(f) \right].
\end{align*}

We define its empirical Rademacher complexity as:
\[
\widehat{\mathfrak{R}}_n(\mathcal{F})
:= \mathbb{E}_{\sigma}\left[
\sup_{f \in \mathcal{F}} 
\frac{1}{n}\sum_{i=1}^n \sigma_i \, f(X_i,\widetilde Y_i)
\right],
\]
where $\{\sigma_i\}_{i=1}^n$ are Rademacher variables.
The following theorem analyzes the learning bound of CMRM.

\begin{theorem}[Learning bound]
\label{theorem:learning_bound}
Suppose the assumptions in Proposition~\ref{proposition:quantile_gap} hold.
Define $\delta_w$ as the average Wasserstein-1 distance between the noisy and clean label distributions conditional on $X$, where $\delta_w = \mathbb{E}_{X \sim \mathcal{P}(X)} \Big[ W_1\big( \mathcal{P}_{\noisy}(\cdot \mid X),\; \mathcal{P}(\cdot \mid X) \big) \Big]$.
Then the learning bound of CMRM is:
\begin{align*}
\mathcal{L}_{\text{cr}}(f) - \widehat \calL^s_{\text{cr}}(f)
\leq
\tilde O \Big ( \widehat {\mathfrak{R}}_n(\mathcal{F}) + \frac{1}{\sqrt{s}} + \delta_w + \alpha + \texttt{temp} \Big ).
\end{align*}
\end{theorem}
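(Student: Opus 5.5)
We will prove the bound by a telescoping decomposition that passes through intermediate risks, one ingredient of the final rate at a time. Throughout we use that margins are bounded, $M_f\in[-1,1]$, so every loss $-M_f\cdot \widetilde\indicator[\cdot]$ lies in $[-1,1]$. The intermediate quantities are:

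\begin{itemize}
\item $\calL^{\noisy}_{\text{hard}}(f) = -\E_{\calP_{\noisy}}[M_f(X,\widetilde Y)\indicator[M_f\ge\tau_\alpha(f)]]$, the unconditional, unnormalized hard-threshold risk on the noisy distribution;
\item $\calL^{\noisy}_{\text{soft}}(f)$, the same risk with the sigmoid indicator at the population threshold;
\item its empirical version $\widehat\calL_{\text{soft}}(f)$ evaluated at the batch threshold $\widehat\tau^s_\alpha(f)$ and averaged over $\calT_\alpha(f)$, which (after normalizing by $n$) is $\widehat\calL^s_{\text{cr}}(f)$.
\end{itemize}

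We then write $\calL_{\text{cr}}(f)-\widehat\calL^s_{\text{cr}}(f)$ as a sum of four differences.

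\emph{(i) Conditional to unconditional.} Since $\P_{\calP}[M_f\ge \tau_\alpha]$ is roughly $1-\alpha$, the conditional expectation differs from the unnormalized one by a factor $1/(1-O(\alpha))$. With $|M_f|\le1$ this contributes $O(\alpha)$, plus an extra term from the clean/noisy probability mismatch that is absorbed into (ii).

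\emph{(ii) Clean to noisy.} The clean and noisy distributions share the marginal $\calP(X)$ and differ only in the conditional label law. For fixed $x$, the map $y\mapsto -M_f(x,y)\indicator[M_f(x,y)\ge\tau]$ is bounded on the finite label set, so it is Lipschitz in the discrete metric. Kantorovich--Rubinstein duality then gives a per-$x$ gap of $O(W_1(\calP_{\noisy}(\cdot\mid x),\calP(\cdot\mid x)))$, and integrating over $x$ yields $O(\delta_w)$.

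\emph{(iii) Hard to soft indicator.} The gap $|\indicator[m\ge\tau]-\widetilde\indicator[m\ge\tau]|$ is large only when $|m-\tau|\lesssim \texttt{temp}\log(1/\texttt{temp})$. Using the positive, locally bounded density $g$ near $\tau_\alpha$ from Proposition~\ref{proposition:quantile_gap}, the mass in this window is $\tilde O(\texttt{temp})$. The sigmoid tail outside the window contributes at most $O(\texttt{temp})$ as well, so this step costs $\tilde O(\texttt{temp})$.

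\emph{(iv) Population to empirical, population threshold to batch threshold.} Here we split into two pieces.
\begin{itemize}
\item \emph{Fixed threshold.} For a fixed threshold, the soft loss is a Lipschitz function of $M_f$ with constant $O(1+1/\texttt{temp})$; for hard thresholds we use the bounded-density argument of (iii) instead. The margin $M_f$ is a Lipschitz function of the softmax outputs. Talagrand's contraction lemma plus McDiarmid's inequality then give a uniform deviation of $O(\widehat{\mathfrak R}_n(\calF)+\sqrt{\log(1/\delta)/n})$, with logarithmic factors absorbed into $\tilde O$.
\item \emph{Threshold swap.} Replacing $\tau_\alpha(f)$ by $\widehat\tau^s_\alpha(f)$ changes the loss only through the indicator. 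Its derivative in $\tau$ is bounded, either via the density $g$ near $\tau_\alpha$ (hard) or via the sigmoid slope integrated against a bounded density (soft), so the risk is locally Lipschitz in the threshold. Proposition~\ref{proposition:quantile_gap} then gives $|\tau_\alpha-\widehat\tau^s_\alpha|\le\tilde O(1/\sqrt s)$ with probability $1-\delta$, and on the complement event the bounded loss contributes $O(\delta)$. Taking the expectation over $\calT_\alpha(f)$ gives $\tilde O(1/\sqrt s)$.
\end{itemize}

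The main obstacle is (iv) together with its interaction with the Rademacher term, because the threshold $\widehat\tau^s_\alpha(f)$ is itself data- and $f$-dependent. The argument therefore needs a bound uniform over thresholds. I would first try a covering over $\tau\in[-1,1]$ with a grid of mesh $1/\sqrt s$, combined with the local Lipschitz property from the density assumption; this costs only a $\log$ factor absorbed in $\tilde O$. A second subtle point is that the density assumption is stated only for the noisy CDF $G$, while the windows in (iii) concern clean margins. I would handle this by transferring the bound through (ii), or by applying all smoothing arguments on the noisy side before switching to the clean distribution. Summing (i)--(iv) gives the stated bound.
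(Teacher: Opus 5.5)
Your proposal is correct and follows essentially the same route as the paper. It uses the same chain: from the clean conditional risk to the noisy unnormalized hard-threshold risk (normalization costing $\alpha+\delta_w$, Kantorovich--Rubinstein duality in the discrete label metric costing $O(\delta_w)$), then to the sigmoid surrogate ($O(\texttt{temp})$ via the exponential tail of $|\indicator-\widetilde\indicator|$ and a density bound), then to the batch threshold (Lipschitz continuity in $\tau$ plus Proposition~\ref{proposition:quantile_gap}), and finally to the empirical risk via a Rademacher bound. Where you differ, you are more careful than the paper, which assumes a global bound $\sup_t g(t)<\infty$ not among the stated hypotheses, cites Mohri et al.\ as if $\widehat\tau^s_\alpha(f)$ were data-independent, and bounds $|\tau_\alpha(f)-\E[\widehat\tau^s_\alpha(f)]|$ rather than the expected absolute deviation; your local-window argument, covering over $\tau$, and treatment of the failure event of Proposition~\ref{proposition:quantile_gap} address precisely these points.
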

{\bf Remark 2.} Theorem~\ref{theorem:learning_bound} characterizes how the surrogate objective optimized by CMRM on noisy data relates to the clean retained-region margin risk, and clearly decomposes the generalization gap of CMRM into three terms reflecting its core components: quantile estimation error $\tilde O(1/\sqrt{s})$, function class complexity $\widehat {\mathfrak{R}}_n(\mathcal{F})$, and distribution shift $\delta_w$. 
These correspond to the use of batchwise quantiles, empirical-to-population approximation on noisy data, and the mismatch between noisy and clean label distributions, respectively. 
Together, they provide an assumption-light characterization of CMRM’s robustness under arbitrary label noise.

\section{Experiments and Results}
\label{sec:experiment_results}

\begin{table*}[!t]
  \centering
  \resizebox{\textwidth}{!}{%
  \begin{NiceTabular}{@{}ll*{8}{c}@{}}
    \toprule
    \multirow{2}{*}{\textbf{Dataset}} & \multirow{2}{*}{\textbf{Metric}} &
    \multicolumn{2}{c}{\textbf{CE}} &
    \multicolumn{2}{c}{\textbf{Focal}} &
    \multicolumn{2}{c}{\textbf{LDAM}} &
    \multicolumn{2}{c}{\textbf{GCE}} \\
    \cmidrule(lr){3-4}\cmidrule(lr){5-6}\cmidrule(lr){7-8}\cmidrule(lr){9-10}
    \multicolumn{2}{c}{} & Base & +CMRM & Base & +CMRM & Base & +CMRM & Base & +CMRM \\
    \midrule
    \multirow{2}{*}{CIFAR-100}
      & ACC (\%) $\uparrow$ & 65.16 & \textbf{66.32 ($+1.16$)} & 64.42 & \textbf{65.39 ($+0.97$)} & 59.63 & \textbf{61.12 ($+1.49$)} & 62.17 & \textbf{63.65 ($+1.48$)} \\
      & M.APSS $\downarrow$ & 6.67 & \textbf{6.52 ($-2.25\%$)} & 6.89 & \textbf{6.61 ($-4.06\%$)} & 17.85 & \textbf{17.67 ($-0.78\%$)} & 7.70 & \textbf{6.28 ($-18.44\%$)} \\
    \midrule
    \multirow{2}{*}{mini-ImageNet}
      & ACC (\%) $\uparrow$ & 57.42 & \textbf{59.41 ($+1.99$)} & 55.54 & \textbf{57.93 ($+2.39$)} & 56.60 & \textbf{56.62 ($+0.02$)} & 55.16 & \textbf{55.51 ($+0.35$)} \\
      & M.APSS $\downarrow$ & 7.40  & \textbf{7.04 ($-4.86\%$)} & 7.67 & \textbf{7.28 ($-5.08\%$)} & 13.07 & \textbf{12.88 ($-1.45\%$)} & 9.90 & \textbf{9.59 ($-3.13\%$)} \\
    \midrule
    \multirow{2}{*}{Food-101}
      & ACC (\%) $\uparrow$ & 56.21 & \textbf{58.48 ($+2.27$)} & 56.35 & \textbf{58.92 ($+2.57$)} & 55.49 & \textbf{56.42 ($+0.93$)} & 55.66 & \textbf{59.05 ($+3.39$)} \\
      & M.APSS $\downarrow$ & 7.93 & \textbf{6.96 ($-12.23\%$)} & 7.76  & \textbf{6.89 ($-11.21\%$)} & 11.94 & \textbf{11.53 ($-3.43\%$)} & 8.41 & \textbf{6.93 ($-20.44\%$)} \\
    \bottomrule
  \end{NiceTabular}%
  }
  \caption{\textbf{Top-1 accuracy (\%) and marginal average prediction set size (M.APSS $\downarrow$) on multi-class datasets corrupted by synthetic noise with noise rate $20\%$.}
  Each Base objective is paired with its +CMRM counterpart; the better value within each pair is in \textbf{bold}.
  Numbers in parentheses indicate the relative change (\%): $+$ denotes accuracy improvement, and $-$ denotes M.APSS reduction compared to the corresponding Base objective. 
  On average across all datasets and objectives, 
  CMRM improves accuracy by $1.58$ and reduces M.APSS by $7.28\%$.}
  \label{tab:results_multi}
\end{table*}

\begin{figure*}[!t]
    \centering
    \begin{minipage}[t]{0.24\linewidth}
    \centering
    \textbf{(a)} Loss
    \includegraphics[width=\linewidth]{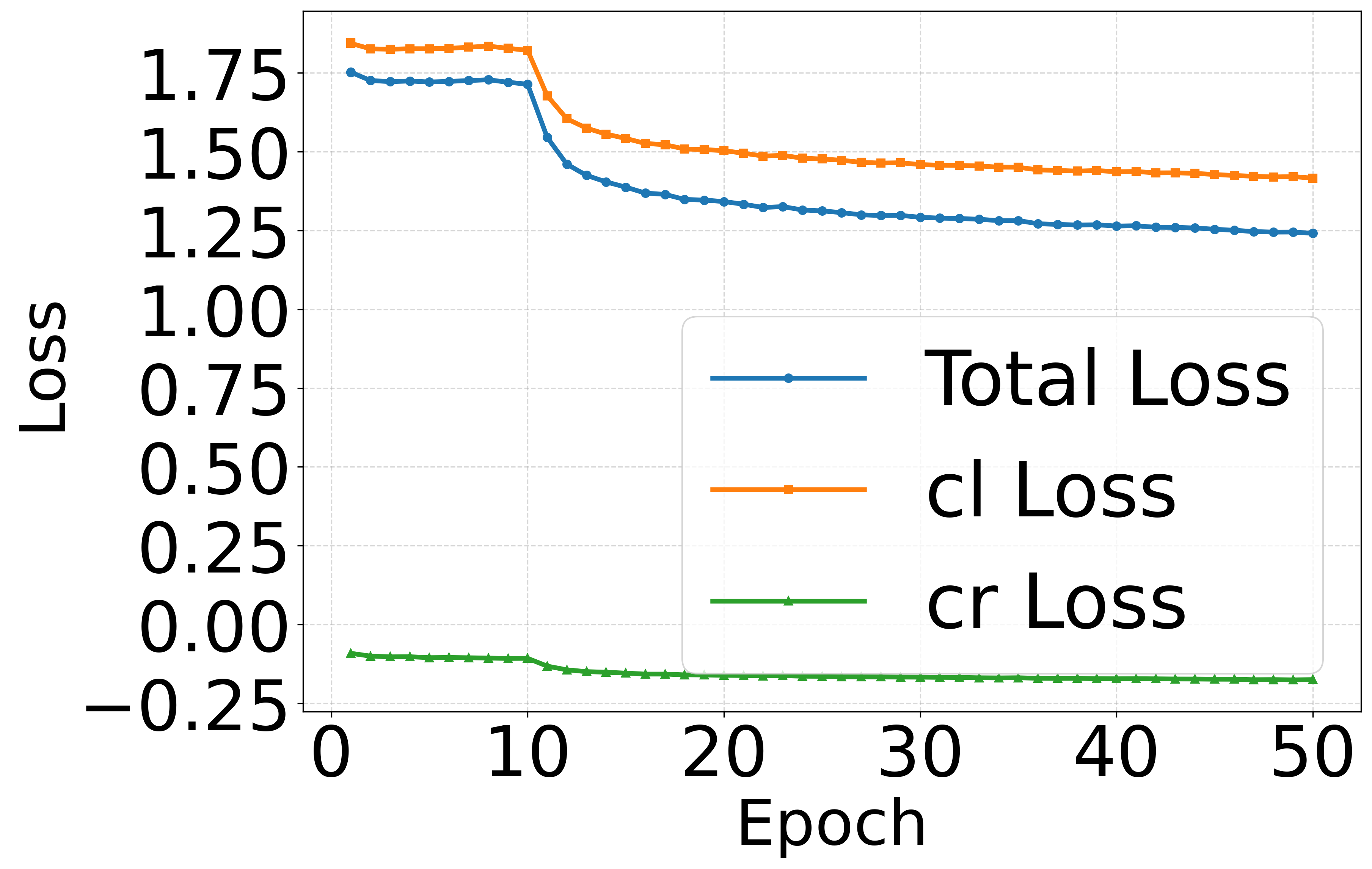}
    \end{minipage}
    \begin{minipage}[t]{0.24\linewidth}
    \centering
    \textbf{(b)} Filtered noise ratio
    \includegraphics[width=\linewidth]{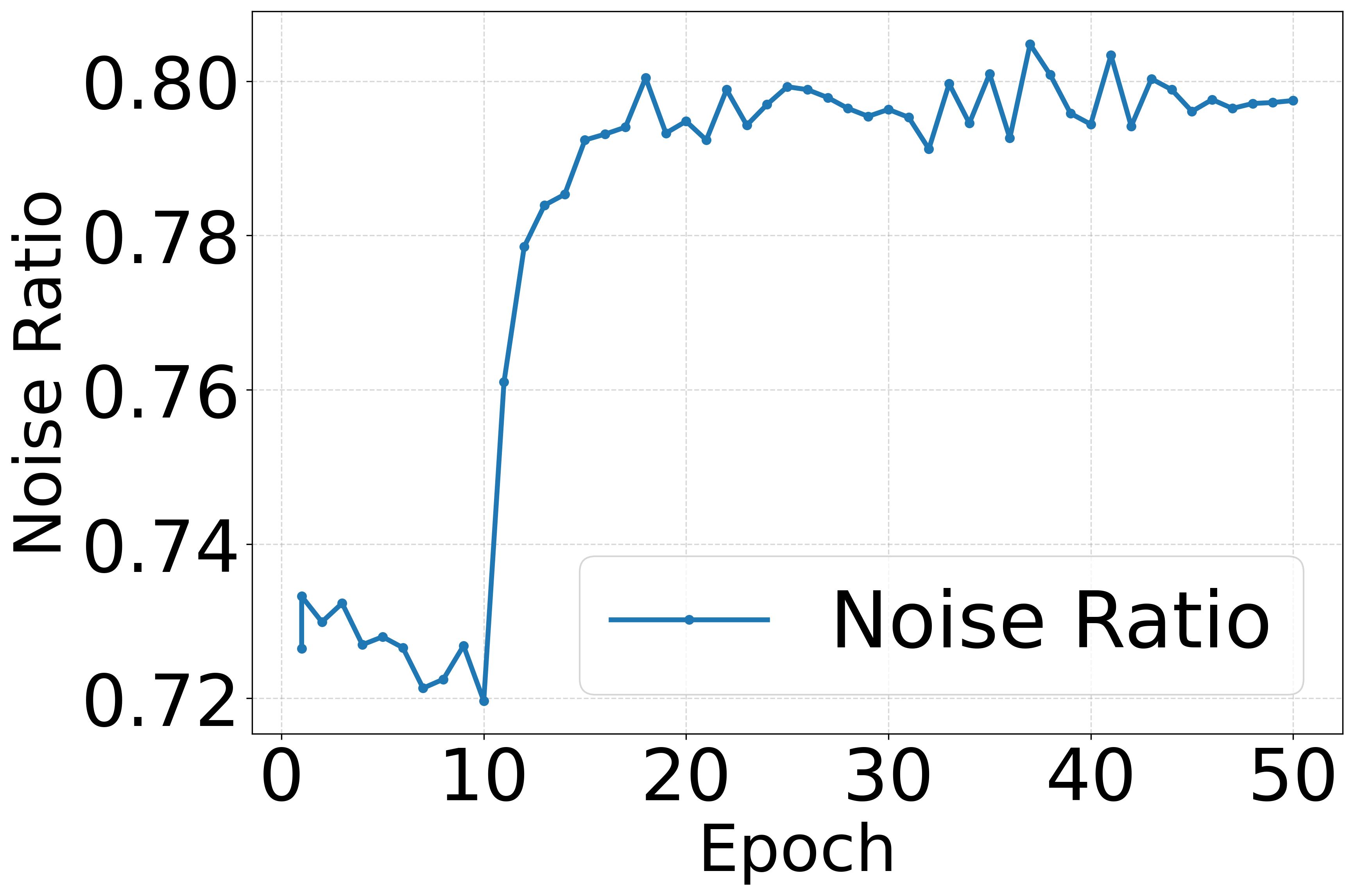}
    \end{minipage}
    \begin{minipage}[t]{0.24\linewidth}
    \centering
    \textbf{(c)} Sensitivity of $\alpha$
    \includegraphics[width=\linewidth]{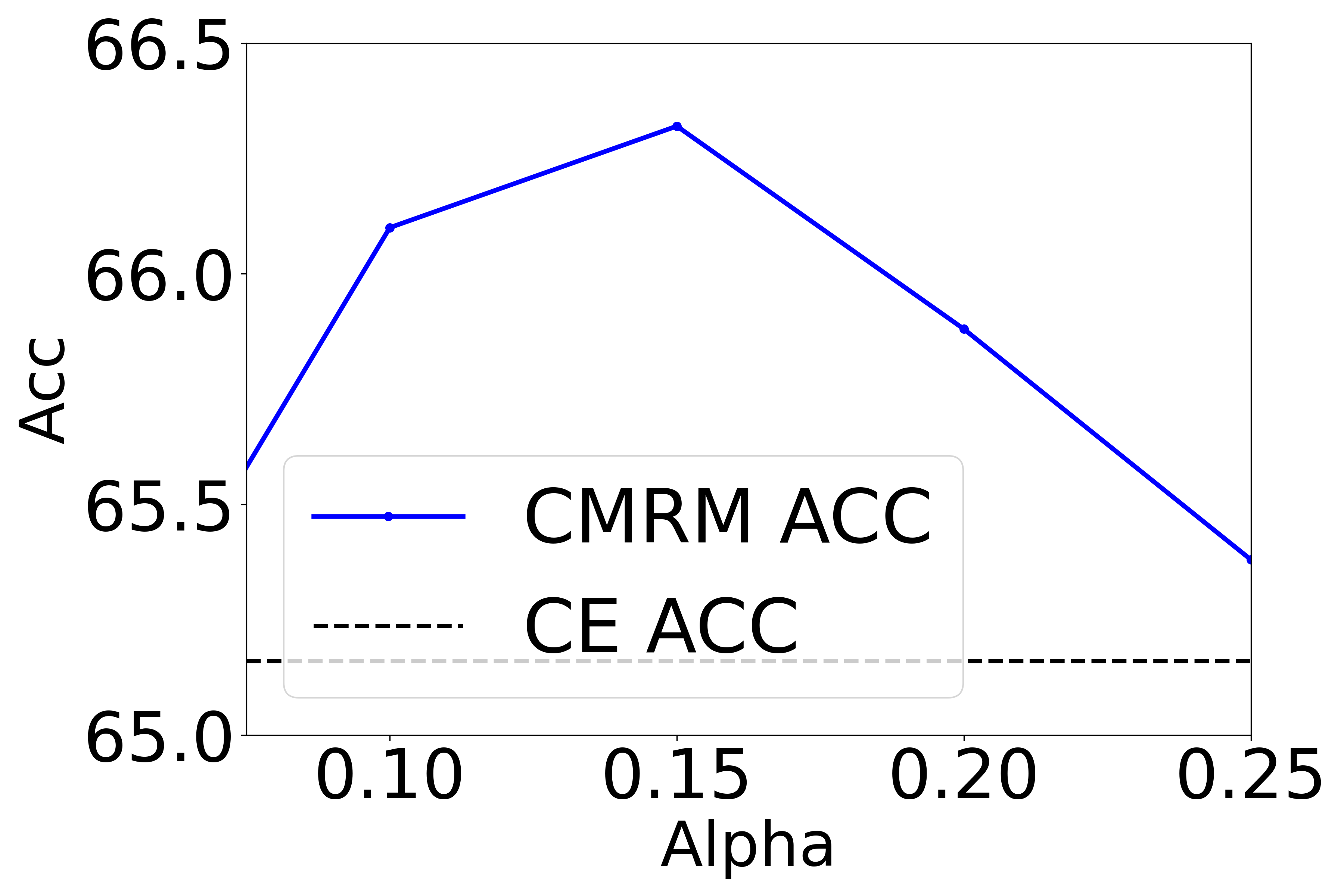}
    \end{minipage}
    \begin{minipage}[t]{0.24\linewidth}
    \centering
    \textbf{(d)} KDE of margin 
    \includegraphics[width=\linewidth]{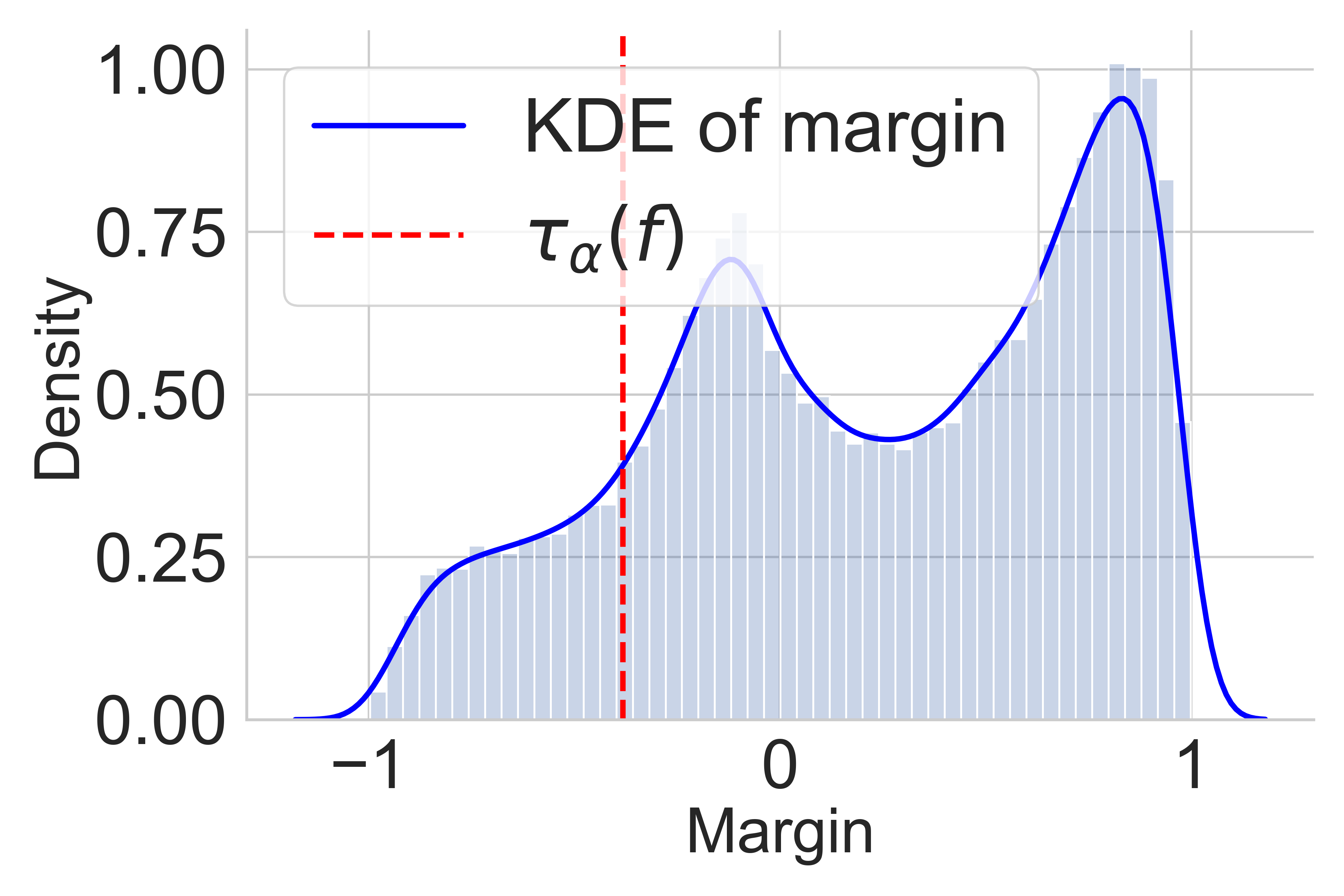}
    \end{minipage}
    \caption{
    \textbf{Justification experiments for multi-class classification } 
    on CIFAR-100 with $20\%$ synthetic label noise. 
    Subfigure \textbf{(a)} shows the training dynamics of total loss (Total), classification loss (cl), and CMRM loss (cr) over epochs. 
    CMRM exhibits stable and monotonic convergence alongside standard loss components.
    Subfigure \textbf{(b)} reports the ratio of noisy samples among those filtered (with soft weight $< 0.5$) out by CMRM ($\alpha = 0.15$) at each epoch, demonstrating that CMRM consistently suppresses noisy examples by excluding low-margin samples during training.
    Subfigure \textbf{(c)} examines the sensitivity of $\alpha$, showing that CMRM maintains higher accuracy than CE across a range of $\alpha$ values, indicating robustness to hyperparameter $\alpha$.
    Subfigure \textbf{(d)} depicts the kernel density estimate (KDE) of the margin distribution, with the vertical dashed line indicating the estimated $\tau_\alpha(f)$ with $\alpha = 0.15$. 
    The density curve is smooth and strictly positive around $\tau_\alpha(f)$, supporting the differentiability and positive-density assumption in Proposition~\ref{proposition:quantile_gap}.
    }
    \label{fig:multi_justification}
\end{figure*}

\subsection{Multi-class Classification Experiments}
\label{subsection:multiclass_classification_experiments}

\textbf{Datasets.}
We evaluate CMRM under both synthetic and real-world noisy supervision. 
For \emph{synthetic noise}, we conduct experiments on CIFAR-100~\citep{krizhevsky2009learning}, mini-ImageNet~\citep{vinyals2016matching}, and Food-101~\citep{bossard2014food}. 
CIFAR-100 is corrupted with class-conditional label noise, where each label is randomly replaced with another label within the same coarse superclass, following the protocol of~\citep{yao2023latent}.
For mini-ImageNet and Food-101, we apply symmetric label flips following~\citep{jiang2020beyond}.
We vary the noise rate from $\{0\%, 5\%, 10\%, 20\%, 30\%, 40\%\}$, where $0\%$ corresponds to the clean-label setting; additional implementation details are provided in Appendix~\ref{ssup:sec:multi_experiment_setup}.
For \emph{real-world noise}, we evaluate on CIFAR-10N and its four variants, as well as CIFAR-100N~\citep{wei2022learning}, 
whose labels were collected through human annotation. 
The noise rates of these datasets range from $9.03\%$ to $40.20\%$, covering a broad spectrum of labeling quality\footnote{Detailed statistics and descriptions of CIFAR-10N and CIFAR-100N are summarized at \url{http://noisylabels.com}.}.

\begin{table}[!t]
  \centering
  \resizebox{\columnwidth}{!}{%
    \begin{NiceTabular}{@{}l|cccc@{}}
      \toprule
      \multirow{2}{*}{\textbf{Method}} & \multicolumn{2}{c}{\textbf{CIFAR-10N (worst)}} & \multicolumn{2}{c}{\textbf{CIFAR-100N}}  \\
      \cmidrule(lr){2-3} \cmidrule(lr){4-5} 
      & ACC(\%) & M.APSS  & ACC(\%)  & M.APSS \\
      \midrule
      NI-ERM
      & 95.71 & 0.93
      & 83.17 & 1.49 \\
      NI-ERM+CMRM 
      & \textbf{97.19} & \textbf{0.91}
      & \textbf{83.95} & \textbf{1.29} \\
      & ($+1.48$) & ($-2.15\%$) 
      & ($+0.78$) & ($-13.42\%$) \\
      \bottomrule
    \end{NiceTabular}%
  }
  \caption{\textbf{Top-1 accuracy (\%) and marginal average prediction set size (M.APSS $\downarrow$) on CIFAR-10N and CIFAR-100N corrupted by human annotation noise.} 
  Numbers in parentheses indicate the relative change: $+$ denotes accuracy improvement and $-$ denotes M.APSS reduction. 
  CMRM consistently improves accuracy and reduces uncertainty across CIFAR-N variants, with the largest gains observed on CIFAR-10N Worst and CIFAR-100N. 
  Full results on all CIFAR-10N variants (Aggre, Rand1–3, Worst) and CIFAR-100N are provided in Appendix~\ref{ssup:sec:experiment}.
  }
  \label{tab:results_multi_cifarn}
\end{table}

\textbf{Baselines and ML Models.}
For synthetic noise experiments, we compare CMRM against four representative families of methods: 
\textbf{(i)} cross-entropy (CE), the standard objective; 
\textbf{(ii)} Focal loss~\citep{lin2017focal}, a standard robustness-oriented variant; 
\textbf{(iii)} LDAM~\citep{cao2019learning}, a margin-based objective; and 
\textbf{(iv)} GCE~\citep{zhang2018generalized}, a general-purpose noise-robust loss that remains a common baseline in the recent learning from noisy labels literature \citep{englesson2024robust,nguyen2024noisy}. 
All methods use a ResNet-20~\citep{he2016deep} backbone with standard data augmentation.  
For real-world noise experiments, we benchmark CMRM against NI-ERM~\citep{zhu2024label}, a recent state-of-the-art approach that achieves strong performance on CIFAR-10N and CIFAR-100N by training a linear classifier on frozen DINOv2 \citep{oquab2023dinov2} features.
All methods share the same protocol.
For CMRM, We set $\texttt{temp}=1.0$ in all experiments and select $\lambda \in \{0.05, \cdots,0.25\}$ and $\alpha \in \{0.05, \cdots,0.25\}$ via grid search; 
See Appendix \ref{ssup:sec:multi_experiment_setup} for full details.

\textbf{Evaluation Metrics.}
We evaluate models using marginal Top-1 accuracy and marginal average prediction set size (M.APSS).
To assess predictive uncertainty, we adopt CP and define prediction sets and average prediction set size (APSS)~\citep{romano2020classification,angelopoulos2022image}.
For each input, CP outputs a prediction set containing the true label with high probability, controlled by a target coverage rate of $0.9$.
For multi-class settings, M.APSS denotes the marginal variant of APSS, i.e., averaged uniformly over all test examples. It also complements top-1 accuracy by capturing how sharply the model distinguishes plausible labels under noisy supervision. 
We also report class-conditional variants (PC APSS, NC APSS) for binary classification.


\begin{table*}[!t]
\centering
\setlength{\tabcolsep}{4pt}
\resizebox{\textwidth}{!}{%
\begin{NiceTabular}{@{} l *{8}{c}@{}}
\toprule
\textbf{Method} & \multicolumn{8}{c}{\textbf{Evaluation Metric}} \\
\cmidrule(lr){2-9}
& AUROC ($\uparrow$) & AUPRC ($\uparrow$) & FNR ($\downarrow$) & FPR ($\downarrow$)
  & ACC ($\uparrow$) & M.APSS ($\downarrow$) & PC APSS ($\downarrow$) & NC APSS ($\downarrow$) \\
\midrule
LR              & 0.784 & 0.885 & \textbf{0.073} & 0.571 & 0.802 & 1.223 & 1.154 & 1.432 \\
LR + CMRM        & \textbf{0.852 ($+0.068$)} & \textbf{0.925 ($+0.04$)} & 0.082 ($+0.009$) & \textbf{0.422 ($-0.149$)} & \textbf{0.833 ($0.031$)} & \textbf{1.209 ($-1.15\%$)} & \textbf{1.109 ($-3.9\%$)} & \textbf{1.308 ($-8.66\%$)} \\
Focal           & 0.809 & 0.890 & 0.136 & 0.388 & 0.801 & 1.257 & 1.224 & 1.356 \\
Focal + CMRM     & \textbf{0.872 ($+0.063$)} & \textbf{0.942 ($+0.052$)} & \textbf{0.128 ($-0.008$)} & \textbf{0.324 ($-0.064$)} & \textbf{0.823 ($0.022$)} & \textbf{1.221 $-2.87\%$} & \textbf{1.148 $-6.21\%$} & \textbf{1.295 $-4.5\%$} \\
SVM             & 0.808 & 0.925 & \textbf{0.029} & 0.807 & 0.776 & 1.276 & 1.370 & 1.512 \\
SVM + CMRM       & \textbf{0.847 ($+3.9\%$)} & \textbf{0.937 ($+1.2\%$)} & 0.048 ($+1.9\%$) & \textbf{0.585 ($-22.2\%$)} & \textbf{0.817 ($+4.1\%$)} & \textbf{1.199 ($-6.03\%$)} & \textbf{1.322 ($-3.5\%$)} & \textbf{1.343 ($-11.17\%$)} \\
GCE             & 0.819 & 0.904 & \textbf{0.119} & 0.424 & \textbf{0.804} & 1.286 & \textbf{1.176} & 1.396 \\
GCE + CMRM      & \textbf{0.846 ($+0.027$)} & \textbf{0.928 ($+0.024$)} & 0.172 ($+0.053$) & \textbf{0.286 ($-0.138$)} & 0.800 ($-0.004$) & \textbf{1.273 ($-1.01\%$)} & 1.207 ($+2.63\%$) & \textbf{1.340 ($-4.01\%$)} \\
\bottomrule
\end{NiceTabular}
}%
\caption{\textbf{Results for binary classification on Adult} with $20\%$ label noise. 
We report ranking (AUROC, AUPRC), error rates (FNR, FPR), accuracy (ACC), and uncertainty (M.APSS, PC APSS, NC APSS). 
$\uparrow$ indicates higher is better; $\downarrow$ lower is better. 
Best results for each base method (LR, Focal, SVM, GCE) are in \textbf{bold}. 
Numbers in parentheses indicate absolute and relative changes (\%) of CMRM, where $+$ denotes increase and $-$ denotes decrease. 
On average, CMRM improves AUROC by $0.049$, AUPRC by $0.032$, and ACC by $0.023$, while reducing FPR by $0.143$ and slightly increasing FNR by $0.018$. 
For uncertainty-aware metrics, CMRM reduces M.APSS, PC APSS, and NC APSS by $2.77\%$, $2.75\%$, and $7.09\%$ on average, respectively. 
Results on Email and Credit show similar trends and are provided in Appendix~\ref{supp:subsec:binary_results}.
}
\label{tab:binary_main}
\end{table*}

\textbf{CMRM improves accuracy and reduces uncertainty under different types of noise.}
Table \ref{tab:results_multi} reports results on CIFAR-100, mini-ImageNet, and Food-101 corrupted by synthetic label noise. 
For each objective (CE, Focal, LDAM, GCE), we compare the Base model with its +CMRM counterpart. 
On average across all datasets and objectives, CMRM improves accuracy by $1.58$ and reduces M.APSS by $7.28\%$. 
The improvements are most pronounced for CE, Focal, and GCE, with accuracy gains of up to $3.39$ and substantial uncertainty reduction. 
Even when combined with LDAM, which already encourages margin separation, CMRM consistently yields additional accuracy improvements without increasing uncertainty.
Table \ref{tab:results_multi_cifarn} summarizes results on CIFAR-10N and CIFAR-100N with human-annotated label noise. 
CMRM consistently outperforms NI-ERM across all CIFAR-N variants, achieving the largest gains on the most challenging settings (CIFAR-10N Worst and CIFAR-100N). 
Complete results on all CIFAR-10N variants (Aggre, Rand1–3, Worst) are provided in Appendix~\ref{ssup:sec:experiment}. 
These findings clearly demonstrate that CMRM improves accuracy and reduces uncertainty under both synthetic and real-world noisy supervision.

\textbf{CMRM loss convergence results.}
Figure~\ref{fig:multi_justification}(a) shows the training dynamics of the classification loss and the CMRM regularization loss. 
Both components decrease steadily and stabilize as training progresses, indicating smooth joint optimization. 
The CMRM term integrates with standard objectives and does not introduce instability or slow down of convergence, demonstrating that CMRM can be efficiently optimized.

\textbf{CMRM filters out noisy samples during training.}
Figure~\ref{fig:multi_justification}(b) shows the fraction of noisy samples among those excluded by CMRM (with soft weight $< 0.5$) at each epoch. 
This proportion rapidly increases during the early training phase and stabilizes above $78\%$, indicating that CMRM consistently identifies and filters out mislabeled examples via its margin-based thresholding mechanism.

\textbf{CMRM is robust to the choice of hyperparameter $\alpha$.}
Figure~\ref{fig:multi_justification}(c) examines the sensitivity of CMRM to the hyperparameter $\alpha$.
Across a range of $\alpha$ values, CMRM consistently achieves higher accuracy than CE, indicating that its performance is robust to the choice of $\alpha$ and does not rely on careful hyperparameter tuning.
Notably, CMRM performs well even when $\alpha$ does not match the true noise rate, suggesting that exact noise rate knowledge is unnecessary.
In practice, $\alpha \in [0.1, 0.2]$ consistently yields strong results across all settings tested.

\textbf{Assumptions in Proposition~\ref{proposition:quantile_gap} are empirically valid.}
Figure~\ref{fig:multi_justification}(d) presents the kernel density estimate (KDE) of the margin distribution, with the vertical dashed line indicating the estimated $\tau_{\alpha}(f)$.
The density curve is smooth and strictly positive in the neighborhood of $\tau_{\alpha}(f)$, supporting the differentiability and positive-density assumption in Proposition~\ref{proposition:quantile_gap}.

\textbf{CMRM incurs no penalty on clean labels.}
Notably, CMRM also improves or matches accuracy at 0\% noise across all objectives (Table~\ref{tab:results_multi_noise_rates} in Appendix~\ref{ssup:sec:bianry_experiment_setup}), confirming that the regularizer incurs no penalty even when labels are clean.

\subsection{Binary Classification Experiments}
\label{subsec:binary_results}

\textbf{Experiment Setup.}
We also evaluate the binary variant of CMRM on three datasets: 
Email~\citep{spambase_94}, Credit~\citep{credit_approval_27}, and Adult~\citep{adult_2}. 
To simulate label noise, we randomly flip $20\%$ of the training labels while keeping the test labels clean.
Additional implementation details are provided in Appendix~\ref{supp:subsec:binary_setup}.
Baselines include logistic regression (LR), focal loss, support vector machines (SVM) with hinge loss as a margin-based method, and GCE. 
LR, Focal, and GCE models use a two-layer MLP, 
while SVM employs a linear kernel with default regularization.
We report AUC-ROC and PR-AUC to assess ranking quality, FPR and FNR to capture class-specific error tendencies, and Accuracy as a general indicator. 
We also measure the predictive uncertainty using APSS, including marginal (M.APSS), positive-class (PC APSS), and negative-class (NC APSS) variants.
Larger values of AUC-ROC, PR-AUC, and Accuracy indicate better performance ($\uparrow$), whereas smaller values of FPR, FNR, and APSS metrics (M.APSS, PC APSS, NC APSS) are preferred ($\downarrow$).


\textbf{CMRM improves robustness for binary classification.}
Table~\ref{tab:binary_main} reports results on the Adult dataset with $20\%$ label noise.
CMRM consistently improves ranking performance, with AUROC and AUPRC increasing by $0.049$ and $0.032$ on average, respectively, indicating that it enables models to better separate classes under noisy supervision.
These gains are accompanied by improvements in classification metrics, with Accuracy increasing by $0.023$ on average and FPR decreasing by $0.143$, at the cost of a modest increase in FNR ($+0.018$).
Finally, CMRM reduces predictive uncertainty, as reflected by lower M.APSS, PC APSS, and NC APSS (average reduction of $2.77\%$, $2.75\%$, and $7.1\%$, respectively), indicating sharper and more discriminative predictions.
Overall, these improvements demonstrate that CMRM enhances robustness in binary classification under label noise.
Similar trends are observed on the Email and Credit datasets (see Appendix~\ref{supp:subsec:binary_results} for complete results).

\begin{figure}[!t]
    \centering
    \begin{minipage}[t]{0.48\linewidth}
    \centering
    \textbf{(a)} Loss
    \includegraphics[width=\linewidth]{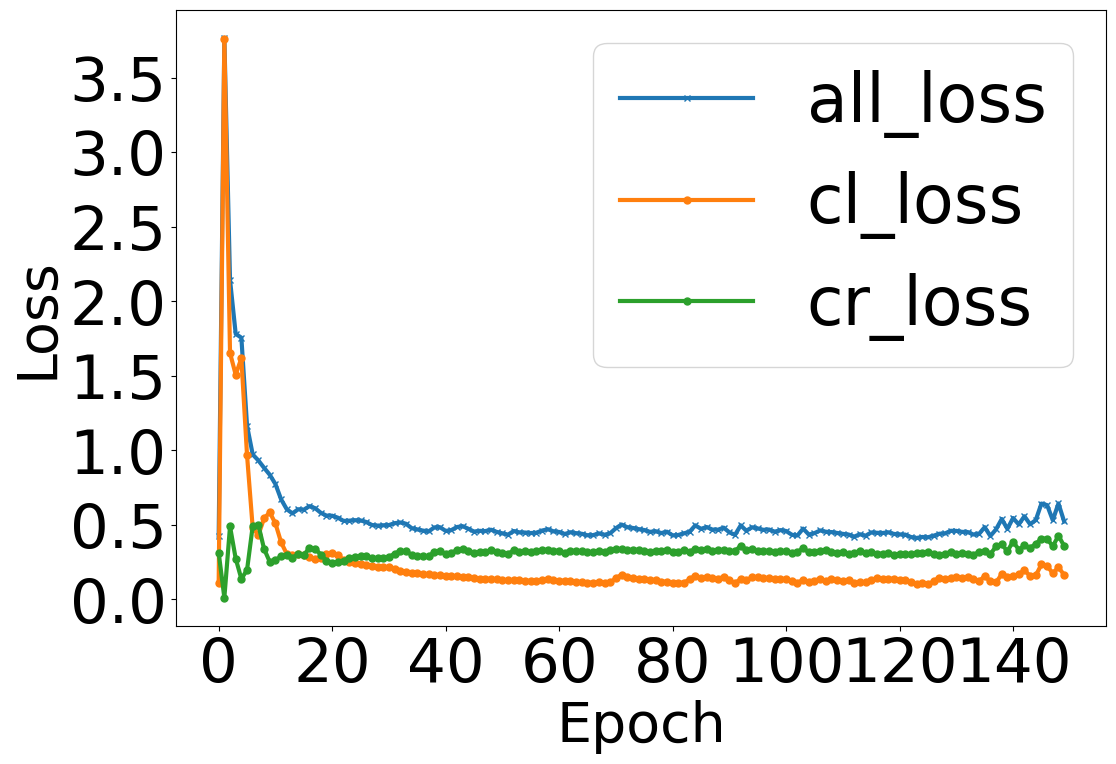}
    \end{minipage}
    \begin{minipage}[t]{0.48\linewidth}
    \centering
    \textbf{(b)} Two thresholds
    \includegraphics[width=\linewidth]{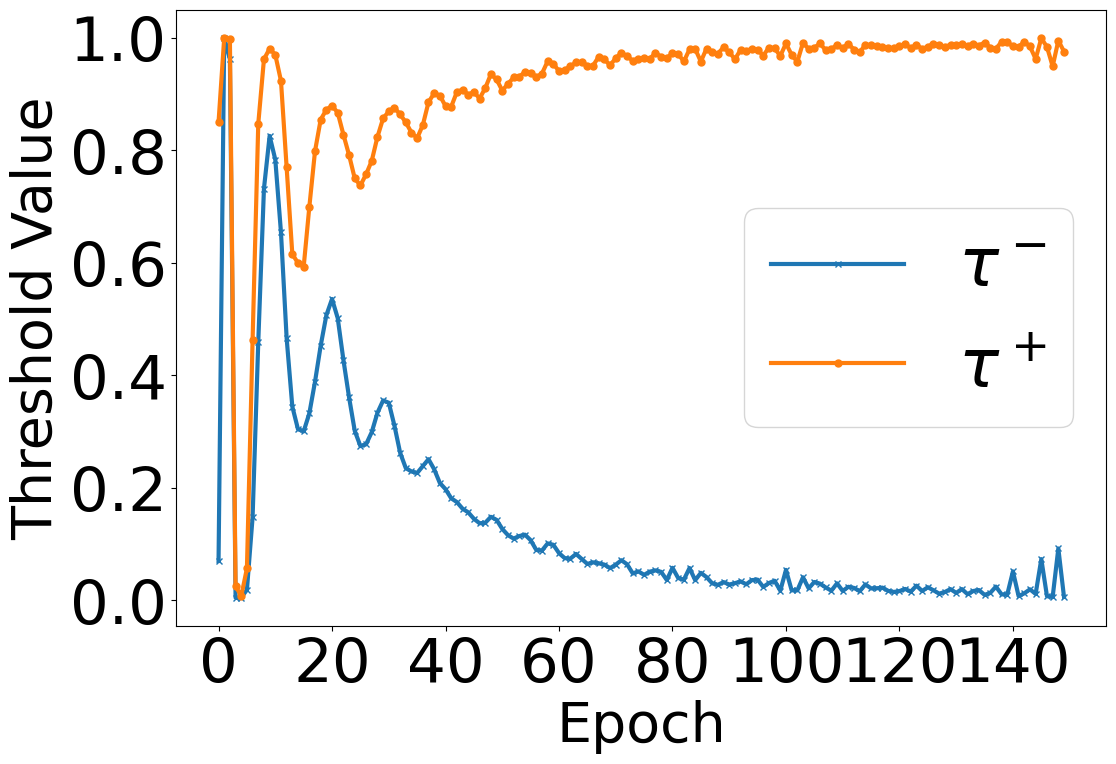}
    \end{minipage}
    \caption{
    \textbf{
    Training dynamics of LR+CMRM for binary classification  
    } on the Email dataset.
    Subfigure \textbf{(a)} Training dynamics of total loss (all loss), classification loss (cl loss), and CMRM loss (cr loss) over epochs. 
    CMRM exhibits stable and monotonic convergence alongside standard loss components.
    Subfigure \textbf{(b)} $\tau^-$ (negative class threshold) and $\tau^+$ (positive class threshold) of LR+CMRM during training.
    The separation between the thresholds increases, indicating that CMRM actively maximizes the margin between positive and negative classes.
    }
    \label{fig:binary_justification}
\end{figure}

\begin{figure}[!t]
    \centering
    \begin{minipage}[t]{0.48\linewidth}
    \centering
    \textbf{(a)} LR
    \includegraphics[width=\linewidth]{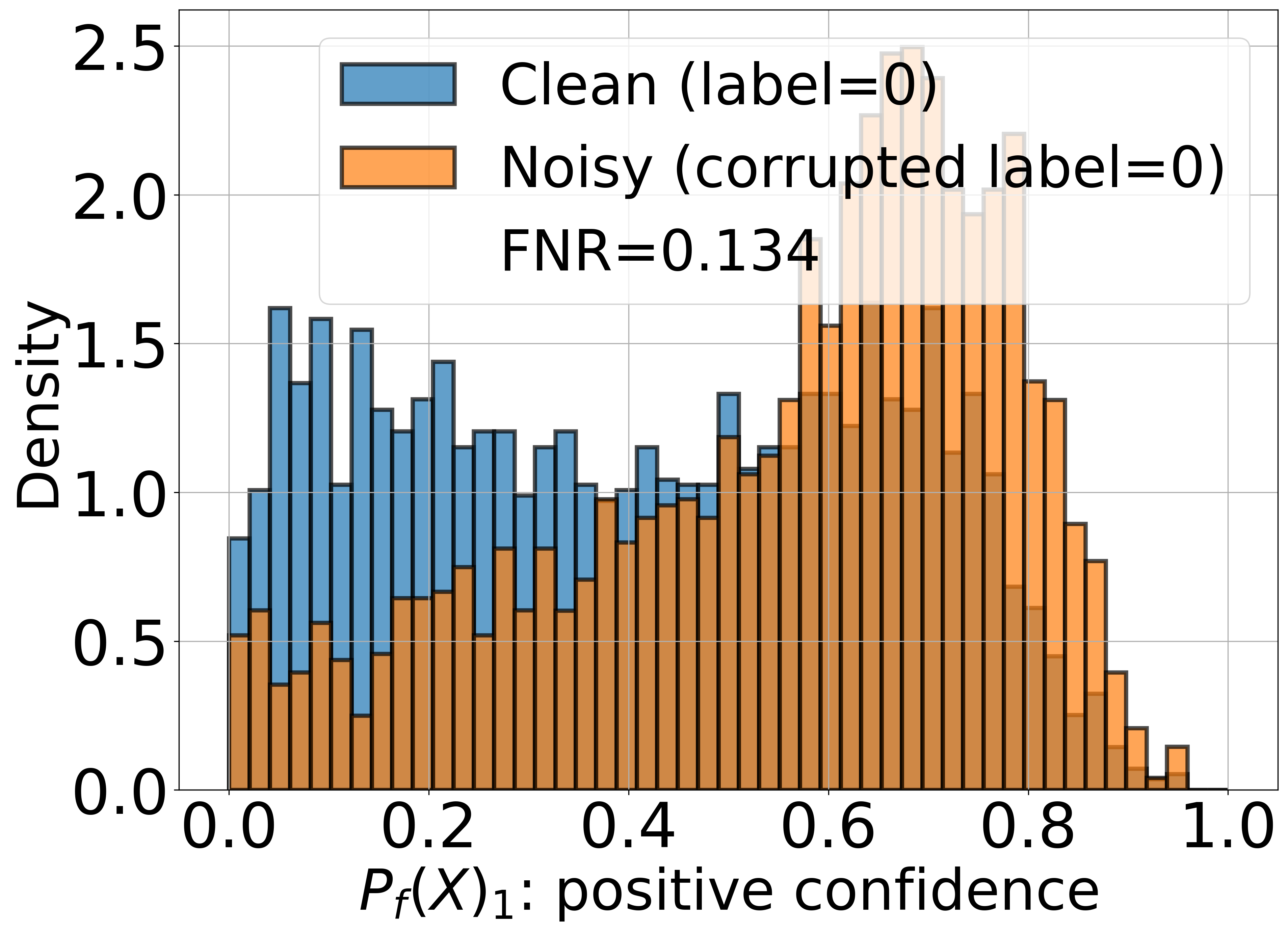}
    \\
    \includegraphics[width=\linewidth]{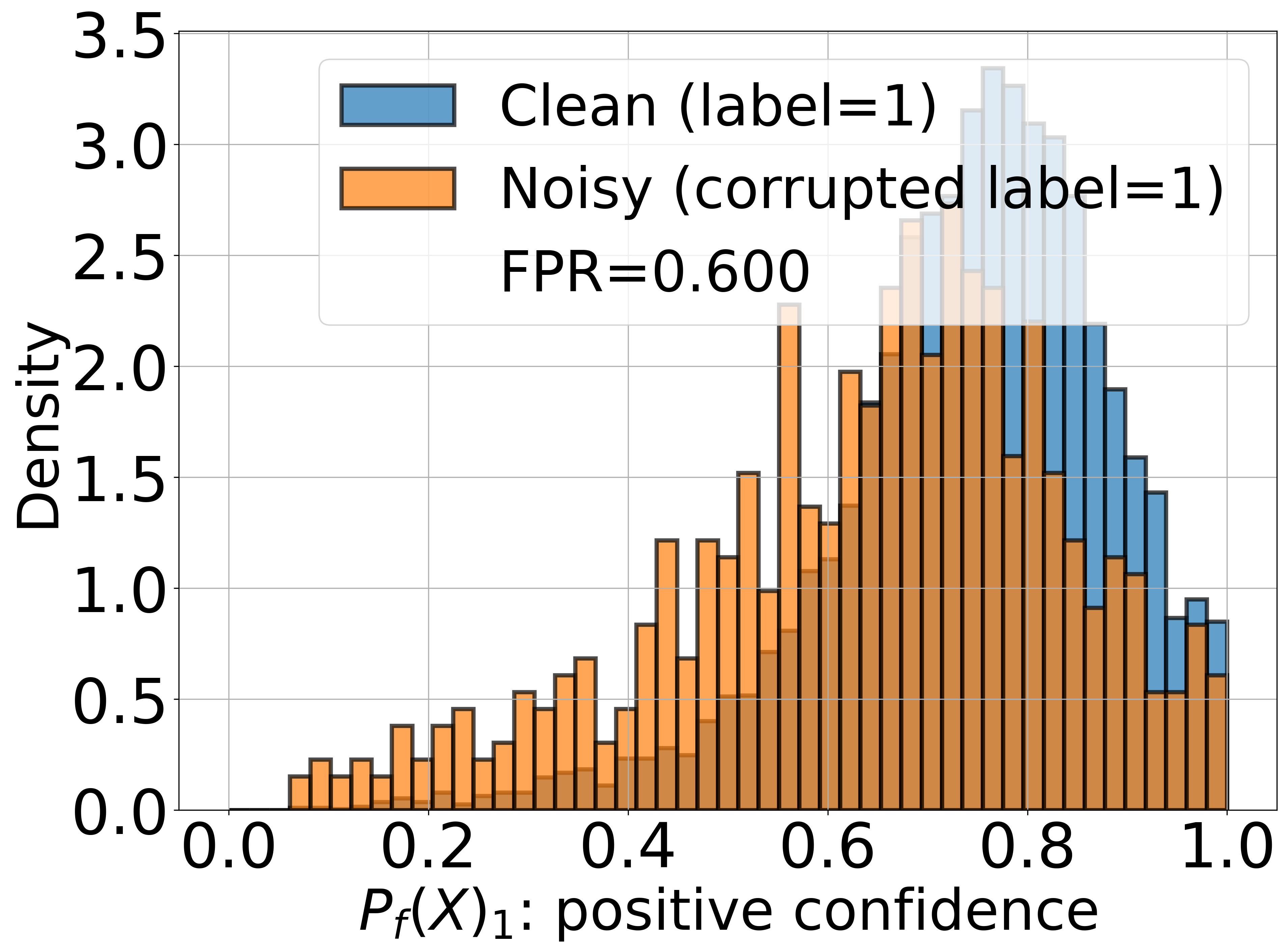}
    \end{minipage}
    \begin{minipage}[t]{0.48\linewidth}
    \centering
    \textbf{(b)} LR+CMRM
    \includegraphics[width=\linewidth]{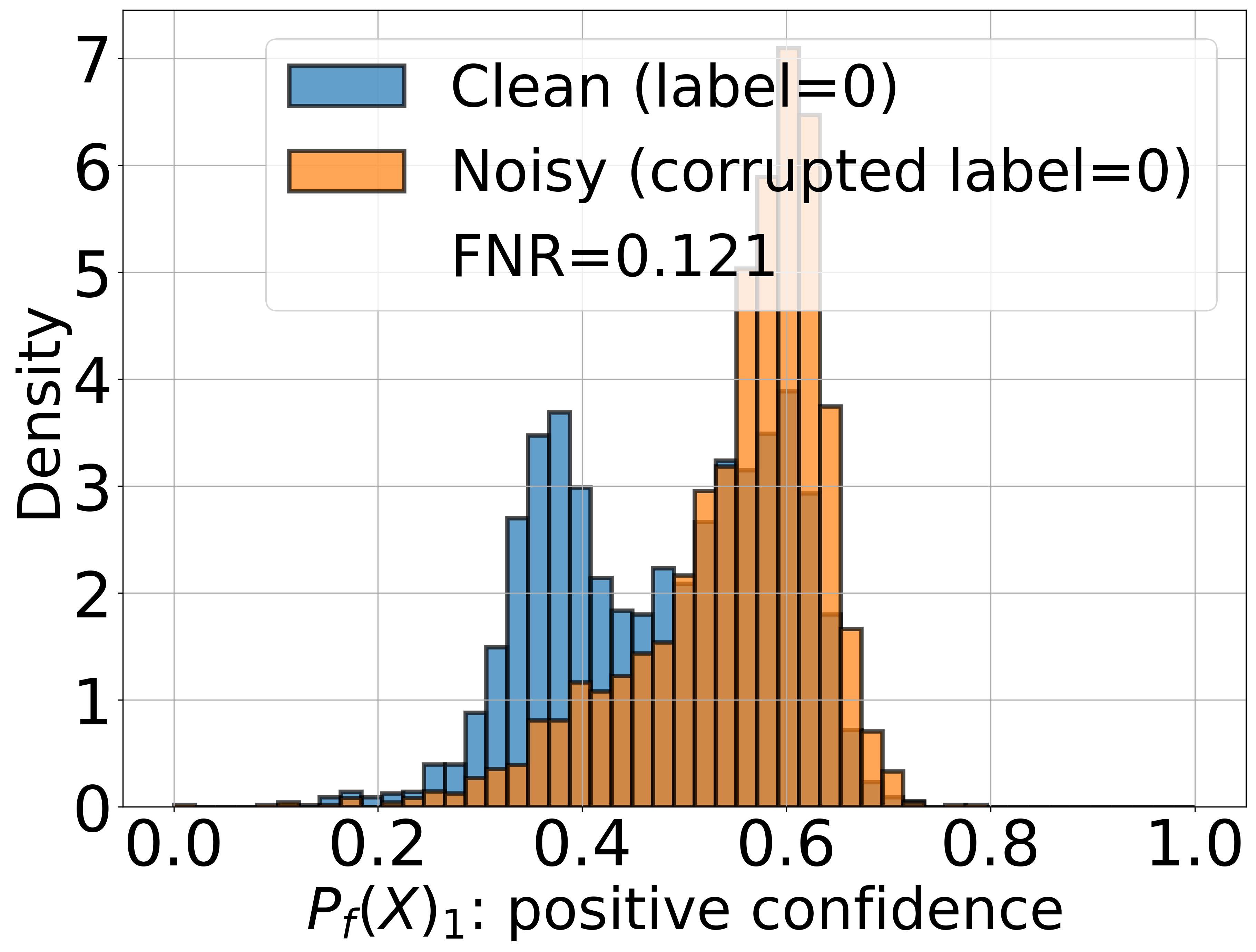}
    \\
    \includegraphics[width=\linewidth]{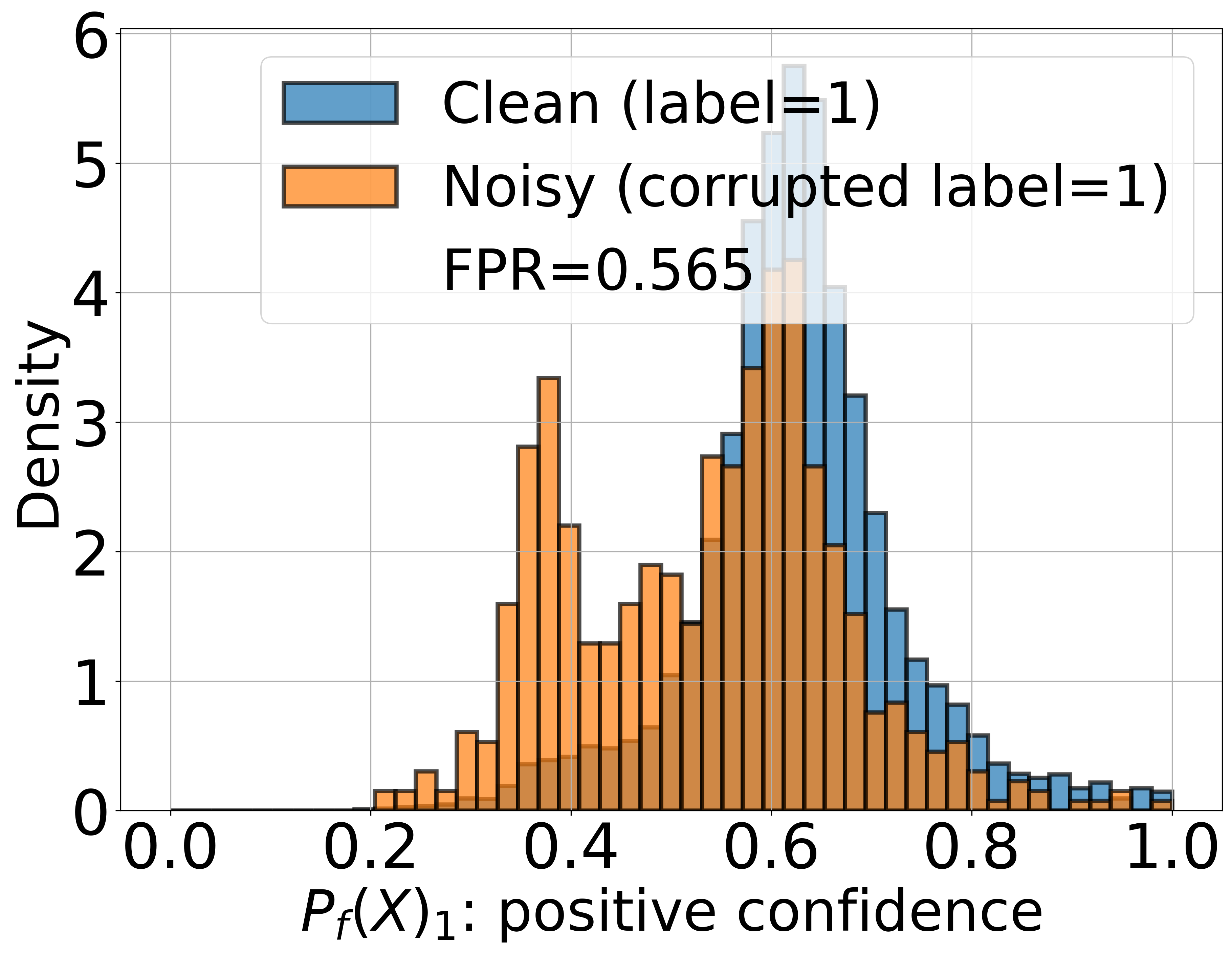}
    \end{minipage}
    \caption{
    \textbf{
    Histograms of positive confidence distributions for clean (blue) and noisy (orange) samples} on the Credit dataset with $20\%$ label noise. 
    The top and bottom rows correspond to samples with observed labels $\widetilde Y=0$ (negative) and $\widetilde Y=1$ (positive), respectively. Distributions are obtained using LR (left) and LR+CMRM (right). 
    CMRM induces a clearer separation between clean and noisy confidence distributions for both classes.
    }
    \label{fig:binary_score}
\end{figure}

\textbf{CMRM optimization dynamics in binary classification.}
Figure \ref{fig:binary_justification} examines the training behavior of LR + CMRM on the Email dataset with $20\%$ label noise.
Subfigure (a) shows that both the classification and CMRM regularization losses decrease steadily and stabilize, indicating that the joint objective can be optimized smoothly on binary classification data. 
Subfigure (b) tracks the evolution of the class-conditional thresholds $\tau^+$ and $\tau^-$ during training. 
$\tau^-$ steadily increases while $\tau^+$ decreases, leading to a widening gap between them. 
This growing separation reflects that CMRM could actively enlarge the margin between positive and negative classes.

\textbf{CMRM separates clean and noisy supervision.}
Figure~\ref{fig:binary_score} shows the effect of CMRM on positive-class confidence distributions under label noise.
With LR (Figure~\ref{fig:binary_score}a), the distributions of clean and noisy samples overlap substantially for both observed positive and negative label groups, indicating limited ability to distinguish reliable labels from corrupted labels.
In contrast, LR+CMRM (Figure~\ref{fig:binary_score}b) yields a clearer separation: clean samples concentrate in high-confidence regions for their true class, while noisy samples are pushed toward lower-confidence regions, improving their distinguishability in the confidence space.

\section{Conclusion}

We propose CMRM, a simple envelope framework for robust learning under label noise, 
requiring only mild regularity of the margin distribution rather than noise models, clean data or auxiliary resources. 
CMRM uses a batch-wise conformal quantile on confidence margins to focus training on reliable samples while suppressing likely corrupted ones. 
Theoretically, we establish a learning bound under arbitrary label noise. 
Empirically, CMRM integrates smoothly into standard pipelines and consistently improves accuracy and robustness across binary and multi-class benchmarks. 
As a single regularization term with no architectural cost, CMRM is a natural default when training with noisy labels.

\section*{Acknowledgements}

The authors gratefully acknowledge the in part support by the USDA-NIFA funded AgAID Institute
award 2021-67021-35344, and NSF grants CNS-2312125, IIS-2443828, DUE-2519063. 
The views expressed are those of
the authors and do not reflect the official policy or position of the USDA-NIFA and NSF.

\bibliographystyle{plainnat}
\bibliography{ref}

\begin{thebibliography}{74}
\providecommand{\natexlab}[1]{#1}
\providecommand{\url}[1]{\texttt{#1}}
\expandafter\ifx\csname urlstyle\endcsname\relax
  \providecommand{\doi}[1]{doi: #1}\else
  \providecommand{\doi}{doi: \begingroup \urlstyle{rm}\Url}\fi

\bibitem[Angelopoulos and Bates(2021)]{angelopoulos2021gentle}
Anastasios~N Angelopoulos and Stephen Bates.
\newblock A gentle introduction to conformal prediction and distribution-free uncertainty quantification.
\newblock \emph{arXiv preprint arXiv:2107.07511}, 2021.

\bibitem[Angelopoulos et~al.(2022)Angelopoulos, Kohli, Bates, Jordan, Malik, Alshaabi, Upadhyayula, and Romano]{angelopoulos2022image}
Anastasios~N Angelopoulos, Amit~Pal Kohli, Stephen Bates, Michael Jordan, Jitendra Malik, Thayer Alshaabi, Srigokul Upadhyayula, and Yaniv Romano.
\newblock Image-to-image regression with distribution-free uncertainty quantification and applications in imaging.
\newblock In \emph{International Conference on Machine Learning}, pages 717--730. PMLR, 2022.

\bibitem[Arazo et~al.(2019)Arazo, Ortego, Albert, O’Connor, and McGuinness]{arazo2019unsupervised}
Eric Arazo, Diego Ortego, Paul Albert, Noel O’Connor, and Kevin McGuinness.
\newblock Unsupervised label noise modeling and loss correction.
\newblock In \emph{International conference on machine learning}, pages 312--321. PMLR, 2019.

\bibitem[Arjovsky et~al.(2017)Arjovsky, Chintala, and Bottou]{arjovsky2017wasserstein}
Martin Arjovsky, Soumith Chintala, and L{\'e}on Bottou.
\newblock Wasserstein generative adversarial networks.
\newblock In \emph{International conference on machine learning}, pages 214--223. PMLR, 2017.

\bibitem[Arpit et~al.(2017)Arpit, Jastrzebski, Ballas, Krueger, Bengio, Kanwal, Maharaj, Fischer, Courville, Bengio, et~al.]{arpit2017closer}
Devansh Arpit, Stanislaw Jastrzebski, Nicolas Ballas, David Krueger, Emmanuel Bengio, Maxinder~S Kanwal, Tegan Maharaj, Asja Fischer, Aaron Courville, Yoshua Bengio, et~al.
\newblock A closer look at memorization in deep networks.
\newblock In \emph{International Conference on Machine Learning}, pages 233--242. PMLR, 2017.

\bibitem[Bartlett and Mendelson(2002)]{bartlett2002rademacher}
Peter~L Bartlett and Shahar Mendelson.
\newblock Rademacher and gaussian complexities: Risk bounds and structural results.
\newblock \emph{Journal of Machine Learning Research}, 3:\penalty0 463--482, 2002.

\bibitem[Becker and Kohavi(1996)]{adult_2}
Barry Becker and Ronny Kohavi.
\newblock {Adult}.
\newblock UCI Machine Learning Repository, 1996.
\newblock {DOI}: https://doi.org/10.24432/C5XW20.

\bibitem[Bossard et~al.(2014)Bossard, Guillaumin, and Van~Gool]{bossard2014food}
Lukas Bossard, Matthieu Guillaumin, and Luc Van~Gool.
\newblock Food-101--mining discriminative components with random forests.
\newblock In \emph{Computer Vision--ECCV 2014: 13th European Conference, Zurich, Switzerland, September 6-12, 2014, Proceedings, Part VI 13}, pages 446--461. Springer, 2014.

\bibitem[Cao et~al.(2019)Cao, Wei, Gaidon, Arechiga, and Ma]{cao2019learning}
Kaidi Cao, Colin Wei, Adrien Gaidon, Nikos Arechiga, and Tengyu Ma.
\newblock Learning imbalanced datasets with label-distribution-aware margin loss.
\newblock \emph{Advances in neural information processing systems}, 32, 2019.

\bibitem[Celard et~al.(2023)Celard, Iglesias, Sorribes-Fdez, Romero, Vieira, and Borrajo]{celard2023survey}
Pedro Celard, Eva~Lorenzo Iglesias, Jos{\'e}~Manuel Sorribes-Fdez, Rub{\'e}n Romero, A~Seara Vieira, and Lourdes Borrajo.
\newblock A survey on deep learning applied to medical images: from simple artificial neural networks to generative models.
\newblock \emph{Neural Computing and Applications}, 35\penalty0 (3):\penalty0 2291--2323, 2023.

\bibitem[Cheng et~al.(2020)Cheng, Zhu, Li, Gong, Sun, and Liu]{cheng2020learning}
Hao Cheng, Zhaowei Zhu, Xingyu Li, Yifei Gong, Xing Sun, and Yang Liu.
\newblock Learning with instance-dependent label noise: A sample sieve approach.
\newblock \emph{arXiv preprint arXiv:2010.02347}, 2020.

\bibitem[Cui et~al.(2019)Cui, Jia, Lin, Song, and Belongie]{cui2019class}
Yin Cui, Menglin Jia, Tsung-Yi Lin, Yang Song, and Serge Belongie.
\newblock Class-balanced loss based on effective number of samples.
\newblock In \emph{Proceedings of the IEEE/CVF conference on computer vision and pattern recognition}, pages 9268--9277, 2019.

\bibitem[Dudley(2018)]{dudley2018real}
Richard~M Dudley.
\newblock \emph{Real analysis and probability}.
\newblock Chapman and Hall/CRC, 2018.

\bibitem[Einbinder et~al.(2022)Einbinder, Romano, Sesia, and Zhou]{einbinder2022training}
Bat-Sheva Einbinder, Yaniv Romano, Matteo Sesia, and Yanfei Zhou.
\newblock Training uncertainty-aware classifiers with conformalized deep learning.
\newblock \emph{Advances in neural information processing systems}, 35:\penalty0 22380--22395, 2022.

\bibitem[Elsayed et~al.(2018)Elsayed, Krishnan, Mobahi, Regan, and Bengio]{elsayed2018large}
Gamaleldin~Fathy Elsayed, Dilip Krishnan, Hossein Mobahi, Kevin Regan, and Samy Bengio.
\newblock Large margin deep networks for classification.
\newblock In \emph{Advances in Neural Information Processing Systems (NeurIPS)}, volume~31, 2018.

\bibitem[Englesson and Azizpour(2024)]{englesson2024robust}
Erik Englesson and Hossein Azizpour.
\newblock Robust classification via regression for learning with noisy labels.
\newblock In \emph{ICLR 2024-The Twelfth International Conference on Learning Representations, Messe Wien Exhibition and Congress Center, Vienna, Austria, May 7-11t, 2024}, 2024.

\bibitem[Fontana et~al.(2023)Fontana, Zeni, and Vantini]{fontana2023conformal}
Matteo Fontana, Gianluca Zeni, and Simone Vantini.
\newblock Conformal prediction: a unified review of theory and new challenges.
\newblock \emph{Bernoulli}, 29\penalty0 (1):\penalty0 1--23, 2023.

\bibitem[Fr{\'e}nay and Verleysen(2013)]{frenay2013classification}
Beno{\^\i}t Fr{\'e}nay and Michel Verleysen.
\newblock Classification in the presence of label noise: a survey.
\newblock \emph{IEEE transactions on neural networks and learning systems}, 25\penalty0 (5):\penalty0 845--869, 2013.

\bibitem[Ghosh et~al.(2023{\natexlab{a}})Ghosh, Belkhouja, Yan, and Doppa]{NCP}
Subhankar Ghosh, Taha Belkhouja, Yan Yan, and Janardhan~Rao Doppa.
\newblock {Improving Uncertainty Quantification of Deep Classifiers via Neighborhood Conformal Prediction: Novel Algorithm and Theoretical Analysis}.
\newblock In \emph{Proc. of {AAAI} Conf.}, pages 7722--7730, 2023{\natexlab{a}}.

\bibitem[Ghosh et~al.(2023{\natexlab{b}})Ghosh, Shi, Belkhouja, Yan, Doppa, and Jones]{PRCP}
Subhankar Ghosh, Yuanjie Shi, Taha Belkhouja, Yan Yan, Jana Doppa, and Brian Jones.
\newblock {Probabilistically Robust Conformal Prediction}.
\newblock In \emph{{UAI} Conf.}, volume 216 of \emph{Proc. of Machine Learning Research}, pages 681--690. {PMLR}, 2023{\natexlab{b}}.

\bibitem[Gupta and Gupta(2019)]{gupta2019dealing}
Shivani Gupta and Atul Gupta.
\newblock Dealing with noise problem in machine learning data-sets: A systematic review.
\newblock \emph{Procedia Computer Science}, 161:\penalty0 466--474, 2019.

\bibitem[Han et~al.(2018)Han, Yao, Yu, Niu, Xu, Hu, Tsang, and Sugiyama]{han2018co}
Bo~Han, Quanming Yao, Xingrui Yu, Gang Niu, Miao Xu, Weihua Hu, Ivor Tsang, and Masashi Sugiyama.
\newblock Co-teaching: Robust training of deep neural networks with extremely noisy labels.
\newblock \emph{Advances in neural information processing systems}, 31, 2018.

\bibitem[He et~al.(2016)He, Zhang, Ren, and Sun]{he2016deep}
Kaiming He, Xiangyu Zhang, Shaoqing Ren, and Jian Sun.
\newblock Deep residual learning for image recognition.
\newblock In \emph{Proceedings of the IEEE conference on computer vision and pattern recognition}, pages 770--778, 2016.

\bibitem[Hendrycks et~al.(2018)Hendrycks, Mazeika, Wilson, and Gimpel]{hendrycks2018using}
Dan Hendrycks, Mantas Mazeika, Duncan Wilson, and Kevin Gimpel.
\newblock Using trusted data to train deep networks on labels corrupted by severe noise.
\newblock \emph{Advances in neural information processing systems}, 31, 2018.

\bibitem[Hopkins et~al.(1999)Hopkins, Reeber, Forman, and Suermondt]{spambase_94}
Mark Hopkins, Erik Reeber, George Forman, and Jaap Suermondt.
\newblock Spambase.
\newblock UCI Machine Learning Repository, 1999.
\newblock DOI: https://doi.org/10.24432/C53G6X.

\bibitem[Jiang et~al.(2018)Jiang, Kim, Guan, and Gupta]{jiang2018predictive}
Heinrich Jiang, Been Kim, Melody~Y. Guan, and Maya~R. Gupta.
\newblock To trust or not to trust a classifier.
\newblock In \emph{Advances in Neural Information Processing Systems (NeurIPS)}, volume~31, 2018.

\bibitem[Jiang et~al.(2020)Jiang, Huang, Liu, and Yang]{jiang2020beyond}
Lu~Jiang, Di~Huang, Mason Liu, and Weilong Yang.
\newblock Beyond synthetic noise: Deep learning on controlled noisy labels.
\newblock In \emph{International conference on machine learning}, pages 4804--4815. PMLR, 2020.

\bibitem[Johnson and Khoshgoftaar(2022)]{johnson2022survey}
Justin~M Johnson and Taghi~M Khoshgoftaar.
\newblock A survey on classifying big data with label noise.
\newblock \emph{ACM Journal of Data and Information Quality}, 14\penalty0 (4):\penalty0 1--43, 2022.

\bibitem[Kim et~al.(2021)Kim, Ko, Choi, Yun, et~al.]{kim2021fine}
Taehyeon Kim, Jongwoo Ko, JinHwan Choi, Se-Young Yun, et~al.
\newblock Fine samples for learning with noisy labels.
\newblock \emph{Advances in Neural Information Processing Systems}, 34:\penalty0 24137--24149, 2021.

\bibitem[Kiyani et~al.(2024)Kiyani, Pappas, and Hassani]{kiyani2024length}
Shayan Kiyani, George~J Pappas, and Hamed Hassani.
\newblock Length optimization in conformal prediction.
\newblock \emph{Advances in Neural Information Processing Systems}, 37:\penalty0 99519--99563, 2024.

\bibitem[Krizhevsky et~al.(2009)Krizhevsky, Hinton, et~al.]{krizhevsky2009learning}
Alex Krizhevsky, Geoffrey Hinton, et~al.
\newblock Learning multiple layers of features from tiny images.
\newblock \emph{Toronto, ON, Canada}, 2009.

\bibitem[Lakshminarayanan et~al.(2017)Lakshminarayanan, Pritzel, and Blundell]{lakshminarayanan2017simple}
Balaji Lakshminarayanan, Alexander Pritzel, and Charles Blundell.
\newblock Simple and scalable predictive uncertainty estimation using deep ensembles.
\newblock In \emph{Advances in Neural Information Processing Systems (NeurIPS)}, volume~30, 2017.

\bibitem[Lei et~al.(2018)Lei, G’Sell, Rinaldo, Tibshirani, and Wasserman]{lei2018distribution}
Jing Lei, Max G’Sell, Alessandro Rinaldo, Ryan~J Tibshirani, and Larry Wasserman.
\newblock Distribution-free predictive inference for regression.
\newblock \emph{Journal of the American Statistical Association}, 113\penalty0 (523):\penalty0 1094--1111, 2018.

\bibitem[Li et~al.(2020)Li, Socher, and Hoi]{li2020dividemix}
Junnan Li, Richard Socher, and Steven~CH Hoi.
\newblock Dividemix: Learning with noisy labels as semi-supervised learning.
\newblock \emph{arXiv preprint arXiv:2002.07394}, 2020.

\bibitem[Li et~al.(2021{\natexlab{a}})Li, Fan, Ma, and Pan]{li2021evaluation}
Sijia Li, Yong Fan, Yue Ma, and Ya~Pan.
\newblock Evaluation of dataset distribution and label quality for autonomous driving system.
\newblock In \emph{2021 IEEE 21st International Conference on Software Quality, Reliability and Security Companion (QRS-C)}, pages 196--200. IEEE, 2021{\natexlab{a}}.

\bibitem[Li et~al.(2021{\natexlab{b}})Li, Liu, Han, Niu, and Sugiyama]{li2021provably}
Xuefeng Li, Tongliang Liu, Bo~Han, Gang Niu, and Masashi Sugiyama.
\newblock Provably end-to-end label-noise learning without anchor points.
\newblock In \emph{International conference on machine learning}, pages 6403--6413. PMLR, 2021{\natexlab{b}}.

\bibitem[Lin et~al.(2017)Lin, Goyal, Girshick, He, and Doll{\'a}r]{lin2017focal}
Tsung-Yi Lin, Priya Goyal, Ross Girshick, Kaiming He, and Piotr Doll{\'a}r.
\newblock Focal loss for dense object detection.
\newblock In \emph{Proceedings of the IEEE international conference on computer vision}, pages 2980--2988, 2017.

\bibitem[Lin et~al.(2024)Lin, Yao, and Liu]{lin2024learning}
Yexiong Lin, Yu~Yao, and Tongliang Liu.
\newblock Learning the latent causal structure for modeling label noise.
\newblock \emph{Advances in Neural Information Processing Systems}, 37:\penalty0 120549--120577, 2024.

\bibitem[Liu et~al.(2020)Liu, Niles-Weed, Razavian, and Fernandez-Granda]{liu2020early}
Sheng Liu, Jonathan Niles-Weed, Narges Razavian, and Carlos Fernandez-Granda.
\newblock Early-learning regularization prevents memorization of noisy labels.
\newblock \emph{Advances in neural information processing systems}, 33:\penalty0 20331--20342, 2020.

\bibitem[Mohri et~al.(2018)Mohri, Rostamizadeh, and Talwalkar]{mohri2018foundations}
Mehryar Mohri, Afshin Rostamizadeh, and Ameet Talwalkar.
\newblock \emph{Foundations of machine learning}.
\newblock MIT press, 2018.

\bibitem[Natarajan et~al.(2013)Natarajan, Dhillon, Ravikumar, and Tewari]{natarajan2013learning}
Nagarajan Natarajan, Inderjit~S Dhillon, Pradeep~K Ravikumar, and Ambuj Tewari.
\newblock Learning with noisy labels.
\newblock \emph{Advances in neural information processing systems}, 26, 2013.

\bibitem[Naveed et~al.(2025)Naveed, Khan, Qiu, Saqib, Anwar, Usman, Akhtar, Barnes, and Mian]{naveed2025comprehensive}
Humza Naveed, Asad~Ullah Khan, Shi Qiu, Muhammad Saqib, Saeed Anwar, Muhammad Usman, Naveed Akhtar, Nick Barnes, and Ajmal Mian.
\newblock A comprehensive overview of large language models.
\newblock \emph{ACM Transactions on Intelligent Systems and Technology}, 16\penalty0 (5):\penalty0 1--72, 2025.

\bibitem[Neyshabur et~al.(2017)Neyshabur, Bhojanapalli, McAllester, and Srebro]{neyshabur2017exploring}
Behnam Neyshabur, Srinadh Bhojanapalli, David McAllester, and Nati Srebro.
\newblock Exploring generalization in deep learning.
\newblock \emph{Advances in neural information processing systems}, 30, 2017.

\bibitem[Nguyen et~al.(2024)Nguyen, Ibrahim, and Fu]{nguyen2024noisy}
Tri Nguyen, Shahana Ibrahim, and Xiao Fu.
\newblock Noisy label learning with instance-dependent outliers: Identifiability via crowd wisdom.
\newblock \emph{Advances in Neural Information Processing Systems}, 37:\penalty0 97261--97298, 2024.

\bibitem[Oquab et~al.(2023)Oquab, Darcet, Moutakanni, Vo, Szafraniec, Khalidov, Fernandez, Haziza, Massa, El-Nouby, et~al.]{oquab2023dinov2}
Maxime Oquab, Timoth{\'e}e Darcet, Th{\'e}o Moutakanni, Huy Vo, Marc Szafraniec, Vasil Khalidov, Pierre Fernandez, Daniel Haziza, Francisco Massa, Alaaeldin El-Nouby, et~al.
\newblock Dinov2: Learning robust visual features without supervision.
\newblock \emph{arXiv preprint arXiv:2304.07193}, 2023.

\bibitem[Patrini et~al.(2017)Patrini, Rozza, Krishna~Menon, Nock, and Qu]{patrini2017making}
Giorgio Patrini, Alessandro Rozza, Aditya Krishna~Menon, Richard Nock, and Lizhen Qu.
\newblock Making deep neural networks robust to label noise: A loss correction approach.
\newblock In \emph{Proceedings of the IEEE conference on computer vision and pattern recognition}, pages 1944--1952, 2017.

\bibitem[Pleiss et~al.(2020)Pleiss, Zhang, Elenberg, and Weinberger]{pleiss2020identifying}
Geoff Pleiss, Tianyi Zhang, Ethan Elenberg, and Kilian~Q Weinberger.
\newblock Identifying mislabeled data using the area under the margin ranking.
\newblock \emph{Advances in Neural Information Processing Systems}, 33:\penalty0 17044--17056, 2020.

\bibitem[Quinlan(1987)]{credit_approval_27}
J.~R. Quinlan.
\newblock {Credit Approval}.
\newblock UCI Machine Learning Repository, 1987.
\newblock {DOI}: https://doi.org/10.24432/C5FS30.

\bibitem[Romano et~al.(2020)Romano, Sesia, and Candes]{romano2020classification}
Yaniv Romano, Matteo Sesia, and Emmanuel Candes.
\newblock Classification with valid and adaptive coverage.
\newblock \emph{Advances in Neural Information Processing Systems}, 33:\penalty0 3581--3591, 2020.

\bibitem[Serfling(2009)]{serfling2009approximation}
Robert~J Serfling.
\newblock \emph{Approximation theorems of mathematical statistics}.
\newblock John Wiley \& Sons, 2009.

\bibitem[Shahrokhi et~al.(2025)Shahrokhi, Roy, Yan, Arnaoudova, and Doppa]{LLM-CP}
Hooman Shahrokhi, Devjeet~Raj Roy, Yan Yan, Venera Arnaoudova, and Janaradhan~Rao Doppa.
\newblock {Conformal Prediction Sets for Deep Generative Models via Reduction to Conformal Regression}, 2025.
\newblock URL \url{https://arxiv.org/abs/2503.10512}.

\bibitem[Shi et~al.(2024{\natexlab{a}})Shi, Zhang, Guo, Yang, Xu, and Wu]{shi2024survey}
Jialin Shi, Kailai Zhang, Chenyi Guo, Youquan Yang, Yali Xu, and Ji~Wu.
\newblock A survey of label-noise deep learning for medical image analysis.
\newblock \emph{Medical image analysis}, 95:\penalty0 103166, 2024{\natexlab{a}}.

\bibitem[Shi et~al.(2024{\natexlab{b}})Shi, Ghosh, Belkhouja, Doppa, and Yan]{R3CP}
Yuanjie Shi, Subhankar Ghosh, Taha Belkhouja, Jana Doppa, and Yan Yan.
\newblock {Conformal Prediction for Class-wise Coverage via Augmented Label Rank Calibration}.
\newblock In \emph{Advances in Neural Information Processing Sys. ({NeurIPS})}, 2024{\natexlab{b}}.

\bibitem[Shi et~al.(2025)Shi, Shahrokhi, Jia, Chen, Doppa, and Yan]{shi2025direct}
Yuanjie Shi, Hooman Shahrokhi, Xuesong Jia, Xiongzhi Chen, Janardhan~Rao Doppa, and Yan Yan.
\newblock Direct prediction set minimization via bilevel conformal classifier training.
\newblock \emph{arXiv preprint arXiv:2506.06599}, 2025.

\bibitem[Song et~al.(2022)Song, Kim, Park, Shin, and Lee]{song2022learning}
Hwanjun Song, Minseok Kim, Dongmin Park, Yooju Shin, and Jae-Gil Lee.
\newblock Learning from noisy labels with deep neural networks: A survey.
\newblock \emph{IEEE transactions on neural networks and learning systems}, 34\penalty0 (11):\penalty0 8135--8153, 2022.

\bibitem[Stutz et~al.(2021)Stutz, Cemgil, Doucet, et~al.]{stutz2021learning}
David Stutz, Ali~Taylan Cemgil, Arnaud Doucet, et~al.
\newblock Learning optimal conformal classifiers.
\newblock \emph{arXiv preprint arXiv:2110.09192}, 2021.

\bibitem[Teubner et~al.(2023)Teubner, Flath, Weinhardt, Van Der~Aalst, and Hinz]{teubner2023welcome}
Timm Teubner, Christoph~M Flath, Christof Weinhardt, Wil Van Der~Aalst, and Oliver Hinz.
\newblock Welcome to the era of chatgpt et al. the prospects of large language models.
\newblock \emph{Business \& Information Systems Engineering}, 65\penalty0 (2):\penalty0 95--101, 2023.

\bibitem[Thirunavukarasu et~al.(2023)Thirunavukarasu, Ting, Elangovan, Gutierrez, Tan, and Ting]{thirunavukarasu2023large}
Arun~James Thirunavukarasu, Darren Shu~Jeng Ting, Kabilan Elangovan, Laura Gutierrez, Ting~Fang Tan, and Daniel Shu~Wei Ting.
\newblock Large language models in medicine.
\newblock \emph{Nature medicine}, 29\penalty0 (8):\penalty0 1930--1940, 2023.

\bibitem[Tjandra and Wiens(2023)]{tjandra2023leveraging}
Donna Tjandra and Jenna Wiens.
\newblock Leveraging an alignment set in tackling instance-dependent label noise.
\newblock In \emph{Conference on Health, Inference, and Learning}, pages 477--497. PMLR, 2023.

\bibitem[Van~der Vaart(2000)]{van2000asymptotic}
Aad~W Van~der Vaart.
\newblock \emph{Asymptotic statistics}, volume~3.
\newblock Cambridge university press, 2000.

\bibitem[Vinyals et~al.(2016)Vinyals, Blundell, Lillicrap, Wierstra, et~al.]{vinyals2016matching}
Oriol Vinyals, Charles Blundell, Timothy Lillicrap, Daan Wierstra, et~al.
\newblock Matching networks for one shot learning.
\newblock \emph{Advances in neural information processing systems}, 29, 2016.

\bibitem[Vovk et~al.(2005)Vovk, Gammerman, and Shafer]{vovk2005algorithmic}
Vladimir Vovk, Alexander Gammerman, and Glenn Shafer.
\newblock \emph{Algorithmic learning in a random world}.
\newblock Springer Science \& Business Media, 2005.

\bibitem[Wang et~al.(2024)Wang, Huang, Lin, and Liu]{wang2024noisegpt}
Haoyu Wang, Zhuo Huang, Zhiwei Lin, and Tongliang Liu.
\newblock Noisegpt: Label noise detection and rectification through probability curvature.
\newblock \emph{Advances in Neural Information Processing Systems}, 37:\penalty0 120159--120183, 2024.

\bibitem[Wei et~al.(2022)Wei, Zhu, Cheng, Liu, Niu, and Liu]{wei2022learning}
Jiaheng Wei, Zhaowei Zhu, Hao Cheng, Tongliang Liu, Gang Niu, and Yang Liu.
\newblock Learning with noisy labels revisited: A study using real-world human annotations.
\newblock In \emph{International Conference on Learning Representations}, 2022.
\newblock URL \url{https://openreview.net/forum?id=TBWA6PLJZQm}.

\bibitem[Xia et~al.(2019)Xia, Liu, Wang, Han, Gong, Niu, and Sugiyama]{xia2019anchor}
Xiaobo Xia, Tongliang Liu, Nannan Wang, Bo~Han, Chen Gong, Gang Niu, and Masashi Sugiyama.
\newblock Are anchor points really indispensable in label-noise learning?
\newblock \emph{Advances in neural information processing systems}, 32, 2019.

\bibitem[Yao et~al.(2023)Yao, Han, Zhou, Zhang, and Tsang]{yao2023latent}
Jiangchao Yao, Bo~Han, Zhihan Zhou, Ya~Zhang, and Ivor~W Tsang.
\newblock Latent class-conditional noise model.
\newblock \emph{IEEE Transactions on Pattern Analysis and Machine Intelligence}, 45\penalty0 (8):\penalty0 9964--9980, 2023.

\bibitem[Yao et~al.(2020)Yao, Liu, Han, Gong, Deng, Niu, and Sugiyama]{yao2020dual}
Yu~Yao, Tongliang Liu, Bo~Han, Mingming Gong, Jiankang Deng, Gang Niu, and Masashi Sugiyama.
\newblock Dual t: Reducing estimation error for transition matrix in label-noise learning.
\newblock \emph{Advances in neural information processing systems}, 33:\penalty0 7260--7271, 2020.

\bibitem[Zhang et~al.(2021)Zhang, Bengio, Hardt, Recht, and Vinyals]{zhang2021understanding}
Chiyuan Zhang, Samy Bengio, Moritz Hardt, Benjamin Recht, and Oriol Vinyals.
\newblock Understanding deep learning requires rethinking generalization.
\newblock \emph{Proceedings of the National Academy of Sciences}, 118\penalty0 (3), 2021.

\bibitem[Zhang et~al.(2024)Zhang, Huang, Jin, and Lu]{zhang2024vision}
Jingyi Zhang, Jiaxing Huang, Sheng Jin, and Shijian Lu.
\newblock Vision-language models for vision tasks: A survey.
\newblock \emph{IEEE transactions on pattern analysis and machine intelligence}, 46\penalty0 (8):\penalty0 5625--5644, 2024.

\bibitem[Zhang and Agarwal(2024)]{zhang2024multiclass}
Mingyuan Zhang and Shivani Agarwal.
\newblock Multiclass learning from noisy labels for non-decomposable performance measures.
\newblock In \emph{International Conference on Artificial Intelligence and Statistics}, pages 2170--2178. PMLR, 2024.

\bibitem[Zhang and Sabuncu(2018)]{zhang2018generalized}
Zhilu Zhang and Mert Sabuncu.
\newblock Generalized cross entropy loss for training deep neural networks with noisy labels.
\newblock \emph{Advances in neural information processing systems}, 31, 2018.

\bibitem[Zhao et~al.(2024)Zhao, Wang, Zhang, Han, Deveci, and Parmar]{zhao2024review}
Xia Zhao, Limin Wang, Yufei Zhang, Xuming Han, Muhammet Deveci, and Milan Parmar.
\newblock A review of convolutional neural networks in computer vision.
\newblock \emph{Artificial Intelligence Review}, 57\penalty0 (4):\penalty0 99, 2024.

\bibitem[Zhou and Liu(2006)]{zhou2006training}
Zhi-Hua Zhou and Xu-Ying Liu.
\newblock Training cost-sensitive neural networks with methods addressing the class imbalance problem.
\newblock \emph{IEEE Transactions on knowledge and data engineering}, 18\penalty0 (1):\penalty0 63--77, 2006.

\bibitem[Zhu et~al.(2024)Zhu, Zhang, Gangrade, and Scott]{zhu2024label}
Yilun Zhu, Jianxin Zhang, Aditya Gangrade, and Clay Scott.
\newblock Label noise: Ignorance is bliss.
\newblock \emph{Advances in Neural Information Processing Systems}, 37:\penalty0 116575--116616, 2024.

\end{thebibliography}



\section*{Checklist}

 \begin{enumerate}

 \item For all models and algorithms presented, check if you include:
 \begin{enumerate}
   \item A clear description of the mathematical setting, assumptions, algorithm, and/or model. 
   [Yes] We provide it in Section \ref{subsec:learning_theory}.
   \item An analysis of the properties and complexity (time, space, sample size) of any algorithm. [Yes]
   We provide it in Section \ref{subsec:learning_theory}.
   \item (Optional) Anonymized source code, with specification of all dependencies, including external libraries. [Yes/No/Not Applicable]
 \end{enumerate}

 \item For any theoretical claim, check if you include:
 \begin{enumerate}
   \item Statements of the full set of assumptions of all theoretical results. [Yes] We provide it in Section \ref{subsec:learning_theory}.
   \item Complete proofs of all theoretical results. [Yes] We provide it in Appendix \ref{sec:appendix:proofs}.
   \item Clear explanations of any assumptions. 
   [Yes] We provide it in Section \ref{subsec:learning_theory}.
 \end{enumerate}

 \item For all figures and tables that present empirical results, check if you include:
 \begin{enumerate}
   \item The code, data, and instructions needed to reproduce the main experimental results (either in the supplemental material or as a URL). [Yes] we provide it in the supplemental material.
   \item All the training details (e.g., data splits, hyperparameters, how they were chosen). [Yes]
   we provide it in Appendix \ref{ssup:sec:experiment}  and \ref{ssup:sec:experiment_binary}.
         \item A clear definition of the specific measure or statistics and error bars (e.g., with respect to the random seed after running experiments multiple times). 
         [Yes] we provide it in Appendix \ref{ssup:sec:experiment}  and \ref{ssup:sec:experiment_binary}.
         \item A description of the computing infrastructure used. (e.g., type of GPUs, internal cluster, or cloud provider). [Not Applicable] 
 \end{enumerate}

 \item If you are using existing assets (e.g., code, data, models) or curating/releasing new assets, check if you include:
 \begin{enumerate}
   \item Citations of the creator If your work uses existing assets. [Yes] we cite it in \ref{sec:experiment_results}. 
   \item The license information of the assets, if applicable. [Not Applicable]
   \item New assets either in the supplemental material or as a URL, if applicable. [Yes] we provide it in the supplemental material.
   \item Information about consent from data providers/curators. [Not Applicable]
   \item Discussion of sensible content if applicable, e.g., personally identifiable information or offensive content. [Not Applicable]
 \end{enumerate}

 \item If you used crowdsourcing or conducted research with human subjects, check if you include:
 \begin{enumerate}
   \item The full text of instructions given to participants and screenshots. [Yes/No/Not Applicable]
   \item Descriptions of potential participant risks, with links to Institutional Review Board (IRB) approvals if applicable. [Yes/No/Not Applicable]
   \item The estimated hourly wage paid to participants and the total amount spent on participant compensation. [Yes/No/Not Applicable]
 \end{enumerate}

 \end{enumerate}

\clearpage
\newpage

\onecolumn

\appendix


\section{ Technical Proofs}
\label{sec:appendix:proofs}

\subsection{ Technical Proofs for Proposition \ref{proposition:quantile_gap}}
\label{sec:appendix:proofs_section3}

\begin{propositioninappendix}[Gap between $\tau_\alpha$ and $\widehat \tau_\alpha^s$] 
\label{proposition:quantile_gap_appendix}
Denote by $G(t)$ the cumulative distribution function (CDF) of $M_f(X,\widetilde Y)$ under the noisy distribution $\mathcal{P}_{\noisy}$.
Assume that $G(t)$ is continuously differentiable in a neighborhood of $\tau_\alpha(f)$ with density $g(t)$ and $g(\tau_\alpha(f)) > 0$. 
Then, for any $\delta \in (0,1)$, we have:
\begin{align*}
\P \bigg ( |\tau_{\alpha}(f) - \widehat\tau^s_{\alpha}(f) | \leq \tilde O \Big (\frac{1}{\sqrt{s}} \Big ) \bigg ) \geq 1-\delta,
\end{align*}
where $\tilde O$ hides the logarithmic factors.
\end{propositioninappendix}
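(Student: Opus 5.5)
The plan is the standard quantile-concentration argument. Combine a uniform (DKW) bound on the empirical CDF of the batch margins with the local invertibility of $G$ near $\tau_\alpha(f)$ given by the positive-density assumption. Fix $f$ and let $V_i = M_f(X_i,\widetilde Y_i)$, $i=1,\dots,s$, be the batch margins. They are i.i.d.\ with CDF $G$ under $\calP_{\noisy}$, because the batch is drawn uniformly from $\calD_\tr \sim \calP_{\noisy}$. Let $\widehat G_s$ be their empirical CDF. By the Dvoretzky--Kiefer--Wolfowitz inequality (Massart's constant), with probability at least $1-\delta$,
\begin{align*}
\sup_t |\widehat G_s(t) - G(t)| \le \epsilon_s := \sqrt{\log(2/\delta)/(2s)}.
\end{align*}
All subsequent steps are deterministic on this event.

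Next, I relate $\widehat\tau^s_\alpha(f)$ to $\widehat G_s$. By construction, $\widehat\tau^s_\alpha(f)$ is the order statistic of rank $\lceil \alpha(s+1)\rceil$ (read as the $\alpha$-level lower quantile, consistent with ``at most an $\alpha$-fraction below''). Hence
\begin{align*}
\widehat G_s(\widehat\tau^s_\alpha(f)) \ge \alpha, \qquad \widehat G_s(\widehat\tau^s_\alpha(f)^-) \le \alpha + 2/s .
\end{align*}
The $2/s$ term absorbs the $(s+1)$ and ceiling rounding. Combining these with DKW gives
\begin{align*}
G(\widehat\tau^s_\alpha(f)) \ge \alpha - \epsilon_s, \qquad G(\widehat\tau^s_\alpha(f)^-) \le \alpha + \epsilon_s + 2/s .
\end{align*}

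Then I invert $G$ locally. By assumption, $G$ is $C^1$ on some $[\tau_\alpha - r, \tau_\alpha + r]$ with $g(\tau_\alpha) > 0$. Shrinking $r$, continuity of $g$ gives $g \ge c := g(\tau_\alpha)/2 > 0$ on this interval. Also $G(\tau_\alpha) = \alpha$, since $G$ is continuous there. The mean value theorem then yields
\begin{align*}
G(\tau_\alpha + h) \ge \alpha + c h, \qquad G(\tau_\alpha - h) \le \alpha - c h, \qquad 0 \le h \le r .
\end{align*}
Set $h_s = (\epsilon_s + 2/s)/c$ plus an arbitrarily small slack. Suppose $\widehat\tau^s_\alpha > \tau_\alpha + h_s$. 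Monotonicity gives $G(\widehat\tau^{s\,-}_\alpha) \ge G(\tau_\alpha + h_s) > \alpha + \epsilon_s + 2/s$, a contradiction. Now suppose $\widehat\tau^s_\alpha < \tau_\alpha - h_s$. Then $G(\widehat\tau^s_\alpha) \le G(\tau_\alpha - h_s) \le \alpha - c h_s < \alpha - \epsilon_s$, again a contradiction. Therefore
\begin{align*}
|\widehat\tau^s_\alpha(f) - \tau_\alpha(f)| \le \frac{2}{g(\tau_\alpha(f))}\Big(\sqrt{\tfrac{\log(2/\delta)}{2s}} + \tfrac{2}{s}\Big) = \tilde O(1/\sqrt{s}).
\end{align*}

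The main obstacle is that this inversion only works locally. It requires $h_s \le r$, i.e.\ $s \ge s_0(\delta, r, g(\tau_\alpha))$. It also must rule out the empirical quantile landing far outside the neighborhood. Monotonicity handles the second issue: if $\widehat\tau^s_\alpha$ lay beyond $\tau_\alpha \pm r$, the same CDF-gap contradiction applies using $G(\tau_\alpha \pm r)$, which differs from $\alpha$ by at least $cr > \epsilon_s + 2/s$ for large $s$. For small $s$ the bound holds trivially, because margins lie in $[-1,1]$ and so $|\tau_\alpha - \widehat\tau^s_\alpha| \le 2$. This constant is absorbed into the $\tilde O$ by enlarging constants. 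The dependence on $f$ (which is fixed here) and the constants $1/g(\tau_\alpha)$ and $\log(1/\delta)$ are hidden in $\tilde O$.
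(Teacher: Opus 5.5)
Your proposal is correct and follows essentially the same route as the paper's proof: the DKW bound on the batch empirical CDF, the crossing bound on the ECDF at the order statistic, and local inversion of $G$ using a positive lower bound on the density near $\tau_\alpha(f)$, which gives $|\widehat\tau^s_\alpha-\tau_\alpha|\lesssim (\epsilon_s+O(1/s))/g_0$. Your version is slightly more careful than the paper's in three places, but the argument is the same: you use the left limit $\widehat G_s(\widehat\tau^{s\,-}_\alpha)$, whereas the paper's claim $\widehat G^s(\widehat\tau^s_\alpha)\le\alpha+1/s$ can fail under ties; you replace the inverse-Lipschitz step with a monotonicity contradiction; and you handle small $s$ explicitly via $|M_f|\le 1$, which the paper omits.
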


\begin{proof}
of Proposition \ref{proposition:quantile_gap}.

Before proving Proposition \ref{proposition:quantile_gap}, we first define $\widehat G^s(t) = \frac{1}{s}\sum \indicator [M_f(X_i,\widetilde Y_i) \leq t]$ as the empirical CDF of $M_f(X,\widetilde Y)$ with $s$ samples.

By the definition, $\tau_\alpha(f)=G^{-1}(\alpha)$, where $\widehat\tau_\alpha^s(f):=\inf\{t:\widehat G^s(t)\ge \alpha\}$.

\textbf{Step 1.}
By the definition of $\widehat\tau_\alpha^s$, we have $\widehat G^s(\widehat\tau_\alpha^s)\ge \alpha$.
Moreover, since $\widehat G^s$ is a step function with jumps of size $1/s$,
the value at the first crossing satisfies
\begin{equation}
\label{eq:empirical_cdf_crossing}
\alpha \le \widehat G^s(\widehat\tau_\alpha^s) \le \alpha+\frac{1}{s}.
\end{equation}
Equivalently, $\widehat\tau_\alpha^s$ is an order statistic and the ECDF at an order statistic lies in an interval of width $1/s$.

\textbf{Step 2.}
By the DKW inequality, for any $\delta\in(0,1)$, with probability at least $1-\delta$,
\begin{equation}
\label{eq:dkw}
\sup_{t\in\mathbb R}\big|\widehat G^s(t)-G(t)\big|
\le
\varepsilon_s(\delta)
:=
\sqrt{\frac{\log(2/\delta)}{2s}}.
\end{equation}
On the event \eqref{eq:dkw}, evaluating at the \emph{same} point $t=\widehat\tau_\alpha^s$ gives
\[
\big|G(\widehat\tau_\alpha^s)-\widehat G^s(\widehat\tau_\alpha^s)\big|\le \varepsilon_s(\delta).
\]
Combining with \eqref{eq:empirical_cdf_crossing}, we obtain the sandwich
\begin{equation}
\label{eq:G_hat_tau_sandwich}
\alpha-\varepsilon_s(\delta)
\le
G(\widehat\tau_\alpha^s)
\le
\alpha+\frac{1}{s}+\varepsilon_s(\delta).
\end{equation}

\textbf{Step 3.}
Let $U=(\tau_\alpha-\eta,\tau_\alpha+\eta)$ and assume $\inf_{t\in U} g(t)\ge g_0>0$.
Then $G$ is strictly increasing on $U$, hence invertible on $G(U)$, and its inverse is $(1/g_0)$-Lipschitz on $G(U)$:
for any $u_1,u_2\in G(U)$,
\begin{equation}
\label{eq:inv_lip}
|G^{-1}(u_1)-G^{-1}(u_2)|\le \frac{1}{g_0}|u_1-u_2|.
\end{equation}

Now we show that on the event \eqref{eq:dkw} and for sufficiently large $s$ (or more explicitly,
whenever $\varepsilon_s(\delta)+1/s \le g_0\eta$), the probability interval in \eqref{eq:G_hat_tau_sandwich}
lies inside $G(U)$, which implies $\widehat\tau_\alpha^s\in U$ and legitimizes applying \eqref{eq:inv_lip}.
Indeed, since $G(\tau_\alpha)=\alpha$ and $g\ge g_0$ on $U$, we have
\[
G(\tau_\alpha+\eta)-\alpha \ge g_0\eta,
\qquad
\alpha-G(\tau_\alpha-\eta) \ge g_0\eta.
\]
Thus if $\varepsilon_s(\delta)+1/s \le g_0\eta$, then
\[
\alpha-\varepsilon_s(\delta)\ge G(\tau_\alpha-\eta),
\qquad
\alpha+\frac1s+\varepsilon_s(\delta)\le G(\tau_\alpha+\eta),
\]
so \eqref{eq:G_hat_tau_sandwich} implies $G(\widehat\tau_\alpha^s)\in G(U)$ and hence $\widehat\tau_\alpha^s\in U$.

\textbf{Step 4.}
On the event \eqref{eq:dkw} and under $\varepsilon_s(\delta)+1/s \le g_0\eta$, we can apply \eqref{eq:inv_lip} with
$u_1=G(\widehat\tau_\alpha^s)$ and $u_2=\alpha=G(\tau_\alpha)$:
\[
|\widehat\tau_\alpha^s-\tau_\alpha|
=
\big|G^{-1}(G(\widehat\tau_\alpha^s)) - G^{-1}(\alpha)\big|
\le
\frac{1}{g_0}\,|G(\widehat\tau_\alpha^s)-\alpha|.
\]
Using \eqref{eq:G_hat_tau_sandwich}, we obtain
\[
|G(\widehat\tau_\alpha^s)-\alpha|\le \varepsilon_s(\delta)+\frac{1}{s},
\]
hence
\[
|\widehat\tau_\alpha^s-\tau_\alpha|
\le
\frac{1}{g_0}\Big(\varepsilon_s(\delta)+\frac{1}{s}\Big)
=
\frac{1}{g_0}\Big(\sqrt{\frac{\log(2/\delta)}{2s}}+\frac{1}{s}\Big).
\]

Finally, since \eqref{eq:dkw} holds with probability at least $1-\delta$, the desired high-probability bound follows.
\end{proof}

\subsection{ Technical Proofs for Theorem \ref{theorem:learning_bound}}
\label{sec:appendix:proofs_section4}

\begin{theoreminappendix}[Learning bound]
\label{theorem:learning_bound_appendix}
(Theorem \ref{theorem:learning_bound} restated.)
Suppose the assumptions in Proposition~\ref{proposition:quantile_gap} hold.
Define $\delta_w$ as the average Wasserstein-1 distance between the noisy and clean label distributions conditional on $X$, where $\delta_w = \mathbb{E}_{X \sim \mathcal{P}(X)} \Big[ W_1\big( \mathcal{P}_{\noisy}(\cdot \mid X),\; \mathcal{P}(\cdot \mid X) \big) \Big]$.
Then the learning bound of CMRM is:
\begin{align*}
\mathcal{L}_{\text{cr}}(f) - \widehat \calL^s_{\text{cr}}(f)
\leq
\tilde O \Big ( \widehat {\mathfrak{R}}_n(\mathcal{F}) + \frac{1}{\sqrt{s}} + \delta_w + \alpha + \texttt{temp} \Big ).
\end{align*}
\end{theoreminappendix}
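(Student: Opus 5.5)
I will prove the bound by a telescoping decomposition that passes from the clean conditional risk $\mathcal{L}_{\text{cr}}(f)$ to the batch-quantile surrogate $\widehat\calL^s_{\text{cr}}(f)$ through intermediate objectives. Each link is controlled by one of the terms in the statement. Write $h_\tau(m) = -m\,\indicator[m\ge\tau]$ and let $\tilde h_\tau(m) = -m\,\widetilde\indicator[m\ge \tau]$ be its sigmoid-smoothed version. Margins lie in $[-1,1]$, so all losses are bounded by $1$.

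The chain of intermediate quantities is as follows.
\begin{itemize}
\item[(i)] The clean conditional risk, written as $\mathcal{L}_{\text{cr}}(f) = \E_{\calP}[h_{\tau_\alpha}(M_f)]/\P_{\calP}(M_f\ge\tau_\alpha)$.
\item[(ii)] The unconditional clean risk $\E_{\calP}[h_{\tau_\alpha}(M_f(X,Y))]$.
\item[(iii)] The same expectation under $\calP_{\noisy}$.
\item[(iv)] The smoothed population risk $\E_{\calP_{\noisy}}[\tilde h_{\tau_\alpha}(M_f)]$.
\item[(v)] The smoothed population risk with $\tau_\alpha$ replaced by the random batch quantile $\widehat\tau^s_\alpha$, averaged over $\calT_\alpha(f)$.
\item[(vi)] The empirical counterpart, which is $\widehat\calL^s_{\text{cr}}$ up to the $1/n$ normalization. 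I will interpret the surrogate as normalized, since it is defined as an average.
\end{itemize}

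\emph{Step (i)$\to$(ii).} Since $\P_{\noisy}(M_f\ge\tau_\alpha) = 1-\alpha$, the normalizing probability under $\calP$ is $1-\alpha$ up to a shift of order $\delta_w$, which I handle in the next step. Dividing by a quantity of the form $1-\alpha+O(\delta_w)$ costs $O(\alpha+\delta_w)$ because the loss is bounded. This is where the $\alpha$ term arises.

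\emph{Step (ii)$\to$(iii).} This is the label-shift step. For fixed $X$, the clean and noisy expectations differ only through the conditional label law. The map $y\mapsto h_{\tau}(M_f(X,y))$ is bounded, so with the discrete metric on $\calY$ the difference is at most a constant times $W_1(\calP_{\noisy}(\cdot\mid X),\calP(\cdot\mid X))$. Integrating over $X$ gives $O(\delta_w)$. The same argument controls the shift in $\P(M_f\ge\tau_\alpha)$ used above.

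\emph{Step (iii)$\to$(iv).} For the smoothing gap I use $|\indicator[m\ge\tau]-\widetilde\indicator[m\ge\tau]|\le e^{-|m-\tau|/\texttt{temp}}$. Its expectation splits into two parts. Near $\tau_\alpha$ I use the bounded density $g$, which gives $O(\texttt{temp})$. The contribution from points far from $\tau_\alpha$ is exponentially small, and the total is absorbed into $O(\texttt{temp})$.

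\emph{Step (iv)$\to$(v).} The smoothed indicator is $(1/(4\,\texttt{temp}))$-Lipschitz in $\tau$, so that route loses a factor of $1/\texttt{temp}$. It is better to work with the hard indicator. By Proposition~\ref{proposition:quantile_gap}, with probability $1-\delta$ we have $|\widehat\tau^s_\alpha-\tau_\alpha|\le \tilde O(1/\sqrt s)$. On that event, $|\E[h_{\tau}(M)]-\E[h_{\tau'}(M)]|\le \P(M\in[\tau,\tau'])\le \sup_U g\cdot|\tau-\tau'|$, using continuity of $g$ on the neighborhood $U$. The failure event contributes at most $2\delta$; choosing $\delta=1/\sqrt s$ gives $\tilde O(1/\sqrt s)$ after taking the expectation over $\calT_\alpha(f)$.

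\emph{Step (v)$\to$(vi).} This is a uniform-convergence step. Conditional on $\tau$, the loss $m\mapsto -m\,\widetilde\indicator[m\ge\tau]$ is Lipschitz in $m$ with constant $1+1/(4\,\texttt{temp})$. I apply Talagrand's contraction together with the standard Rademacher bound of Mohri et al. to the class $\{(x,y)\mapsto M_f(x,y)\}$, which I identify with $\calF$ in $\widehat{\mathfrak R}_n$. This yields $O(\widehat{\mathfrak R}_n(\calF)+\sqrt{\log(1/\delta)/n})$. The logarithmic and Lipschitz constants are hidden in $\tilde O$, and uniformity over $\tau$ is handled by a union bound over a grid.

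\emph{Main obstacle.} The hardest part is the first two links, (i)$\to$(iii). Making the change of measure rigorous for the \emph{conditional} risk requires care in two places. First, $\tau_\alpha$ is defined under $\calP_{\noisy}$ but is used in the conditioning under $\calP$. Second, $W_1$ on the finite label space must be read with a metric under which the margin loss is Lipschitz. I will first try the discrete metric, with $W_1$ equal to total variation. If that proves too crude, I will bound the numerator and denominator separately and accept constants depending on $1/(1-\alpha)$.
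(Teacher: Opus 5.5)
Your proof is correct, and it uses the same telescoping chain as the paper; the main difference is in the quantile step. The correspondence is:
\begin{itemize}
\item Your (i)$\to$(iii) is the paper's Lemma 4. Its term $A$ gives $|1/P-1|\cdot P=|1-P|\le\alpha+\delta_w$ via the probability-shift lemma, and its term $B\le 2\delta_w$ comes from Wasserstein duality under the discrete metric.
\item Your (iii)$\to$(iv) is Lemma 3: the bound $e^{-|\Delta|/\texttt{temp}}$ on the sigmoid--indicator gap, integrated against the density.
\item Your (iv)$\to$(v) is Lemma 2.
\item Your (v)$\to$(vi) is Lemma 1, which the paper simply cites from Mohri et al.\ (Theorem 3.3).
\end{itemize}

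\textbf{The quantile step.} The paper uses the $1/(4\texttt{temp})$-Lipschitz property of the smoothed risk in $\tau$. It then bounds the gap by $\frac{1}{4\texttt{temp}}\,|\tau_\alpha(f)-\E[\widehat\tau^s_\alpha(f)]|$ and inserts Proposition~\ref{proposition:quantile_gap} as if it were a deterministic bound. Finally, it removes the $1/\texttt{temp}$ by invoking the experimental setting $\texttt{temp}=1$.

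Your route works with the hard-indicator risk, which is $\sup_U g$-Lipschitz in $\tau$ on the neighborhood $U$. You split on the high-probability event with $\delta=1/\sqrt{s}$. This correctly controls $\E|\tau_\alpha(f)-\widehat\tau^s_\alpha(f)|$ rather than $|\tau_\alpha(f)-\E\,\widehat\tau^s_\alpha(f)|$, and it yields a quantile term free of $1/\texttt{temp}$.

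\textbf{The smoothing step.} Your near/far split in (iii)$\to$(iv) uses only the local density hypothesis of Proposition~\ref{proposition:quantile_gap}. The paper instead assumes a global bound $g_\infty$, which that proposition does not supply.

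\textbf{Two points to make explicit.}
\begin{enumerate}
\item Quantities (iv) and (v) are both \emph{smoothed}, so routing through the hard indicator needs a second smoothing comparison at the random threshold $\widehat\tau^s_\alpha(f)$. On the good event, $\widehat\tau^s_\alpha(f)$ lies well inside $U$. The same near/far argument then costs another $O(\texttt{temp})$. Equivalently, reorder the chain so the threshold is changed in the hard risk before smoothing.
\item Your contraction step carries the factor $1+1/(4\texttt{temp})$ in front of $\widehat{\mathfrak{R}}_n(\mathcal{F})$. The $1/\texttt{temp}$ you avoided in the quantile step therefore reappears there, attached to the complexity term. The stated bound, with $\texttt{temp}$ entering only additively, holds only if $\tilde O$ hides that factor, as the paper implicitly does.
\end{enumerate}
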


\begin{proof}
of Theorem \ref{theorem:learning_bound}.

Before proving Theorem \ref{theorem:learning_bound}, we first define conformal margin risk on the noisy distribution as:
\begin{align*}
\calL^{\noisy}_{\text{cr}}(f)
= - \E_{(X,\widetilde Y) \sim \calP_{\noisy}} \Big [ M_f(X,\widetilde Y) \cdot \indicator[M_f(X,\widetilde Y) \ge \tau_\alpha(f)] \Big ].
\end{align*}
and its surrogate risk on the noisy distribution, where the indicator function $\indicator[x\geq y]$ is replaced by the Sigmoid function $\widetilde \indicator[x\geq y] = 1/(1+\exp(-(x-y)/\texttt{temp} ))$, as:
\begin{align*}
\widetilde \calL^{\noisy}_{\text{cr}}(f)
= - \E_{(X,\widetilde Y) \sim \calP_{\noisy}} \Big [ M_f(X,\widetilde Y) \cdot \widetilde \indicator[M_f(X,\widetilde Y) \ge \tau_\alpha(f)] \Big ].
\end{align*}

Recall that:
\begin{align*}
\widehat \calL_{\text{cr}}(f)
= \frac{1}{n}\sum^n_{i=1} 
- M_f(X_i,\widetilde Y_i) \cdot \widetilde \indicator[M_f(X_i,\widetilde Y_i) \ge \widehat \tau_\alpha(f)]
,
\end{align*}

\begin{align*}
\widehat \calL^s_{\text{cr}}(f)
= \mathop{\E}\limits_{\widehat \tau^s_{\alpha} (f) \sim \calT_{\alpha}(f)}\bigg[ \sum^n_{i=1} &  - M_f(X_i,\widetilde Y_i) \cdot 
\widetilde \indicator[M_f(X_i,\widetilde Y_i) \ge \widehat \tau^s_\alpha(f)] \bigg],
\end{align*}

and
\begin{align*}
\mathcal{L}_{\text{cr}}(f)
= - \E_{(X,Y) \sim \calP} \left[ M_f(X, Y) \mid M_f(X,Y) \ge \tau_\alpha(f) \right].
\end{align*}

Then we show the following technical lemmas:
\begin{lemma}
\label{lemma:CMRM_population_gap}
(Immediate results from Theorem 3.3 in \citep{mohri2018foundations})
For any $f \in \calF$, the following inequality holds with high probability:
\begin{align*}
\mathop{\E}\limits_{\widehat \tau^s_{\alpha} (f) \sim \calT_{\alpha}(f)}\bigg[ 
- \E_{(X,\widetilde Y) \sim \calP_{\noisy}} \left[ M_f(X,\widetilde Y) \cdot
\widetilde \indicator \big [M_f(X,\widetilde Y) \ge \widehat \tau^s_\alpha(f) \big ] \right] \bigg]
\leq 
\widetilde \calL^s_{\text{cr}}(f) + 
\tilde O \Big ( \widehat {\mathfrak{R}}_n(\mathcal{F}) + \frac{1}{\sqrt{n}}\Big ).
\end{align*}
\end{lemma}

\begin{lemma}
\label{lemma:CMRM_batch_gap}
Suppose the assumptions in Proposition~\ref{proposition:quantile_gap} hold.
Then, the following inequality holds:
\begin{align*}
\Bigg | \widetilde \calL^{\noisy}_{\text{cr}}(f) - 
\mathop{\E}\limits_{\widehat \tau^s_{\alpha} (f) \sim \calT_{\alpha}(f)}\bigg[ 
- \E_{(X,\widetilde Y) \sim \calP_{\noisy}} \left[ M_f(X,\widetilde Y) \cdot
\widetilde \indicator \big [M_f(X,\widetilde Y) \ge \widehat \tau^s_\alpha(f) \big ] \right] \bigg] \Bigg | 
\leq \tilde O(1 / \sqrt{s} ).
\end{align*}
\end{lemma}

\begin{lemma}
\label{lemma:CMRM_surrogate_gap}
Suppose the assumptions in Proposition~\ref{proposition:quantile_gap} hold.
Then, the following inequality holds:
\begin{align*}
| \calL^{\noisy}_{\text{cr}}(f) - \widetilde \calL^{\noisy}_{\text{cr}}(f) | \leq O(\texttt{temp}).
\end{align*}
\end{lemma}

\begin{lemma}
\label{lemma:CMRM_distribution_gap}
Define $\delta_w$ as the average Wasserstein-1 distance between posteriors under the training and test domains, where $\delta_w = \mathbb{E}_{X \sim \mathcal{P}(X)} \Big[ W_1\big( \P_{\calP_{\noisy}}(\cdot \mid X),\; \P_{\calP}(\cdot \mid X) \big) \Big]$.
Then the following inequality holds:
\[
\big|\calL_{\text{cr}}(f) - \calL^{\noisy}_{\text{cr}}(f) \big| 
\;\leq\;
2\delta_w + \alpha.
\]
\end{lemma}

Now we begin to prove Theorem \ref{theorem:learning_bound}:
\begin{align*}
&
\mathcal{L}_{\text{cr}}(f) - \widehat \calL^s_{\text{cr}}(f)
\\
= &
\underbrace{\mathcal{L}_{\text{cr}}(f) - \calL^{\noisy}_{\text{cr}}(f)}_{2\delta_w + \alpha, \text{ Lemma 4} } 
+
\underbrace{\calL^{\noisy}_{\text{cr}}(f) - \widetilde \calL^{\noisy}_{\text{cr}}(f)}_{O(\texttt{temp}), \text{ Lemma 3} } 
\\
& + \underbrace{\widetilde \calL^{\noisy}_{\text{cr}}(f) 
- \mathop{\E}\limits_{\widehat \tau^s_{\alpha} (f) \sim \calT_{\alpha}(f)}\bigg[ 
- \E_{(X,\widetilde Y) \sim \calP_{\noisy}} \left[ M_f(X,\widetilde Y) \cdot
\widetilde \indicator \big [M_f(X,\widetilde Y) \ge \widehat \tau^s_\alpha(f) \big ] \right] \bigg]}_{\tilde O(1 / \sqrt{s}, \text{ Lemma 2} }
\\
& + \underbrace{\mathop{\E}\limits_{\widehat \tau^s_{\alpha} (f) \sim \calT_{\alpha}(f)}\bigg[ 
- \E_{(X,\widetilde Y) \sim \calP_{\noisy}} \left[ M_f(X,\widetilde Y) \cdot
\widetilde \indicator \big [M_f(X,\widetilde Y) \ge \widehat \tau^s_\alpha(f) \big ] \right] \bigg]
- \widehat \calL^s_{\text{cr}}(f)}_{\tilde O \Big ( \widehat {\mathfrak{R}}_n(\mathcal{F}) + \frac{1}{\sqrt{n}}\Big ), \text{ Lemma 1} }
\\
\leq &
2\delta_w + \alpha + O(\texttt{temp})+ \tilde O(1 / \sqrt{s} ) 
+ \tilde O \Big ( \widehat {\mathfrak{R}}_n(\mathcal{F}) + \frac{1}{\sqrt{n}}\Big )
\\
\leq &
\tilde O \Big ( \widehat {\mathfrak{R}}_n(\mathcal{F}) + \frac{1}{\sqrt{s}}+ \delta_w + \alpha + \texttt{temp}\Big ),
\end{align*}
where the first inequality is due to Lemma \ref{lemma:CMRM_population_gap}, \ref{lemma:CMRM_batch_gap}, 
\ref{lemma:CMRM_surrogate_gap}, \ref{lemma:CMRM_distribution_gap}, and the second inequality is due to $s \leq n$.

Therefore, we have:
\begin{align*}
\mathcal{L}_{\text{cr}}(f) - \widehat \calL^s_{\text{cr}}(f)
\leq
\tilde O \Big ( \widehat {\mathfrak{R}}_n(\mathcal{F}) + \frac{1}{\sqrt{s}}+ \delta_w + \alpha + \texttt{temp}\Big ).
\end{align*}
\end{proof}

\subsection{ Proofs for Technical Lemmas }
\label{sec:appendix:proofs_tech_lemmas}

\subsubsection{ Proofs for Lemma \ref{lemma:CMRM_batch_gap} }

\begin{lemmainappendix}
\label{lemma:CMRM_batch_gap_appendix}
(Lemma \ref{lemma:CMRM_batch_gap} restated.)
Suppose the assumptions in Proposition~\ref{proposition:quantile_gap} hold.
Then, the following inequality holds:
\begin{align*}
\Bigg | \widetilde \calL^{\noisy}_{\text{cr}}(f) - 
\mathop{\E}\limits_{\widehat \tau^s_{\alpha} (f) \sim \calT_{\alpha}(f)}\bigg[ 
- \E_{(X,\widetilde Y) \sim \calP_{\noisy}} \left[ M_f(X,\widetilde Y) \cdot
\widetilde \indicator \big [M_f(X,\widetilde Y) \ge \widehat \tau^s_\alpha(f) \big ] \right] \bigg] \Bigg | 
\leq \tilde O(1 / \sqrt{s} ).
\end{align*}
\end{lemmainappendix}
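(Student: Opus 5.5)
Define, for a threshold $t$, the smoothed population risk
\[
\Phi(t) := -\E_{(X,\widetilde Y)\sim\calP_{\noisy}}\Big[ M_f(X,\widetilde Y)\cdot \widetilde\indicator\big[M_f(X,\widetilde Y)\ge t\big]\Big].
\]
Then $\widetilde\calL^{\noisy}_{\text{cr}}(f)=\Phi(\tau_\alpha(f))$, and the quantity inside the absolute value is $\bigl|\Phi(\tau_\alpha(f))-\E_{\widehat\tau^s_\alpha\sim\calT_\alpha(f)}[\Phi(\widehat\tau^s_\alpha(f))]\bigr|$. By Jensen's inequality this is at most $\E\,|\Phi(\tau_\alpha)-\Phi(\widehat\tau^s_\alpha)|$. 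The plan is therefore to show that $\Phi$ is Lipschitz in $t$, and then to control $\E|\tau_\alpha-\widehat\tau^s_\alpha|$ using Proposition~\ref{proposition:quantile_gap}.

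\textbf{Lipschitz step.} Margins satisfy $|M_f|\le 1$, since they are differences of probabilities. The sigmoid $t\mapsto 1/(1+\exp(-(m-t)/\texttt{temp}))$ has derivative in $t$ bounded by $1/(4\,\texttt{temp})$. Hence
\[
|\Phi(t)-\Phi(t')|\le \frac{1}{4\,\texttt{temp}}\,|t-t'|,
\]
and also $|\Phi(t)|\le 1$ for every $t$. Because $\texttt{temp}$ is a fixed constant (set to $1$ in the experiments), the factor $1/(4\,\texttt{temp})$ is absorbed into the $\tilde O$.

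\textbf{Expectation step.} I split the expectation on the high-probability event $E_\delta$ of Proposition~\ref{proposition:quantile_gap}. Its proof gives, on $E_\delta$ and once $s$ is large enough that $\varepsilon_s(\delta)+1/s\le g_0\eta$,
\[
|\widehat\tau^s_\alpha-\tau_\alpha|\le \frac{1}{g_0}\Big(\sqrt{\log(2/\delta)/(2s)}+1/s\Big).
\]
Off $E_\delta$, I use the crude bound $|\Phi(\tau_\alpha)-\Phi(\widehat\tau^s_\alpha)|\le 2$. This gives
\[
\E|\Phi(\tau_\alpha)-\Phi(\widehat\tau^s_\alpha)|\le \frac{1}{4\,\texttt{temp}\,g_0}\Big(\sqrt{\tfrac{\log(2/\delta)}{2s}}+\tfrac1s\Big)+2\delta.
\]
Choosing $\delta=1/\sqrt{s}$ yields $\tilde O(1/\sqrt{s})$, with a $\sqrt{\log s}$ factor hidden in the $\tilde O$. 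For small $s$, where the condition $\varepsilon_s(\delta)+1/s\le g_0\eta$ fails, the trivial bound of $2$ is itself $O(1/\sqrt{s})$ up to constants depending on $g_0,\eta$, so the claim holds for all $s$.

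\textbf{Main obstacle.} The delicate point is converting the high-probability statement of Proposition~\ref{proposition:quantile_gap} into an expectation bound. This requires two things: the boundedness of $\Phi$ (from $|M_f|\le1$ and sigmoid values in $(0,1)$), and the $\delta$-dependence of the proposition's bound being only logarithmic, so that taking $\delta\asymp s^{-1/2}$ costs just a log factor.

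A second option is to integrate the tail bound directly, using $\E|X|=\int_0^\infty \P(|X|>u)\,du$ with the DKW tail. However, that route must handle the region outside the neighborhood $U$, where the density lower bound $g_0$ is unavailable; the bounded-$\Phi$ split above sidesteps this. Finally, I note that the constants depend on $1/g_0$ and $1/\texttt{temp}$; I would state them explicitly before absorbing them into $\tilde O$.
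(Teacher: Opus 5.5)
Your proof is correct and follows the paper's route. It combines the $\frac{1}{4\texttt{temp}}$-Lipschitz continuity of the smoothed risk in the threshold (Lemma~\ref{lemma:CMRM_batch_lipchitz}, which the paper proves exactly as you sketch from $|M_f|\le 1$ and the sigmoid derivative bound $1/4$) with the quantile concentration of Proposition~\ref{proposition:quantile_gap}. Your treatment of the expectation is more careful than the paper's: the paper asserts $|\Phi(\tau_\alpha)-\E\,\Phi(\widehat\tau^s_\alpha)|\le \frac{1}{4\texttt{temp}}|\tau_\alpha-\E\,\widehat\tau^s_\alpha|$, which does not follow from Lipschitzness alone, and then applies a high-probability bound directly to this deterministic quantity, whereas your Jensen step plus the bounded split on $E_\delta$ with $\delta=1/\sqrt{s}$ supplies the missing conversion.
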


\begin{proof}
of Lemma \ref{lemma:CMRM_batch_gap}.

Before proving Lemma \ref{lemma:CMRM_batch_gap}, we first show the following technical lemma:

\begin{lemma}
\label{lemma:CMRM_batch_lipchitz}
(Lipschitz continuity of $\widetilde \calL_{\crf}^{\noisy}$ in $\tau$)
Define the smoothed conformal margin risk on the noisy distribution as a function of the threshold $\tau$:
\begin{align*}
\widetilde \calL_{\crf}^{\noisy}(f; \tau)
:= 
- \E_{(X, \widetilde Y) \sim \calP_\noisy}
[ 
M_f(X, \widetilde Y) \cdot \tilde \indicator [ M_f(X, \widetilde Y) \geq \tau ]
]
,
\end{align*}
where the softened indicator is the Sigmoid function
$\tilde \indicator(M_f \geq \tau) := \sigma( \frac{M_f(X, \widetilde Y) - \tau}{\temp} )$,
and 
$\sigma(u) = \frac{1}{1+\exp(-u)}$.
Assume $| M_f(X, \widetilde Y) | \leq 1$ almost surely and $\temp > 0$.
Then, for any fixed $f$, the function $\tau \rightarrow \widetilde \calL_\crf^{\noisy}(f; \tau)$ is $L$-Lipscthiz continuous with $L = 1/(4 \temp)$.
\end{lemma}

The proof of Lemma \ref{lemma:CMRM_batch_lipchitz} is deferred to the end of this proof.

Now we begin to prove Lemma \ref{lemma:CMRM_batch_gap}.
\begin{align*}
&
\Bigg | \widetilde \calL^{\noisy}_{\text{cr}}(f) - 
\mathop{\E}\limits_{\widehat \tau^s_{\alpha} (f) \sim \calT_{\alpha}(f)}\bigg[ 
- \E_{(X,\widetilde Y) \sim \calP_{\noisy}} \left[ M_f(X,\widetilde Y) \cdot
\widetilde \indicator \big [M_f(X,\widetilde Y) \ge \widehat \tau^s_\alpha(f) \big ] \right] \bigg] \Bigg | 
\\
= &
\Bigg | - \E_{(X,\widetilde Y) \sim \calP_{\noisy}} M_f(X,\widetilde Y) \cdot \widetilde \indicator[M_f(X,\widetilde Y) \ge \tau_\alpha(f)]
- 
\E_{(X,\widetilde Y) \sim \calP_{\noisy}} \left[ M_f(X,\widetilde Y) \cdot
\widetilde \indicator \big [M_f(X,\widetilde Y) \ge \widehat \tau^s_\alpha(f) \big ] \right] \bigg]\Bigg | 
\\
\leq &
\frac{1}{ 4 \texttt{temp} } \cdot
\bigg | \tau_\alpha(f) - 
\mathop{\E}\limits_{\widehat \tau^s_{\alpha} (f) \sim \calT_{\alpha}(f)}\Big[ \widehat \tau^s_\alpha(f) \Big] \bigg | 
\\
\leq &
\frac{1}{ 4 \texttt{temp} } \cdot \tilde O(1 / \sqrt{s} )
\\
\leq &
\tilde O(1 / \sqrt{s} ),
\end{align*}
where the first inequality is due to Lemma \ref{lemma:CMRM_batch_lipchitz}, the second inequality is due to Proposition \ref{proposition:quantile_gap}, and the last inequality is due to the setting $\texttt{temp} = 1.0$.

Therefore, we have:
\begin{align*}
\Bigg | \widetilde \calL^{\noisy}_{\text{cr}}(f) - 
\mathop{\E}\limits_{\widehat \tau^s_{\alpha} (f) \sim \calT_{\alpha}(f)}\bigg[ 
- \E_{(X,\widetilde Y) \sim \calP_{\noisy}} \left[ M_f(X,\widetilde Y) \cdot
\widetilde \indicator \big [M_f(X,\widetilde Y) \ge \widehat \tau^s_\alpha(f) \big ] \right] \bigg] \Bigg | 
\leq \tilde O(1 / \sqrt{s} ).
\end{align*}

\end{proof}

Now we begin to prove Lemma \ref{lemma:CMRM_batch_lipchitz}.


\begin{proof}
(of Lemma \ref{lemma:CMRM_batch_lipchitz})

Let $Z := M_f(X, \widetilde Y), u := \frac{Z-\tau}{\temp}$.

We can rewrite the interested function as 
$\widetilde \calL_\crf^{\noisy}(f; \tau) = - \E [Z \cdot \sigma(u) ]$,
where the expectation is taken over $(X, \widetilde Y) \sim \calP_{\noisy}$.

First, we develop the differentiation of $\widetilde \calL_\crf^{\noisy}$ w.r.t. $\tau$ by applying the chain rule as follows
\begin{align}\label{eq:L_noisy_derivative_elaboration}
\frac{ \partial \widetilde \calL_\crf^{\noisy} (f; \tau) }{ \partial \tau } 
=
- \E[ Z \cdot \frac{ \partial \sigma(u) }{ \partial \tau } ]
.
\end{align}
Since $u = ( Z - \tau ) / \temp$, we have
$\frac{\partial u}{\partial \tau} = - 1 / \temp$.

The derivative of the Sigmoid function is 
$$
\frac{ \partial \sigma(u) }{ \partial u }
=
\frac{ \exp(-u) }{ ( 1 + \exp(-u))^2 }
.
$$

Thus, by using the chain rule, we have
\begin{align*}
\frac{ \partial \sigma(u) }{ \partial \tau }
=
\frac{ \partial \sigma(u) }{ \partial u }
\cdot \frac{ \partial u }{ \partial \tau }
=
- \frac{1}{\temp} \cdot \frac{ \exp(-u) }{ ( 1 + \exp(-u) )^2 }
.
\end{align*}

Substituting the above equality back into Equation (\ref{eq:L_noisy_derivative_elaboration}), 
we have 
\begin{align}\label{eq:L_noisy_derivative_elaboration2}
\frac{ \partial \widetilde \calL_\crf^{\noisy} }{ \partial \tau } (f; \tau)
=
- \E [
Z \cdot ( - \frac{1}{\temp} \cdot \frac{ \exp(-u) }{ ( 1 + \exp(-u) )^2 } )
]
=
\frac{ 1 }{ \temp } \cdot \E[ Z \frac{ \exp(-u) }{ ( 1 + \exp(-u) )^2 } ]
.
\end{align}

Then, we would like to bound the derivative.
Taking absolute value of (\ref{eq:L_noisy_derivative_elaboration2}) and using $|Z| \leq 1$, we have
\begin{align*}
| \frac{ \partial \widetilde \calL_\crf^{\noisy} (f; \tau) }{ \partial \tau } |
\leq 
\frac{1}{\temp} \cdot \E[ \frac{ \exp(-u) }{ ( 1 + \exp(-u) )^2 } ]
.
\end{align*}

Recall that $h(u) := \frac{ \exp(-u) }{ ( 1 + \exp(-u) )^2 } = \sigma(u) \cdot (1-\sigma(u))$,
which is the standard logistic hat function.
It satisfies 
\begin{align*}
0 \leq h(u) \leq 1/4, \text{ with } h(0) = 1/4.
\end{align*}

Therefore, we have $\E[h(u)] \leq 1/4$.
Hence, we conclude $| \frac{ \partial \widetilde \calL_\crf^{\noisy} (f; \tau) }{ \partial \tau} | \leq 1 / ( 4 \temp )$.

Finally, to determine the Lipschitz parameter, for $\tau_1, \tau_2$, we have
\begin{align*}
| \widetilde \calL_\crf^{\noisy}(f; \tau_1) - \widetilde \calL_\crf^{\noisy}(f; \tau_2) |
\leq 
\sup_\tau | \frac{ \partial \widetilde \calL_\crf^{\noisy}(f; \tau) }{ \partial \tau } | \cdot | \tau_1 - \tau_2 |
\leq 
\frac{1}{4\temp} | \tau_1 - \tau_2 |
.
\end{align*}
Thus, it shows $\widetilde \calL_\crf^{\noisy}(f; \tau)$ is Lipschitz in $\tau$ with constant $L = 1/(4\temp)$.
\end{proof}

\subsubsection{ Proof for Lemma \ref{lemma:CMRM_surrogate_gap} }

\begin{lemmainappendix}
\label{lemma:CMRM_surrogate_gap_appendix}
(Lemma \ref{lemma:CMRM_surrogate_gap} restated.)
Suppose the assumptions in Proposition~\ref{proposition:quantile_gap} hold.
Then, for any $f$, there exists a constant $C > 0$ independent from $f$ and $\temp$ such that:
\begin{align*}
| \calL^{\noisy}_{\text{cr}}(f) - \widetilde \calL^{\noisy}_{\text{cr}}(f) | 
\leq 
C \temp
.
\end{align*}
In particular, with Big-O notation, we have
\begin{align*}
| \calL^{\noisy}_{\text{cr}}(f) - \widetilde \calL^{\noisy}_{\text{cr}}(f) | \leq O(\texttt{temp}).
\end{align*}
\end{lemmainappendix}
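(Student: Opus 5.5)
The plan is to write the gap as the expectation of a single pointwise error and then split that expectation into a region near the threshold and a region away from it. Set $Z := M_f(X,\widetilde Y)$ and $\tau := \tau_\alpha(f)$. Then
\[
\calL^{\noisy}_{\text{cr}}(f) - \widetilde \calL^{\noisy}_{\text{cr}}(f)
= \E\Big[ Z\,\big(\sigma\big(\tfrac{Z-\tau}{\temp}\big) - \indicator[Z\ge\tau]\big)\Big],
\]
with the expectation over $\calP_{\noisy}$. Since margins are differences of probabilities, $|Z|\le 1$, as in Lemma~\ref{lemma:CMRM_batch_lipchitz}. This gives
\[
|\calL^{\noisy}_{\text{cr}}(f) - \widetilde \calL^{\noisy}_{\text{cr}}(f)| \le \E[\phi((Z-\tau)/\temp)], \qquad \phi(u) := |\sigma(u) - \indicator[u\ge 0]| = \sigma(-|u|) \le e^{-|u|}.
\]
So the task reduces to showing $\E[e^{-|Z-\tau|/\temp}] = O(\temp)$.

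For the near-threshold region, I use the regularity assumption of Proposition~\ref{proposition:quantile_gap}. As in Step 3 of its proof, choose $\eta>0$ and $U=(\tau-\eta,\tau+\eta)$ on which $G$ has a continuous density $g$. By continuity, $g\le g_1<\infty$ on a slightly smaller closed interval; I shrink $\eta$ if necessary so that this holds on all of $U$. Then
\[
\E\big[e^{-|Z-\tau|/\temp}\,\indicator[Z\in U]\big] = \int_{\tau-\eta}^{\tau+\eta} e^{-|t-\tau|/\temp} g(t)\,dt \le g_1 \int_{\R} e^{-|v|/\temp}\,dv = 2 g_1 \temp.
\]

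For the far region, where $|Z-\tau|\ge\eta$, the integrand is at most $e^{-\eta/\temp}$. Using $e^{-x}\le 1/x$ for $x>0$, this is at most $\temp/\eta$. Adding the two pieces gives the claim with $C = 2g_1 + 1/\eta$. This $C$ depends on the local regularity of $G$ near $\tau_\alpha(f)$, namely the density bound $g_1$ and the neighborhood radius $\eta$, and not on $\temp$.

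The main obstacle is the claim that $C$ is independent of $f$. The assumption is stated for each fixed $f$, so $g_1$ and $\eta$ could depend on $f$. I would handle this by reading the hypothesis of Proposition~\ref{proposition:quantile_gap} as holding uniformly over $\calF$: a common neighborhood radius $\eta$ and a common upper bound $g_1$ on the density. I would state this uniformity explicitly, because it is what justifies the "independent from $f$" part of the lemma. Note that the positive lower bound $g_0$ is not needed here; only the upper bound on the density near $\tau$ is used.
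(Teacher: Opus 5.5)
Your proof is correct, and it takes a somewhat different, more careful route than the paper's. Both arguments rest on the same pointwise bound $|\sigma(u)-\indicator[u\ge 0]|\le e^{-|u|}$ and on $|M_f|\le 1$; the difference is in how the density is controlled. The paper asserts that the assumptions of Proposition~\ref{proposition:quantile_gap} give a \emph{global} bound $g_\infty=\sup_{t\in\R}g(t)<\infty$. It then uses $\tau_\alpha(f)\in(-1,1)$ to confine $\Delta_f=M_f-\tau_\alpha(f)$ to $(-2,2)$ and integrates once to get $C=2g_\infty$. That global bound is not implied by the hypothesis, which only gives continuous differentiability of $G$ near $\tau_\alpha(f)$; away from the threshold the margin law could have an unbounded density or even atoms. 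Your near/far split needs only the local bound $g_1$ on $U$ and handles the rest with $e^{-\eta/\temp}\le\temp/\eta$. It therefore proves the lemma under exactly the stated assumptions, at the cost of the extra $1/\eta$ in the constant. The paper's version gets the cleaner constant $2g_\infty$ from a single integral, but only under an unstated global regularity assumption. On $f$-independence, the paper's $2g_\infty$ depends on $f$ through the law of $M_f$, just as your $g_1$ and $\eta$ do, and the paper does not address this. Your explicit uniform-over-$\calF$ reading of the hypothesis is the right way to justify that clause; as written, the quantifier order ``for any $f$, there exists $C$'' does not by itself force uniformity.
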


\begin{proof}
(of Lemma \ref{lemma:CMRM_surrogate_gap}.)

Before proving Lemma \ref{lemma:CMRM_surrogate_gap}, we first present the following technical lemma:

\begin{lemma}
\label{lemma:tech_proof_for_lemma_surrogate2}
(Upper bounding the gap between hard and soft indicator functions)
Let 
\begin{align*}
\Gamma(\Delta)
:=
\indicator[ \Delta \geq 0 ] - \tilde \indicator[ \Delta \geq 0 ],
\text{ and }
\tilde \indicator[ \Delta \geq 0 ] = 1 / ( 1 + \exp(-\Delta / \temp) )
,
\end{align*}
where $\temp > 0$ is the temperature parameter.
Then, for every $\Delta \in \R$, the following inequality holds:
\begin{align*}
| \Gamma(\Delta) | 
\leq 
\exp( - | \Delta | / \temp )
.
\end{align*}
\end{lemma}

Now we begin to prove Lemma \ref{lemma:CMRM_surrogate_gap}.

Recall the definitions of the noisy conformal margin risk and its smoothed variant:
\begin{align*}
\calL_\crf^{\noisy}(f)
=
- \E_{(X, \widetilde Y) \sim \calP_{\noisy}} 
[ M_f(X, \widetilde Y) \cdot 
\indicator[ M_f(X, \widetilde Y) \geq \tau_\alpha(f) ] ]
,\\
\widetilde \calL_\crf^{\noisy}(f)
=
- \E_{(X, \widetilde Y) \sim \calP_{\noisy}} 
[ M_f(X, \widetilde Y) \cdot 
\widetilde \indicator[ M_f(X, \widetilde Y) \geq \tau_\alpha(f) ] ]
.
\end{align*}
where the soft smoothed indicator function is $\widetilde \indicator[ M \geq \tau ] = \frac{ 1 }{ 1 + \exp( - ( M - \tau ) / \temp ) }$.

For a fixed $f$, define the margin–threshold difference as 
\begin{align*}
\Delta_f(X, \widetilde Y)
:=
M_f(X, \widetilde Y) - \tau_\alpha(f)
,
\end{align*}
and define the difference between the hard and soft smoothed indicators as
\begin{align*}
\Gamma(\Delta_f(X. \widetilde Y))
:=
\indicator[ \Delta_f(X, \widetilde Y) \geq 0 ]
- \widetilde \indicator[ \Delta_f(X, \widetilde Y) \geq 0 ]
.
\end{align*}

By assumption, $M_f(X, \widetilde Y) \in (-1, 1)$.
Since $\tau_\alpha(f)$ is the $\alpha$-quantile of $M_f(X, \widetilde Y)$ under $\calP_\noisy$,
we also have $\tau_\alpha(f) \in (-1, 1)$.
Hence we have
\begin{align*}
\Delta_f(X, \widetilde Y)
\in (-2, 2) \text{ almost surely.}
\end{align*}

In what follows, we use three steps to prove the desired result.

\paragraph{Step 1: Reducing to bounding $\E[|\Gamma(\Delta_f)|]$.}

We first write the difference between the two risks explicitly as follows:
\begin{align*}
\calL_\crf^{\noisy}(f) - \widetilde \calL_\crf^{\noisy}(f)
= &
- \E_\noisy [ M_f(X, \widetilde Y) \cdot \indicator[ M_f(X, \widetilde Y) \geq \tau_\alpha(f) ] ]
+ \E_\noisy [ M_f(X, \widetilde Y) \cdot \widetilde \indicator[ M_f(X, \widetilde Y) \geq \tau_\alpha(f) ] ]
\\
= &
\E_\noisy [ M_f(X, \widetilde Y) \cdot ( \widetilde \indicator[ M_f(X, \widetilde Y) \geq \tau_\alpha(f) ] - \indicator[ M_f(X, \widetilde Y) \geq \tau_\alpha(f) ] ) ]
\\
= &
-E_{\noisy} [ M_f(X, \widetilde Y) \cdot \Gamma (\Delta_f(X, \widetilde Y) ) ]
.
\end{align*}

Taking the absolute values and using $| M_f(X, \widetilde Y)| \leq B$, we have
\begin{align*}
| \calL_\crf^{\noisy}(f) - \widetilde \calL_\crf^{\noisy}(f) |
= &
| \E_{\noisy} [ M_f(X, \widetilde Y) \cdot \Gamma (\Delta_f(X, \widetilde Y) ) ] |
\leq 
\E_{\noisy} [ | M_f(X, \widetilde Y) | \cdot | \Gamma(\Delta_f(X, \widetilde Y ) ) | ]
\\
\leq &
\E_{\noisy} [ B | \Gamma(\Delta_f(X, \widetilde Y)) | ]
.
\end{align*}
Therefore, it suffices to upper bound $\E_{\noisy}[ | \Gamma (\Delta_f) | ]$.

\paragraph{Step 2: Use the density of $\Delta_f$ and Lemma \ref{lemma:tech_proof_for_lemma_surrogate2}.}

Let $G(t)$ and $g(t) = G'(t)$ denote the CDF and the density of $M_f(X, \widetilde Y)$ under $\calP_{\noisy}$, respectively.
By the assumptions in Proposition \ref{proposition:quantile_gap}, $g$ is continuous and bounded, i.e.,
\begin{align*}
g_\infty 
:= 
\sup_{t \in \R} g(t) 
< \infty
\end{align*}

For fixed $f$, $\tau_\alpha(f)$ is a constant, so the random variable $\Delta_f = M_f - \tau_\alpha(f)$ has density which is also continuous and bounded with $\| g_{\Delta_f} \|_\infty \leq g_\infty$

Therefore, we can write 
\begin{align*}
\E_\noisy[ | \Gamma (\Delta_f) | ]
=
\int_{-2}^{2} | \Gamma(\delta) | \cdot g_{\Delta_f}(\delta) d\delta
,
\end{align*}
where we use the fact that $\Delta_f \in (-2, 2)$ almost surely.

By Lemma \ref{lemma:tech_proof_for_lemma_surrogate2}, for all $\delta \in \R$, we have 
\begin{align*}
| \Gamma(\delta) |
\leq 
\exp( - | \delta | / \temp )
.
\end{align*}

Combining this with the bound on $g_{\Delta_f}$, we derive
\begin{align*}
\E_{\noisy} [ | \Gamma(\Delta_f) | ]
=
\int_{-2}^2 | \Gamma(\delta) | g_{\delta_f}(\delta) d\delta
\leq 
g_\infty \cdot \int_{-2}^2 \exp( - | \delta| / \temp ) d\delta
.
\end{align*}

The integral can be computed explicitly:
\begin{align*}
\int_{-2}^2 \exp( - \frac{|\delta|}{\temp} ) d \delta
=
2 \int_{0}^2 \exp(-\delta / \gamma) d\delta
=
2 \temp ( 1 - \exp(-2 / \temp ))
\leq 2 \temp
,
\end{align*}
where we used $1 - \exp(2/\temp) \leq 1$.

Thus, we have $\E_{\noisy}[ | \Gamma(\Delta_f) | ] \leq 2 g_\infty \temp$.

\paragraph{Step 3: Conclude the upper bound.}

Putting everything together, we derive
\begin{align*}
||
\leq 
\E_{\noisy} [ | \Gamma(\Delta_f) | ]
\leq 
2 g_\infty \temp
.
\end{align*}

Hence the difference between the hard formal margin risk and its smoothed counter is $O(\temp)$.

Recall that the $M_f(X,\widetilde Y)$ is the confidence gap between the confidence score of the observed label and the highest confidence among other candidate labels.
Thus, $M_f(X,\widetilde Y) \in (-1, 1)$.
As $\tau_\alpha(f)$ is the $\alpha$-quantile of $M_f(X,\widetilde Y)$, $\tau_\alpha(f)\in (-1, 1)$ and $\Delta_f(X,Y) \in (-2,2)$.

We also recall that $G(t)$ is the CDF of $M_f(X,\widetilde Y)$ under $\mathcal{P}_{\noisy}$, and $g(t) = G'(t)$ is its density, which is assumed to be continuous and bounded.
Denote $g_{\infty} = \sup_t g(t) < +\infty$.
Since $\tau_\alpha(f)$ is a constant for fixed $f$, the density of $\Delta_f$ is $g_{\Delta_f}(t) = g(t+\tau_\alpha(f))$, which satisfies $\| g_{\Delta_f} \|_\infty = g_{\infty}$.

Then, we have:
\begin{align*}
&
| \calL^{\noisy}_{\text{cr}}(f) - \widetilde \calL^{\noisy}_{\text{cr}}(f) |
\\
= &
\Big | \E_{(X,\widetilde Y) \sim \calP_{\noisy}} \big [ M_f(X,\widetilde Y) \cdot \widetilde \indicator[M_f(X,\widetilde Y) \ge \tau_\alpha(f)] \big ] - \E_{(X,\widetilde Y) \sim \calP_{\noisy}} \big [ M_f(X,\widetilde Y) \cdot \indicator[M_f(X,\widetilde Y) \ge \tau_\alpha(f)] \big ] \Big | 
\\
= &
\bigg | \E_{(X,\widetilde Y) \sim \calP_{\noisy}} \Big [ M_f(X,\widetilde Y) \cdot \big (\widetilde \indicator[M_f(X,\widetilde Y) \ge \tau_\alpha(f)] - \indicator[M_f(X,\widetilde Y) \ge \tau_\alpha(f)] \big) \Big ] \bigg | 
\\
\leq &
| \E_{(X,\widetilde Y) \sim \calP_{\noisy}} \big [ M_f(X,\widetilde Y) \big ] | \cdot
\Big | \E_{(X,\widetilde Y) \sim \calP_{\noisy}} \Big [ \big (\widetilde \indicator[M_f(X,\widetilde Y) \ge \tau_\alpha(f)] - \indicator[M_f(X,\widetilde Y) \ge \tau_\alpha(f)] \big ) \Big ] \Big | 
\\
\leq &
\Big | \E_{(X,\widetilde Y) \sim \calP_{\noisy}} \Big [ \big (\widetilde \indicator[M_f(X,\widetilde Y) \ge \tau_\alpha(f)] - \indicator[M_f(X,\widetilde Y) \ge \tau_\alpha(f)] \big ) \Big ] \Big |
\\
\leq &
\E_{(X,\widetilde Y) \sim \calP_{\noisy}} 
\Big | \widetilde \indicator[M_f(X,\widetilde Y) \ge \tau_\alpha(f)] - \indicator[M_f(X,\widetilde Y) \ge \tau_\alpha(f)] \Big |
\\
= &
\E_{(X,\widetilde Y) \sim \calP_{\noisy}} 
\Big | \widetilde \indicator[\Delta_f(X,Y) \ge 0] - \indicator[\Delta_f(X,Y) \ge 0] \Big |
\\
= &
\E [ \Gamma(\Delta_f(X,Y))]
\\
= &
\int^2_{-2} \Gamma(\Delta_f(X,Y)) \partial_{\Delta_f(X,Y)} \Gamma(\Delta_f(X,Y)) d\Delta_f(X,Y)
\\
\leq &
\int^2_{-2} \exp \Big (- \frac{|\Delta_f(X,Y)|}{\texttt{temp}} \Big ) \partial_{\Delta_f(X,Y)} \Gamma(\Delta_f(X,Y)) d\Delta_f(X,Y)
\\
\leq &
g_{\infty} \cdot 
\int^2_{-2} \exp \Big (- \frac{|\Delta_f(X,Y)|}{\texttt{temp}} \Big ) d\Delta_f(X,Y)
\\
= &
g_{\infty} \cdot 2 \texttt{temp} (1 - \exp(-2/\texttt{temp}))
\\
\leq &
O(\texttt{temp}),
\end{align*}
where the first inequality is due to the Hölder's inequality, the second inequality is due to the Jansen's inequality and expectation is convex function, the third inequality is due to $M_f(X,\widetilde Y) \in (-1, 1)$, the fourth inequality is due to Lemma \ref{lemma:tech_proof_for_lemma_surrogate2}, the fifth inequality is due to the upper bound for the gradient of $\Delta_f(X,Y)$ is $g_{\infty}$, and the last inequality is due to $1 - \exp(-2/\texttt{temp}) \leq 1$.

Therefore, we have:
\begin{align*}
| \calL^{\noisy}_{\text{cr}}(f) - \widetilde \calL^{\noisy}_{\text{cr}}(f) | \leq O(\texttt{temp}).
\end{align*}
\end{proof}

Now we begin to prove Lemma \ref{lemma:tech_proof_for_lemma_surrogate2}.

\begin{proof}
(of Lemma \ref{lemma:tech_proof_for_lemma_surrogate2})

We analyze the two cases, i.e., $\Delta \geq 0$ and $\Delta < 0$ separately below.

\textbf{Case 1:} $\Delta \geq 0$

In this case, by using the condition $\Delta \geq 0$, we first reformulate $\Gamma(\Delta)$ as follows
\begin{align*}
\Gamma(\Delta)
=
1 - \frac{ 1 }{ 1 + \exp(-\Delta / \temp) }
=
\frac{ \exp( - \Delta / \temp ) }{ 1 + \exp( - \Delta / \temp ) }
.
\end{align*}

Since the denominator is at least $1$ (due to $\exp(\cdot) > 0$),
we have the following inequalities
\begin{align}\label{eq:bounding_Gamma_case1}
0 
< 
\Gamma(\Delta)
=
| \Gamma(\Delta) |
\leq 
\exp( - \Delta / \temp )
=
\exp( - | \Delta | / \temp )
,
\end{align}
where the last equality is due to $\Delta \geq 0$, the condition for this {\bf case 1}.
Thus, the desired inequality holds for all $\Delta > 0$.

\textbf{Case 2:} $\Delta < 0$

In this case, by using the condition $\Delta < 0$, we reformulate $\Gamma(\Delta)$ as follows
\begin{align*}
\Gamma(\Delta)
=
0 - \frac{ 1 }{ 1 + \exp( - \Delta / \temp ) }
=
- \frac{ 1 }{ 1 + \exp( - \Delta / \temp ) }
< 0
.
\end{align*}

Then, with the condition $\Delta < 0$, we have $| \Delta | = - \Delta$.
To upper bound $|\Gamma(\Delta)|$ in this case, it suffixes to show:
\begin{align}\label{eq:bounding_Gamma_case2}
| \Gamma(\Delta) |
=
- \Gamma(\Delta)
=
\frac{ 1 }{ 1 + \exp( -\Delta / \temp ) }
\leq 
\exp( \Delta/\temp )
=
\exp( - | \Delta | / \temp )
,
\end{align}
where the last inequality is due to $\exp(a) \leq 1 + \exp(a)$ for any $a \in \R$.

Finally, combining (\ref{eq:bounding_Gamma_case1}) and (\ref{eq:bounding_Gamma_case2}) implies that for any $\Delta \in \R$, the following inequality holds:
\begin{align*}
| \Gamma(\Delta) | 
\leq 
\exp( - |\Delta| / \temp )
.
\end{align*}
\end{proof}

\subsubsection{ Proof for Lemma \ref{lemma:CMRM_distribution_gap}}

\begin{lemmainappendix}
\label{lemma:CMRM_population_gap_appendix}
(Lemma \ref{lemma:CMRM_distribution_gap} restated.)
Define $\delta_w$ as the average Wasserstein-1 distance between posteriors under the training and test domains, where $\delta_w = \mathbb{E}_{X \sim \mathcal{P}(X)} \Big[ W_1\big( \P_{\calP_{\noisy}}(\cdot \mid X),\; \P_{\calP}(\cdot \mid X) \big) \Big]$.
Then the following inequality holds:
\[
\big|\calL_{\text{cr}}(f) - \calL^{\noisy}_{\text{cr}}(f) \big| 
\;\leq\;
2\delta_w + \alpha.
\]
\end{lemmainappendix}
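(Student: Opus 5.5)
We would prove the statement by splitting the gap into two pieces through an intermediate quantity: the \emph{unconditional} clean risk
\[
\calL^{\text{unc}}_{\text{cr}}(f) := -\E_{(X,Y)\sim\calP}\big[M_f(X,Y)\,\indicator[M_f(X,Y)\ge \tau_\alpha(f)]\big].
\]
The first piece, $|\calL_{\text{cr}}(f)-\calL^{\text{unc}}_{\text{cr}}(f)|$, accounts for the difference between conditioning on the retained region and multiplying by its indicator. Writing $p:=\P_{\calP}[M_f(X,Y)\ge\tau_\alpha(f)]$, we have $\calL_{\text{cr}}(f)=\calL^{\text{unc}}_{\text{cr}}(f)/p$. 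Since $|M_f|\le 1$, this gives
\[
|\calL_{\text{cr}}-\calL^{\text{unc}}_{\text{cr}}| \le \frac{1-p}{p}\,|\calL^{\text{unc}}_{\text{cr}}| \le 1-p .
\]
Under the noisy distribution, the retained mass is exactly $1-\alpha$, because $G$ is continuous at $\tau_\alpha(f)$. So $1-p$ equals $\alpha$ plus the difference in retained mass between $\calP$ and $\calP_{\noisy}$. We plan to absorb that difference into the $\delta_w$ term, or alternatively to control it by $\delta_w$ using the positive, bounded density near $\tau_\alpha(f)$ from Proposition~\ref{proposition:quantile_gap}. This step is where the additive $\alpha$ in the bound comes from.

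The second piece, $|\calL^{\text{unc}}_{\text{cr}}(f)-\calL^{\noisy}_{\text{cr}}(f)|$, is a pure label-distribution shift, since both distributions share the marginal $\calP(X)$. We condition on $X$ and write each risk as $\E_{X}\big[\E_{y\sim \calP_\bullet(\cdot\mid X)}\phi_X(y)\big]$ with $\phi_X(y)=-M_f(X,y)\indicator[M_f(X,y)\ge\tau_\alpha(f)]$. We then use Kantorovich--Rubinstein duality on the label space with its metric; for the discrete metric this reduces to total variation. This gives
\[
\big|\E_{\calP_{\noisy}(\cdot\mid X)}\phi_X-\E_{\calP(\cdot\mid X)}\phi_X\big| \le \mathrm{Lip}(\phi_X)\,W_1\big(\calP_{\noisy}(\cdot\mid X),\calP(\cdot\mid X)\big).
\]
Because $\phi_X$ takes values in $[-1,1]$, its oscillation is at most $2$. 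Hence with the discrete metric $\mathrm{Lip}(\phi_X)\le 2$, which produces the factor $2\delta_w$ after averaging over $X\sim\calP(X)$.

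Combining the two pieces with the triangle inequality gives $|\calL_{\text{cr}}(f)-\calL^{\noisy}_{\text{cr}}(f)|\le 2\delta_w+\alpha$, up to the mass-mismatch term, which is itself controlled by $\delta_w$.

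The main obstacle is the first piece. The conditioning denominator $p$ is defined under the clean distribution, while $\tau_\alpha(f)$ is a noisy quantile. Getting exactly $\alpha$, with no extra constant on $\delta_w$, requires either bounding $|p-(1-\alpha)|$ by a total-variation-type quantity already counted in $2\delta_w$, or sharpening the estimate $\frac{1-p}{p}|\calL^{\text{unc}}_{\text{cr}}|$. If the constants do not close exactly, our fallback is to state the bound as $O(\delta_w+\alpha)$, which is all Theorem~\ref{theorem:learning_bound} uses.
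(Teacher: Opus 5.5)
Your route is the paper's route. The paper splits through the same unconditional clean term into pieces $A$ and $B$. It bounds $A\le 1-P\le\alpha+\delta_w$ (via $|P-(1-\alpha)|\le\delta_w$, since the indicator has oscillation $1$ under the discrete metric) and $B\le 2\delta_w$ by Kantorovich--Rubinstein, exactly as you do, and then asserts $2\delta_w+\alpha$.

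Your suspicion is justified: those two bounds add to $\alpha+3\delta_w$. So the paper's final ``combining'' step is an arithmetic slip, and your proposal as written also proves only $\alpha+3\delta_w$. The mismatch cannot be ``absorbed'' afterwards. The density assumption does not help either, since the mismatch is already at most $\delta_w$. The real loss is that the separate bounds let each disagreeing pair $Y\neq\widetilde Y$ contribute up to $1$ through the retained-mass change and up to $2$ through the value change. Counted jointly, such a pair contributes at most $1-\tau_\alpha(f)\le 2$.

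The missing idea is to bound both effects on one coupling. For each $X$, couple $Y\sim\calP(\cdot\mid X)$ and $\widetilde Y\sim\calP_{\noisy}(\cdot\mid X)$ maximally, so that $\P(Y\neq\widetilde Y)=\delta_w$. Set $M_c=M_f(X,Y)$, $M_n=M_f(X,\widetilde Y)$, $R_c=\indicator[M_c\ge\tau_\alpha(f)]$ and $R_n=\indicator[M_n\ge\tau_\alpha(f)]$. Center at $\mu:=\E[M_c\mid R_c=1]=-\calL_{\text{cr}}(f)$. Using $\E[R_n]=1-\alpha$ and $\E[(\mu-M_c)R_c]=0$,
\[
\calL^{\noisy}_{\text{cr}}(f)-\calL_{\text{cr}}(f)=\mu-\E[M_nR_n]=\alpha\mu+\E\big[\big((\mu-M_n)R_n-(\mu-M_c)R_c\big)\indicator[Y\neq\widetilde Y]\big].
\]
Both $(\mu-M_n)R_n$ and $(\mu-M_c)R_c$ lie in $[\mu-1,\mu-\tau_\alpha(f)]$, which contains $0$ because $\tau_\alpha(f)\le\mu\le 1$. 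Hence the integrand is bounded by $1-\tau_\alpha(f)\le 2$, and
\[
|\calL_{\text{cr}}(f)-\calL^{\noisy}_{\text{cr}}(f)|\le\alpha|\mu|+(1-\tau_\alpha(f))\delta_w\le\alpha+2\delta_w .
\]
This constant is sharp: take clean margins near $-1$, a $\delta_w$-fraction of labels flipped to margins near $+1$, and $\alpha<1-\delta_w$. Your $O(\alpha+\delta_w)$ fallback is enough for Theorem~\ref{theorem:learning_bound}, but it is weaker than the lemma as stated.
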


\begin{proof}
of Lemma \ref{lemma:CMRM_distribution_gap}.

Before proving Lemma \ref{lemma:CMRM_distribution_gap}, we first show the following technical lemmas:

\begin{lemma}[Range of Probability]
\label{lemma:prob_upper_bound}
Define $\delta_w$ as the average Wasserstein-1 distance between posteriors under the training and test domains, where $\delta_w = \mathbb{E}_{X \sim \mathcal{P}(X)} \Big[ W_1\big( \P_{\calP_{\noisy}}(\cdot \mid X),\; \P_{\calP}(\cdot \mid X) \big) \Big]$. 
Then the probability $\P_{\calP} \big (M_f(X,y) \geq \tau_{\alpha}(f) \big ) \in [1-\alpha - \delta_w, 1-\alpha + \delta_w]$.
\end{lemma}

\begin{lemma}[ Wasserstein-1 distance bound of expected difference \citep{dudley2018real,arjovsky2017wasserstein}]
\label{lemma:kr_duality}
For any bounded measurable function $g: X \rightarrow \R$, we have: 
\begin{align*}
\Big \vert \E_{\calP_{\noisy}} [g] - \E_{\calP} [g]\Big \vert 
\leq 2 \| g\|_{\infty} W_1\big( \P_{\calP_{\noisy}}, \P_{\calP}\big).
\end{align*}
\end{lemma}

The proof of Lemma \ref{lemma:prob_upper_bound} is deferred to the end of this proof. 
Now we begin to prove Lemma \ref{lemma:CMRM_distribution_gap}.

Recall that $M_f(X,\widetilde Y) $ as the gap of confidence scores, where $| M_f(X,y)| \leq 1$ for any $(X, \widetilde Y)$.
We also recall that $\mathcal{L}_{\text{cr}}(f)
= - \E_{X \sim \calP} \left[ M_f(X,Y) \mid M_f(X,Y) \ge \tau_\alpha(f) \right]$.

Define $P = \P_{\calP} \big (M_f(X,Y) \geq \tau_{\alpha}(f) \big )$.
Thus, we have:
\begin{align}
\label{eq:proof_theo2_1}
&
\big| \calL_{\text{cr}}(f) - \calL^{\noisy}_{\text{cr}}(f)  \big| 
\nonumber \\
= &
\big| \E_{\calP} \left[ M_f(X,Y) \mid M_f(X,Y) \ge \tau_\alpha(f) \right] - \E_{\calP_{\noisy}} \left[ M_f(X,\widetilde Y) \cdot \indicator[M_f(X,\widetilde Y) \ge \tau_\alpha(f)] \right] \big| 
\nonumber \\
= &
\Bigg| \frac{\E_{\calP} \Big [ M_f(X,Y)  \indicator \big[ M_f(X,Y) \ge \tau_\alpha(f) \big] \Big ]}{\P_{\calP} \big (M_f(X,Y) \geq \tau_{\alpha}(f) \big )} - \E_{\calP_{\noisy}} \left[ M_f(X,\widetilde Y) \cdot \indicator[M_f(X,\widetilde Y) \ge \tau_\alpha(f)] \right] \Bigg|
\nonumber \\
= &
\bigg| \frac{1}{P}\E_{\calP} \Big [ M_f(X,Y) \indicator \big [ M_f(X,Y) \ge \tau_\alpha(f) \big ]  \Big] - \E_{\calP_{\noisy}} \Big[ M_f(X,\widetilde Y) \indicator \big [ M_f(X,\widetilde Y) \ge \tau_\alpha(f) \big ] \Big] \bigg| 
\nonumber \\
= &
\bigg| \Big (\frac{1}{P} - 1 \Big )\E_{\calP} \Big [ M_f(X,Y) \indicator \big [ M_f(X,Y) \ge \tau_\alpha(f) \big ]  \Big] + \E_{\calP} \Big [ M_f(X, Y) \indicator \big [ M_f(X,Y) \ge \tau_\alpha(f) \big ]  \Big] 
\nonumber \\
& 
- \E_{\calP_{\noisy}} \Big[ M_f(X,\widetilde Y) \indicator \big [ M_f(X,\widetilde Y) \ge \tau_\alpha(f) \big ] \Big] \bigg| 
\nonumber \\
\leq &
\underbrace{\bigg| \Big (\frac{1}{P} - 1 \Big )\E_{\calP} \Big [ M_f(X,Y) \indicator \big [ M_f(X,Y) \ge \tau_\alpha(f) \big ]  \Big]  \bigg|}_{A} + 
\nonumber \\
&
\underbrace{ \bigg|  \E_{\calP} \Big [ M_f(X,Y) \indicator \big [ M_f(X,Y) \ge \tau_\alpha(f) \big ]  \Big] - 
\E_{\calP_{\noisy}} \Big[ M_f(X,\widetilde Y) \indicator \big [ M_f(X,\widetilde Y) \ge \tau_\alpha(f) \big ] \Big] \bigg|}_{B} 
,
\end{align}
where the first inequality is due to the triangle inequality.

Then we analyze the upper bound of $A$ and $B$, respectively.

{\bf Part I: Upper bound of $A$} 

\begin{align}
\label{eq:proof_theo2_2}
&
\bigg| \Big (\frac{1}{P} - 1 \Big )\E_{\calP} \Big [ M_f(X, Y) \indicator \big [ M_f(X, Y) \ge \tau_\alpha(f) \big ]  \Big]  \bigg|
\nonumber \\
\leq &
\bigg| \Big (\frac{1}{P} - 1 \Big ) \bigg| \E_{\calP} \bigg| \Big [ M_f(X, Y) \indicator \big [ M_f(X, Y) \ge \tau_\alpha(f) \big ]  \Big]  \bigg|
\nonumber \\
= &
\bigg| \Big (\frac{1}{P} - 1 \Big ) \bigg| \E_{\calP} \Big| \big [ M_f(X, Y) \big] \Big| \indicator \big [ M_f(X, Y) \ge \tau_\alpha(f) \big ] 
\nonumber \\
\leq &
\bigg| \Big (\frac{1}{P} - 1 \Big ) \bigg| \E_{\calP} \indicator \big [ M_f(X, Y) \ge \tau_\alpha(f) \big ] 
\nonumber \\
= &
\bigg| \Big (\frac{1}{P} - 1 \Big ) \bigg| P
\nonumber \\
= &
\frac{\bigg| 1 - P \bigg| }{P} P
\nonumber \\
\leq &
\alpha+\delta_w, 
\end{align}
where the first inequality is due to the triangle inequality, the second inequality is due to $| M_f(X, Y)| \leq 1$, and the last inequality is due to the Lemma \ref{lemma:prob_upper_bound}.

{\bf Part II: Upper bound of $B$} 

For any fixed $X$, we define $\phi_X(Y) = M_f(X,Y) \indicator \big [ M_f(X, Y) \ge \tau_\alpha(f) \big ] $. 
Then, we try to bound $\| \phi_X(y) \|_{\infty}$ as:
\begin{align*}
\| \phi_X(Y) \|_{\infty}
=
\| M_f(X, Y) \indicator \big [ M_f(X, Y) \ge \tau_\alpha(f) \|_{\infty}
\leq 
\| M_f(X, Y) \|_{\infty}
\leq 
1
, 
\end{align*}
where the first inequality is due to $\indicator[\cdot] \in [0,1]$, and the second inequality is due to $|M_f(X, Y)| \leq 1$. 

Then, we rewrite $B$ as:
\begin{align}
\label{eq:proof_theo2_3}
&
\bigg|  \E_{\calP} \Big [ M_f(X, Y) \indicator \big [ M_f(X, Y) \ge \tau_\alpha(f) \big ]  \Big] - 
\E_{\calP_{\noisy}} \Big[ M_f(X, \widetilde Y) \indicator \big [ M_f(X,\widetilde Y) \ge \tau_\alpha(f) \big ] \Big] \bigg|
\nonumber \\
= &
\Bigg| \E_X \bigg [ \E_{y \sim \P(\cdot|X) } \Big [ M_f(X, Y) \indicator \big [ M_f(X, Y) \ge \tau_\alpha(f) \big ]  \Big] - 
E_{y \sim \P_{\noisy}(\cdot|X) } \Big[ M_f(X,\widetilde Y) \indicator \big [ M_f(X,\widetilde Y) \ge \tau_\alpha(f) \big ] \Big] \bigg ] \Bigg|
\nonumber \\
= &
\Bigg| \E_X \bigg [ \E_{y \sim \P(\cdot|X) } \Big [ \phi_X(Y) \Big] - 
E_{y \sim \P_{\noisy}(\cdot|X) } \Big[ \phi_X(\widetilde Y) \Big] \bigg ] \Bigg|
\nonumber \\
\leq &
\Bigg| \E_X \bigg [ 2 W_1\big( \P_{\calP_{\noisy}}(\cdot \mid X),\; \P_{\calP}(\cdot \mid X) \big)  \bigg ] \Bigg|
\nonumber \\
= &
2 \Bigg| \E_X \bigg [ W_1\big( \P_{\calP_{\noisy}}(\cdot \mid X),\; \P_{\calP}(\cdot \mid X) \big)  \bigg ] \Bigg|
\nonumber \\
= &
2\delta_w,
\end{align}
where the first inequality is due to Lemma \ref{lemma:kr_duality}.

Combining inequalities (\ref{eq:proof_theo2_1}),(\ref{eq:proof_theo2_2}), and (\ref{eq:proof_theo2_3}), we have that:
\begin{align*}
\big| \calL_{\text{cr}}(f) - \calL^{\noisy}_{\text{cr}}(f) \big| 
\leq 
2\delta_w + \alpha.
\end{align*}

\end{proof}

\begin{proof}
(of Lemma \ref{lemma:prob_upper_bound})

Recall that $\mathcal{Y}$ denotes the label space equipped with the $0$–$1$ distance
$d(y_1,y_2) = \mathbb{I}\{y_1 \neq y_2\}$.
For each fixed $X$, define $\Phi_X(Y) = \indicator \big ( M_f(X, Y) \geq \tau_{\alpha}(f) \big ) $.
Under this metric, $\Phi_X(Y)$ is $1$-Lipschitz because 
$|\Phi_X(y_1) - \Phi_X(y_2)| \le d(y_1,y_2)$.

Denote by 
$\mu_X = \P_{\mathcal{P}_{\noisy}}(\cdot \mid X)$ 
and 
$\nu_X = \P_{\mathcal{P}}(\cdot \mid X)$ 
the label posteriors under the noisy and clean domains, respectively.
Then, we have:
\begin{align*}
&
\Big|
\P_{\mathcal{P}_{\noisy}}\!\big(M_f(X,\widetilde Y)\ge\tau_{\alpha}(f)\big)
-
\P_{\mathcal{P}}\!\big(M_f(X,Y)\ge\tau_{\alpha}(f)\big)
\Big|
\\
=&
\Big|
\E_{X \sim \mathcal{P}(X)} 
\Big[
\E_{\widetilde Y \sim \mu_X}\Phi_X(\widetilde Y)
-
\E_{Y \sim \nu_X}\Phi_X(Y)
\Big]
\Big|
\\
\le &
\E_{X \sim \mathcal{P}(X)} 
\Big[
\Big|
\E_{\widetilde Y \sim \mu_X}\Phi_X(\widetilde Y)
-
\E_{Y \sim \nu_X}\Phi_X(Y)
\Big|
\Big]
\\
= &
\E_{X \sim \mathcal{P}(X)} 
\Big[
\Big|
\sum_{y \in \mathcal{Y}} 
\Phi_X(y)
\big(
\mu_X(y) - \nu_X(y)
\big)
\Big|
\Big]
\\
\leq &
\E_{X \sim \calP_X} \Big[ W_1\big( \mu_X,\; \nu_X \big) \Big]
\\
= &
\E_{X \sim \calP_X} \Big[ W_1\big( \P_{\calP_{\noisy}}(\cdot \mid X),\; \P_{\calP}(\cdot \mid X) \big)  \Big]
=
\delta_w
,
\end{align*}
where the first inequality is due to Jensen’s inequality, and the last inequality is due to Kantorovich–Rubinstein duality for Wasserstein-1.

Due to $\P_{\calP_{\noisy}} \big (M_f(X, \widetilde Y) \geq \tau_{\alpha}(f) \big ) = 1-\alpha$, we have: $\Big|\P_{\calP} \big (M_f(X, Y) \geq \tau_{\alpha}(f) \big ) - (1-\alpha) \Big| = \delta_w$.
\end{proof}

\section{Additional Experiments for Multi-class Classification}
\label{ssup:sec:experiment}

\subsection{Additional Experimental Setup Details}
\label{ssup:sec:multi_experiment_setup}

\textbf{Datasets.}
To evaluate model robustness under noisy supervision, we construct asymmetric label noise on both CIFAR-100 and mini-ImageNet, simulating realistic mislabeling patterns.

\textbf{CIFAR-100} consists of $100$ fine-grained classes organized into $20$ coarse-grained superclasses (e.g., aquatic mammals, large carnivores). 
To inject structured label noise, we first build a mapping from each fine label to its corresponding coarse superclass. 
For noise rates $\rho \in \{0\%, 5\%, 10\%, 20\%, 30\%, 40\% \}$, we randomly select $\rho n$ training samples, where $n$ is the dataset size. 
Each selected label is replaced by a randomly chosen different label from the same superclass, ensuring that the corrupted label remains semantically similar to the original. 
The indices of noisy samples are recorded to facilitate evaluation and ablation.

\textbf{mini-ImageNet} is a $100$-class image classification dataset. To inject asymmetric noise, we randomly select a fraction $\rho = 20 \%$ of training samples and replace each label $Y$ with $(Y+1) \mod 100$, introducing a deterministic and minimal perturbation. 

\textbf{FOOD101} is a $101$-class image classification dataset. To inject asymmetric noise, we randomly select a fraction $\rho = 20 \%$ of training samples and replace each label $Y$ with $(Y+1) \mod 100$, introducing a deterministic and minimal perturbation.

This form of circular asymmetric noise preserves the class index structure and ensures the new label is different from the original. The transformation is applied only to training labels, leaving the validation and test sets clean for evaluation.

\textbf{Hyperparameters for training.}
We set datasets, base loss, batch size, training epochs, training parameters (learning rate, learning schedule, momentum, gamma, and weight decay), $\lambda$ and $\alpha$ as hyperparameter choices. 
We search for hyperparameters on batch size $\in \{64, 128, 256, 512\}$, epochs $\in \{50, 100\}$, learning rate ($\eta$) $\in \{0.1, 0.07, 0.05, 0.03, 0.01\}$, learning rate schedule $\in \{[10], [10, 30], [60], [60, 80]\}$, Momentum $=0.9$, weight decay $= 0.0002$, $\gamma = 0.01$, $\lambda = \{0.05, 0.1, 0.15, 0.2, 0.25\}$, and $\alpha = \{0.05, 0.1, 0.15, 0.2, 0.25\}$ to select the best combination of hyperparameters of each methods.
For GCE, we additionally scale $\lambda$ by a factor of $0.1$, resulting in $\lambda \in \{0.005, 0.01, 0.015, 0.02, 0.025\}$.
The hyerparameters employed to get the results presented in the main paper are summarized in Table \ref{tab:finetune_hyper_params_multi}.

\begin{table}[!ht]
\centering
\resizebox{\textwidth}{!}{
\begin{tabular}{|c|c|c|c|c|c|c|c|c|c|c|}
\hline
Data  & Loss & Batch size & Epochs &  $\eta$  & lr schedule & Momentum & $\gamma$ & weight decay & $\lambda$ & $\alpha$ \\ \hline
\multirow{4}{*}{CIFAR-100}    & CE+CMRM    & 128   & 50     & 0.05   & [10]    & 0.9      & 0.01   & 0.0002    & 0.1 & 0.15   \\ \cline{2-11} 
 & Focal+CMRM  & 128   & 50     & 0.05   & [10]    & 0.9      & 0.01   & 0.0002    & 0.15 & 0.1   \\ \cline{2-11} 
& LDAM+CMRM & 128   & 100     & 0.05   & [60, 80]    & 0.9      & 0.01   & 0.0002    & 0.1 & 0.15   \\ \cline{2-11}
& GCE+CMRM & 128   & 50     & 0.05   & [10]    & 0.9      & 0.01   & 0.0002    & 0.005 & 0.05   \\ \hline 
\multirow{4}{*}{mini-ImageNet}    & CE+CMRM    & 512   & 50     & 0.05   & [10]    & 0.9      & 0.01   & 0.0002    & 0.15 & 0.2   \\ \cline{2-11} 
 & Focal+CMRM  & 512   & 50     & 0.05   & [10]    & 0.9      & 0.01   & 0.0002    & 0.15 & 0.15   \\ \cline{2-11} 
& LDAM+CMRM & 512   & 100     & 0.05   & [60, 80]    & 0.9      & 0.01   & 0.0002    & 0.2 & 0.1  \\ \cline{2-11}
& GCE+CMRM & 512   & 50     & 0.05   & [10]    & 0.9      & 0.01   & 0.0002    & 0.0005 & 0.15   \\ \hline
\multirow{4}{*}{FOOD101}    & CE+CMRM    & 512   & 50     & 0.05   & [10]    & 0.9      & 0.01   & 0.0002    & 0.15 & 0.15   \\ \cline{2-11} 
 & Focal+CMRM  & 512   & 50     & 0.05   & [10]    & 0.9      & 0.01   & 0.0002    & 0.15 & 0.1   \\ \cline{2-11} 
& LDAM+CMRM & 512   & 100     & 0.05   & [60, 80]    & 0.9      & 0.01   & 0.0002    & 0.05 & 0.15  \\ \cline{2-11}
& GCE+CMRM & 512   & 50     & 0.05   & [10]    & 0.9      & 0.01   & 0.0002    & 0.0001 & 0.1   \\ \hline
\end{tabular}
}
\caption{\textbf{The details we used to train our models for multi-class classification corrupted by synthetic noise} with noise rate $20\%$. We reported the hyperparameters that give the best accuracy. We employed SGD optimizer for all training unless specified.}
\label{tab:finetune_hyper_params_multi}
\end{table}

\subsection{Additional Experimental Results}
\label{ssup:sec:bianry_experiment_setup}

\begin{table*}[!ht]
  \centering
  \resizebox{\textwidth}{!}{%
  \begin{NiceTabular}{@{}ll*{8}{c}@{}}
    \toprule
    \multirow{2}{*}{\textbf{Noise Rates}} & \multirow{2}{*}{\textbf{Metric}} &
    \multicolumn{2}{c}{\textbf{CE}} &
    \multicolumn{2}{c}{\textbf{Focal}} &
    \multicolumn{2}{c}{\textbf{LDAM}} &
    \multicolumn{2}{c}{\textbf{GCE}} \\
    \cmidrule(lr){3-4}\cmidrule(lr){5-6}\cmidrule(lr){7-8}\cmidrule(lr){9-10}
    \multicolumn{2}{c}{} & Base & +CMRM & Base & +CMRM & Base & +CMRM & Base & +CMRM \\
    \midrule
    \multirow{2}{*}{$0\%$}
      & ACC (\%) $\uparrow$ & 68.82 & \textbf{69.38 ($+0.56$)} & 68.32 & \textbf{69.27 ($+0.95$)} & 62.94 & \textbf{64.15 ($+1.19$)} & \textbf{62.80} & \textbf{62.80 ($+0$)} \\
      & M.APSS $\downarrow$ & \textbf{3.22} & 3.37 ($+4.52\%$) & \textbf{3.04} & 3.26 ($+7.24\%$) & \textbf{10.89} & 11.06 ($+1.56\%$) & \textbf{5.95} & \textbf{5.95 ($+0\%$)} \\
    \midrule
    \multirow{2}{*}{$5\%$}
      & ACC (\%) $\uparrow$ & 67.48 & \textbf{68.02 ($+0.54$)} & 66.76 & \textbf{67.66 ($+0.90$)} & 61.63 & \textbf{63.00 ($+1.37$)} & 63.88 & \textbf{65.29 ($+1.41$)} \\
      & M.APSS $\downarrow$ & \textbf{3.79} & 4.13 ($+8.97\%$) & \textbf{3.66} & 3.82 ($+4.37\%$) & \textbf{13.03} & 13.28 ($+1.92\%$) & \textbf{6.27} & 6.30 ($+0.48\%$) \\
    \midrule
    \multirow{2}{*}{$10\%$}
      & ACC (\%) $\uparrow$ & 66.29 & \textbf{67.32 ($+1.03$)} & 66.52 & \textbf{66.82 ($+0.30$)} & 61.06 & \textbf{62.28 ($+1.22$)} & 62.80 & \textbf{64.73 ($+1.93$)} \\
      & M.APSS $\downarrow$ & \textbf{4.31} & 4.57 ($+6.03\%$) & \textbf{4.28} & 4.53 ($+5.84\%$) & 15.57 & \textbf{15.32 ($-1.61\%$)} & 6.55 & \textbf{5.58 ($-14.81\%$)} \\
    \midrule
    \multirow{2}{*}{$20\%$}
      & ACC (\%) $\uparrow$ & 65.16 & \textbf{66.32 ($+1.16$)} & 64.42 & \textbf{65.39 ($+0.97$)} & 59.63 & \textbf{61.12 ($+1.49$)} & 62.17 & \textbf{63.65 ($+1.48$)} \\
      & M.APSS $\downarrow$ & 6.67 & \textbf{6.52 ($-2.25\%$)} & 6.89 & \textbf{6.61 ($-4.06\%$)} & 17.85 & \textbf{17.67 ($-0.78\%$)} & 7.70 & \textbf{6.28 ($-18.44\%$)} \\
    \midrule
    \multirow{2}{*}{$30\%$}
      & ACC (\%) $\uparrow$ & 64.02 & \textbf{65.22 ($+1.2$)} & 63.46 & \textbf{65.33 ($+1.87$)} & 58.27 & \textbf{59.56 ($+1.29$)} & 61.49 & \textbf{63.23 ($+1.74$)} \\
      & M.APSS $\downarrow$ & 8.81  & \textbf{7.88 ($-10.56\%$)} & 9.12 & \textbf{7.66 ($-16.01\%$)} & 20.44 & \textbf{18.63 ($-8.86\%$)} & 6.77 & \textbf{6.18 ($-8.71\%$)} \\
    \midrule
    \multirow{2}{*}{$40\%$}
      & ACC (\%) $\uparrow$ & 62.63 & \textbf{64.77 ($+2.14$)} & 62.65 & \textbf{64.87 ($+2.22$)} & 58.25 & \textbf{59.27 ($+1.02$)} & 59.34 & \textbf{60.85 ($+1.51$)} \\
      & M.APSS $\downarrow$ & 10.31 & \textbf{8.99 ($-12.80\%$)} & 10.55  & \textbf{9.12 ($-13.55\%$)} & \textbf{19.54} & 20.95 ($+7.22\%$) & 7.76 & \textbf{7.74 ($-0.26\%$)} \\
    \bottomrule
  \end{NiceTabular}%
  }
  \caption{\textbf{Top-1 accuracy (\%) and marginal average prediction set size (M.APSS $\downarrow$) on CIFAR-100 datasets corrupted by synthetic noise with noise rates $\{0\%, 5\%, 10\%, 20\%, 30\%, 40\% \}$.}
  Each Base objective is paired with its +CMRM counterpart; the better value within each pair is in \textbf{bold}.
  Numbers in parentheses indicate the relative change (\%): $+$ denotes accuracy improvement, and $-$ denotes M.APSS reduction compared to the corresponding Base objective. 
  On average across all datasets and objectives, 
  CMRM improves accuracy by $1.34$ and reduces M.APSS by $3.89\%$.}
  \label{tab:results_multi_noise_rates}
\end{table*}

\begin{table*}[t]
\centering
\resizebox{\textwidth}{!}{
\begin{NiceTabular}{@{}lcccccccc@{}}
\toprule
\multirow{2}{*}{\textbf{Metric}} &
\multicolumn{2}{c}{\textbf{CE}} &
\multicolumn{2}{c}{\textbf{Focal}} &
\multicolumn{2}{c}{\textbf{LDAM}} &
\multicolumn{2}{c}{\textbf{GCE}} \\
\cmidrule(lr){2-3}\cmidrule(lr){4-5}\cmidrule(lr){6-7}\cmidrule(lr){8-9}
& Base & +CMRM & Base & +CMRM & Base & +CMRM & Base & +CMRM \\
\midrule
ACC (\%) $\uparrow$
& 64.74 $\pm$ 0.50 & \textbf{65.95 $\pm$ 0.30 (+1.21)}
& 64.33 $\pm$ 0.15 & \textbf{65.26 $\pm$ 0.21 (+0.93)}
& 59.64 $\pm$ 0.34 & \textbf{60.76 $\pm$ 0.26 (+1.12)}
& 62.46 $\pm$ 0.70 & \textbf{63.60 $\pm$ 0.24 (+1.14)} \\
\bottomrule
\end{NiceTabular}
}
\caption{
\textbf{Mean $\pm$ standard deviation (std) of Top-1 accuracy (\%) across multiple random seeds on CIFAR-100 with synthetic label noise (noise rate $20\%$).}
Each base objective is paired with its +CMRM variant.
The better value within each pair is shown in \textbf{bold}.
Numbers in parentheses denote the absolute accuracy improvement over the corresponding base objective.
}
\label{tab:results_multi_seeds}
\end{table*}

\begin{table*}[!ht]
  \centering
  \resizebox{\textwidth}{!}{%
    \begin{NiceTabular}{@{}l|cc cc cc cc cc cc@{}}
      \toprule
      \multirow{2}{*}{\textbf{Method}} & \multicolumn{2}{c}{\textbf{CIFAR-10N (Aggre)}} & \multicolumn{2}{c}{\textbf{CIFAR-10N (Rand1)}} & \multicolumn{2}{c}{\textbf{CIFAR-10N (Rand2)}} & \multicolumn{2}{c}{\textbf{CIFAR-10N (Rand3)}} & \multicolumn{2}{c}{\textbf{CIFAR-10N (Worst)}} & \multicolumn{2}{c}{\textbf{CIFAR-100N}}  \\
      \cmidrule(lr){2-3} \cmidrule(lr){4-5} 
      \cmidrule(lr){6-7} \cmidrule(lr){8-9} 
      \cmidrule(lr){10-11} \cmidrule(lr){12-13} 
      & ACC(\%) & M.APSS  & ACC(\%)  & M.APSS 
      & ACC(\%) & M.APSS  & ACC(\%)  & M.APSS
      & ACC(\%) & M.APSS  & ACC(\%)  & M.APSS\\
      \midrule
      NI-ERM
      & 98.69 & 0.904
      & 98.80 & 0.902
      & 98.65 & \textbf{0.903}
      & 98.67 & 0.904
      & 95.71 & 0.93
      & 83.17 & 1.49 \\
      NI-ERM+CMRM 
      & \textbf{98.81} & \textbf{0.903}
      & \textbf{99.03} & \textbf{0.901}
      & \textbf{98.95} & \textbf{0.903}
      & \textbf{98.88} & \textbf{0.899}
      & \textbf{97.19} & \textbf{0.91}
      & \textbf{83.95} & \textbf{1.29} \\
      & ($+0.12$) & ($-0.11\%$)
      & ($+0.23$) & ($-0.11\%$)
      & ($+0.30$) & ($0\%$)
      & ($+0.21$) & ($-0.55\%$)
      & ($+1.48$) & ($-2.15\%$) 
      & ($+0.78$) & ($-13.42\%$) \\
      \bottomrule
    \end{NiceTabular}%
  }
  \caption{\textbf{Top-1 accuracy (\%) and marginal average prediction set size (M.APSS $\downarrow$) on CIFAR-10N and CIFAR-100N corrupted by human annotation noise.} 
  Numbers in parentheses indicate the relative change: $+$ denotes accuracy improvement and $-\%$ denotes M.APSS reduction. 
  CMRM consistently improves accuracy and reduces uncertainty across CIFAR-N variants, with the largest gains observed on CIFAR-10N and CIFAR-100N. 
  On average across all datasets, 
  CMRM improves accuracy by $0.52$ and reduces M.APSS by $2.72\%$.
  }
  \label{tab:results_multi_cifarn_full}
\end{table*}

\begin{figure*}[!t]
    \centering
    \begin{minipage}[t]{0.24\linewidth}
    \centering
    \textbf{(a)} CE+CMRM
    \includegraphics[width=\linewidth]{figure/loss_curves_CE.png}
    \end{minipage}
    \begin{minipage}[t]{0.24\linewidth}
    \centering
    \textbf{(b)} Focal+CMRM
    \includegraphics[width=\linewidth]{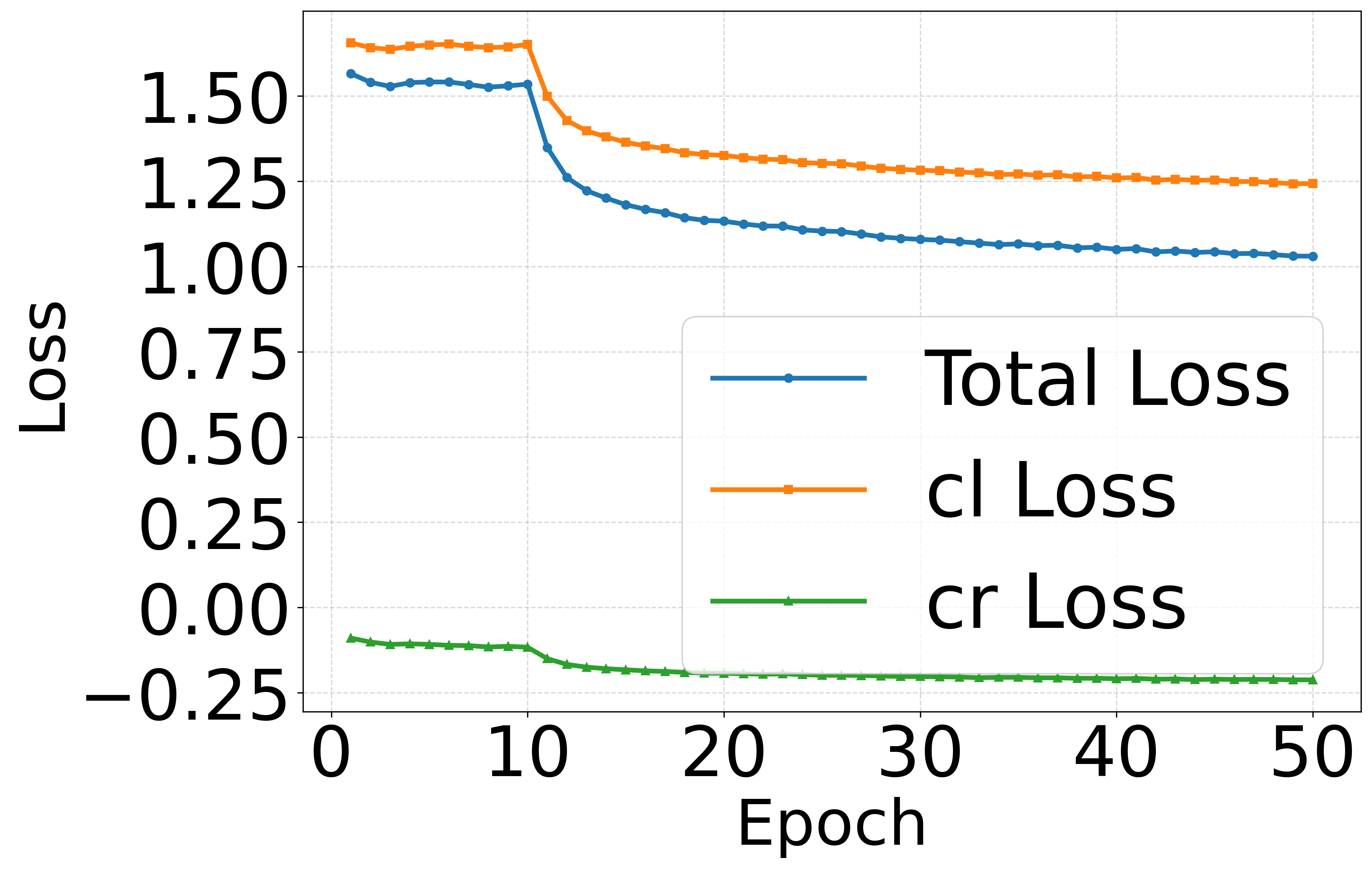}
    \end{minipage}
    \begin{minipage}[t]{0.24\linewidth}
    \centering
    \textbf{(c)} LDAM+CMRM
    \includegraphics[width=\linewidth]{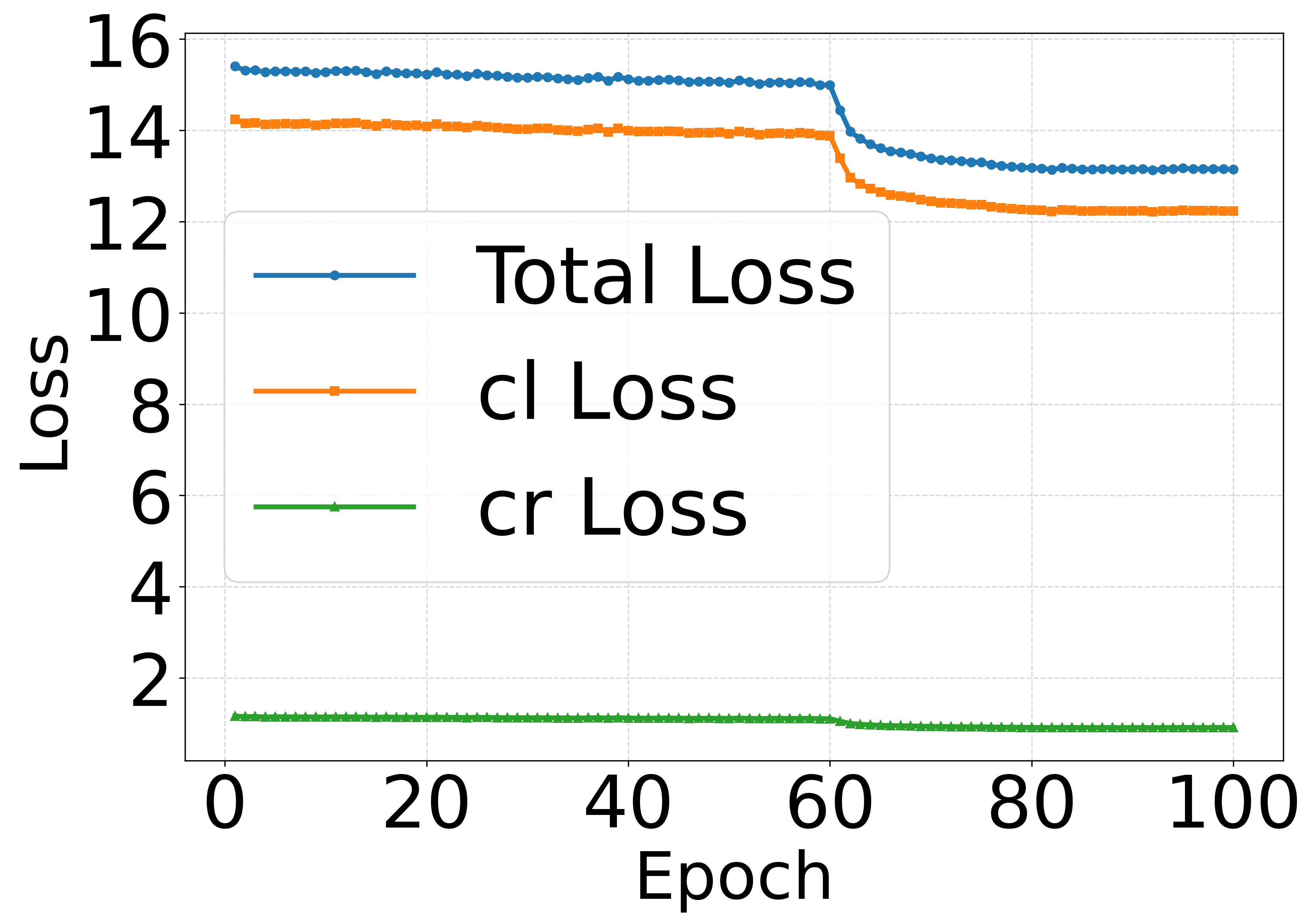}
    \end{minipage}
    \begin{minipage}[t]{0.24\linewidth}
    \centering
    \textbf{(d)} GCE+CMRM
    \includegraphics[width=\linewidth]{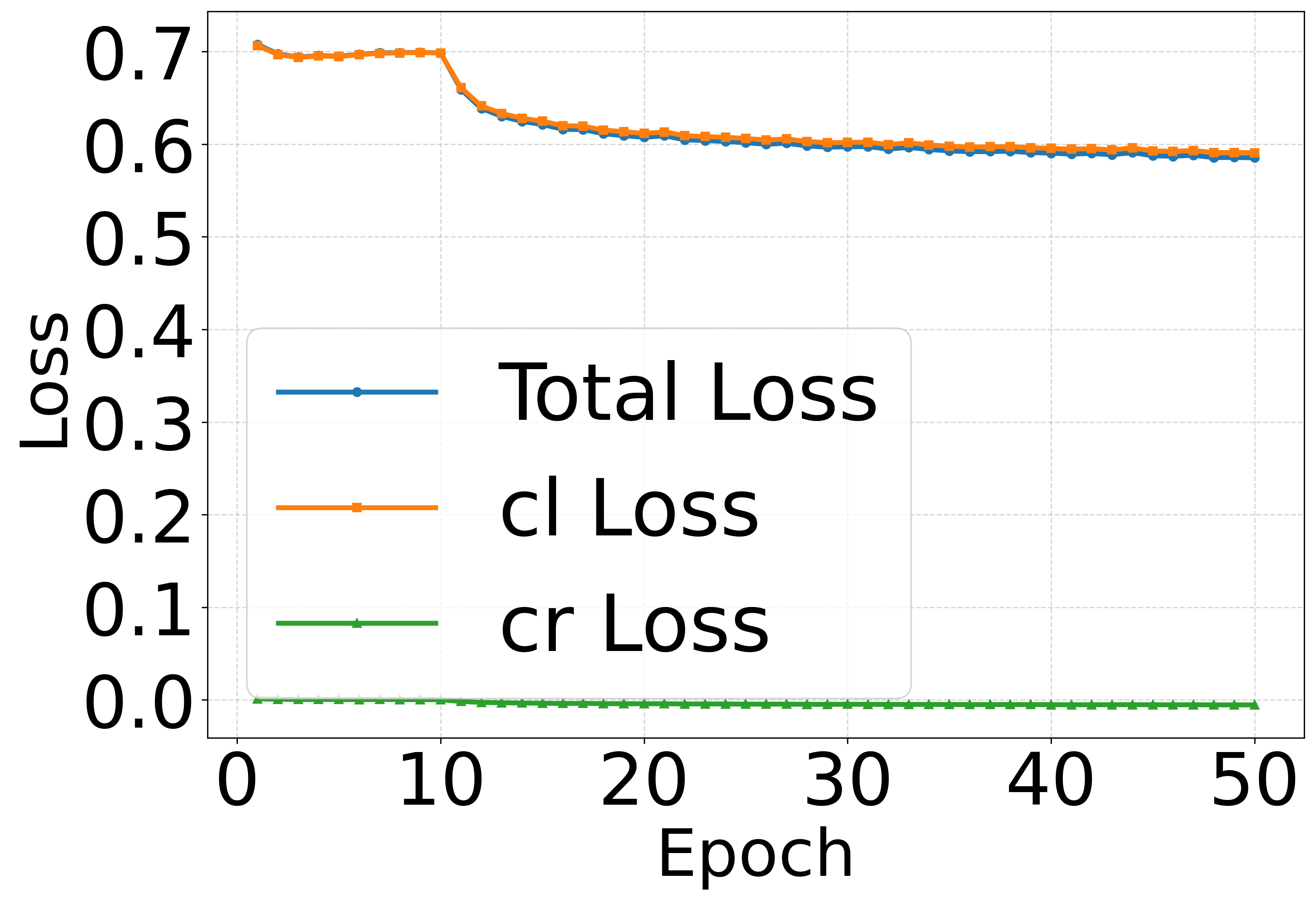}
    \end{minipage}
    \caption{
    \textbf{training dynamics of total loss (Total), classification loss (cl), and CMRM loss (cr) for all base losses} over epochs on CIFAR-100 with $20\%$ synthetical label noise. 
    Subfigure \textbf{(a)} CE+CERM; 
    Subfigure \textbf{(b)} Focal+CERM; 
    Subfigure \textbf{(c)} LDAM+CERM; 
    Subfigure \textbf{(d)} GCE+CERM; 
    CMRM exhibits stable and monotonic convergence alongside standard loss components.
    }
    \label{fig:multi_loss}
\end{figure*}

\begin{figure*}[!t]
    \centering
    \begin{minipage}[t]{0.24\linewidth}
    \centering
    \textbf{(a)} CE+CMRM
    \includegraphics[width=\linewidth]{figure/noise_ratio_curve.png}
    \end{minipage}
    \begin{minipage}[t]{0.24\linewidth}
    \centering
    \textbf{(b)} Focal+CMRM
    \includegraphics[width=\linewidth]{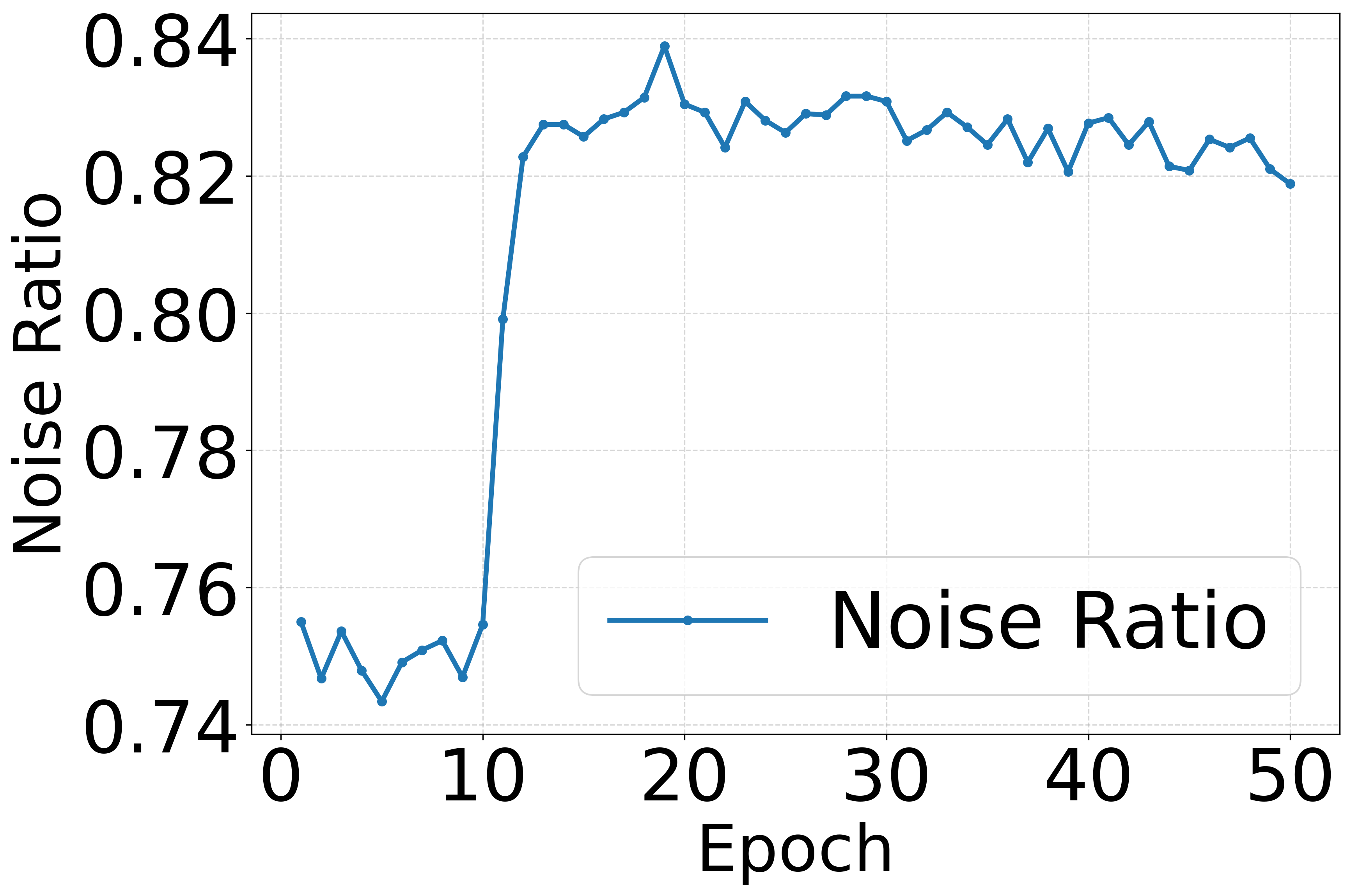}
    \end{minipage}
    \begin{minipage}[t]{0.24\linewidth}
    \centering
    \textbf{(c)} LDAM+CMRM
    \includegraphics[width=\linewidth]{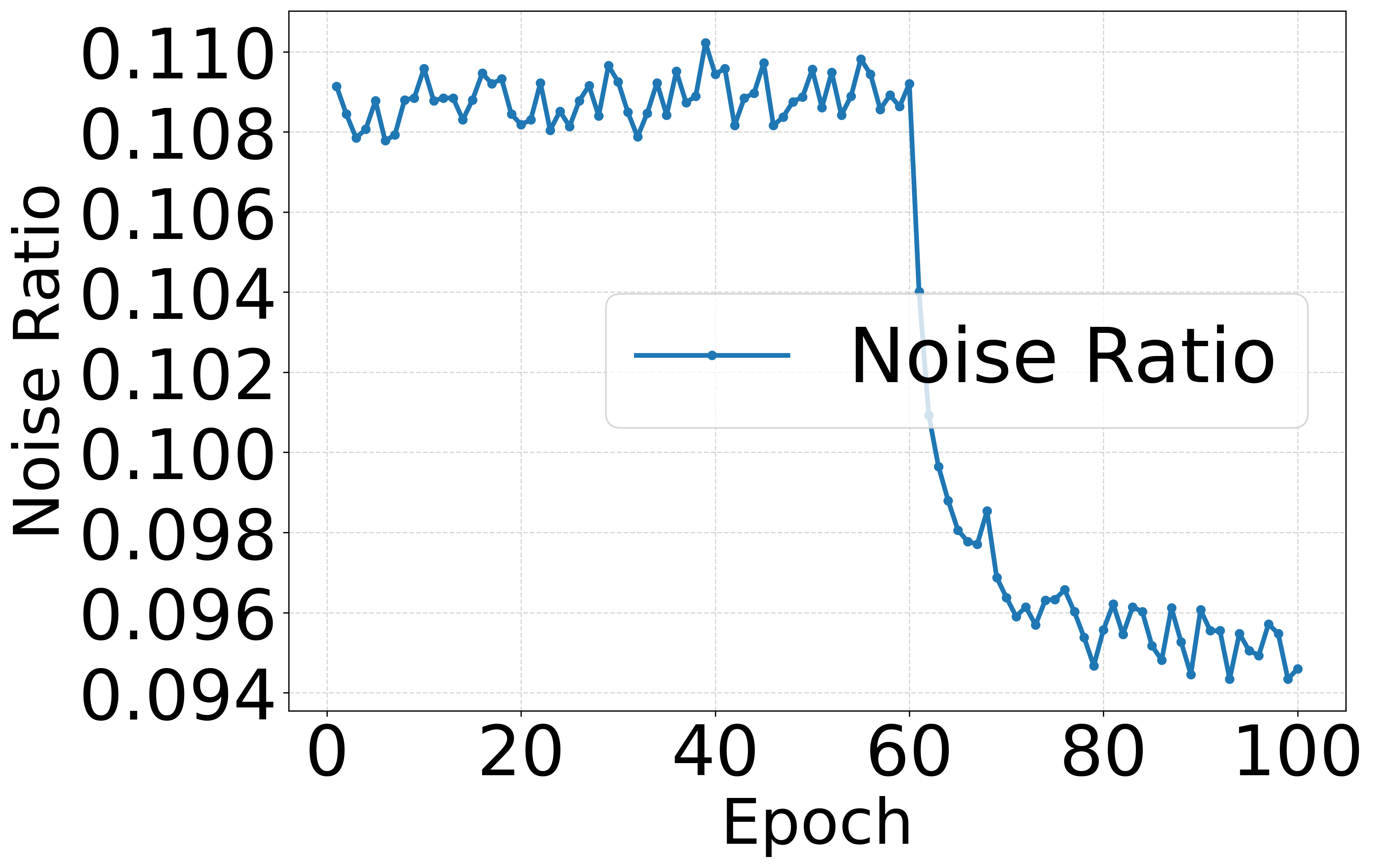}
    \end{minipage}
    \begin{minipage}[t]{0.24\linewidth}
    \centering
    \textbf{(d)} GCE+CMRM 
    \includegraphics[width=\linewidth]{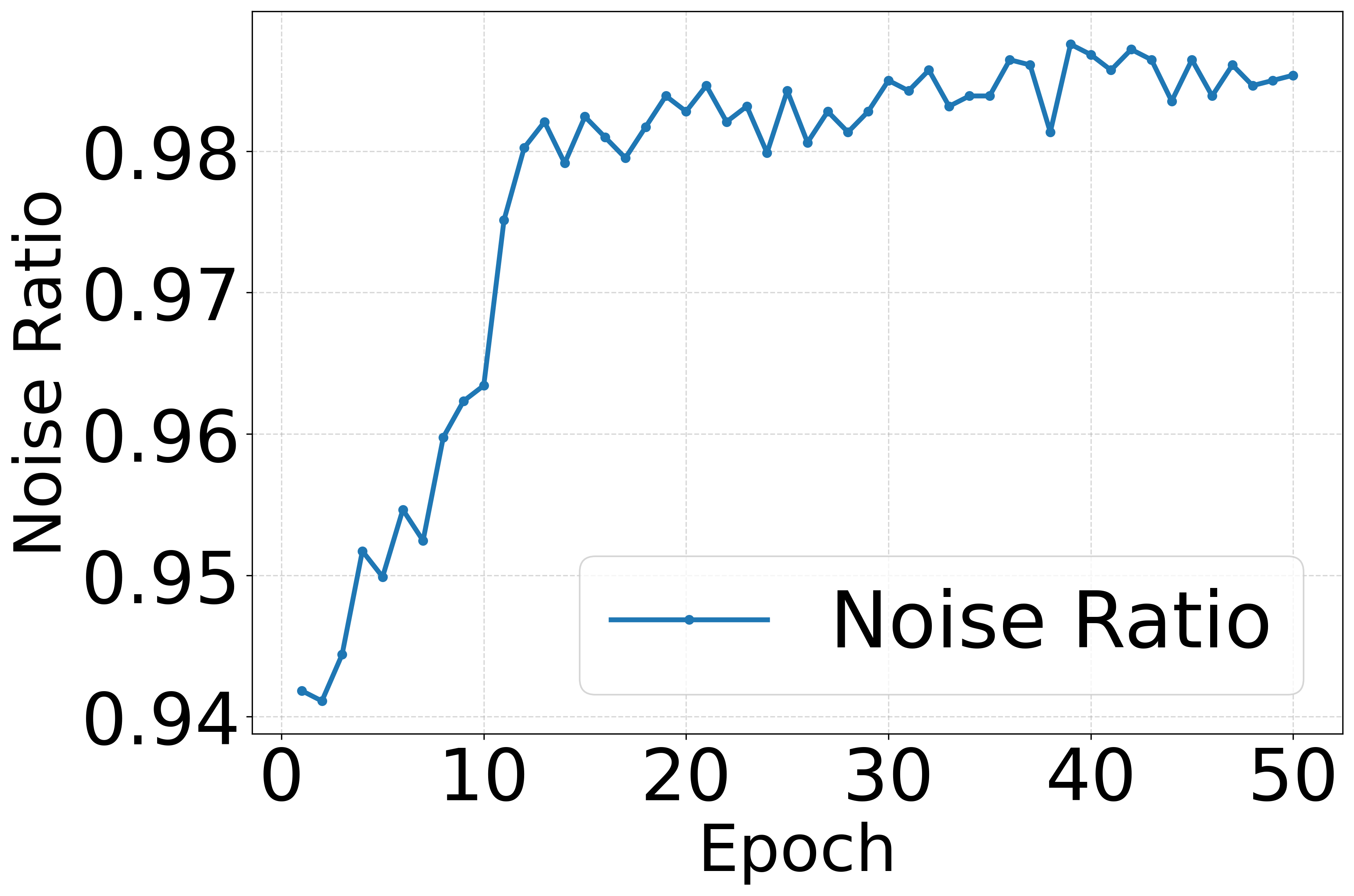}
    \end{minipage}
    \caption{
    \textbf{The ratio of noisy samples among those filtered out by CMRM at each epoch} 
    on CIFAR-100 with $20\%$ synthetical label noise. 
    Subfigure \textbf{(a)} CE+CMRM ($\alpha = 0.15$); 
    Subfigure \textbf{(b)} Focal+CMRM ($\alpha = 0.1$); 
    Subfigure \textbf{(c)} LDAM+CMRM ($\alpha = 0.15$); 
    Subfigure \textbf{(d)} GCE+CMRM ($\alpha = 0.05$); 
    CMRM consistently suppresses noisy examples by excluding low-margin samples during training.
    }
    \label{fig:multi_noise_ratio}
\end{figure*}

\begin{figure*}[!ht]
    \centering
    \begin{minipage}[t]{0.24\linewidth}
    \centering
    \textbf{(a)} CE+CMRM
    \includegraphics[width=\linewidth]{figure/alpha_best_acc_curve.png}
    \end{minipage}
    \begin{minipage}[t]{0.24\linewidth}
    \centering
    \textbf{(b)} Focal+CMRM
    \includegraphics[width=\linewidth]{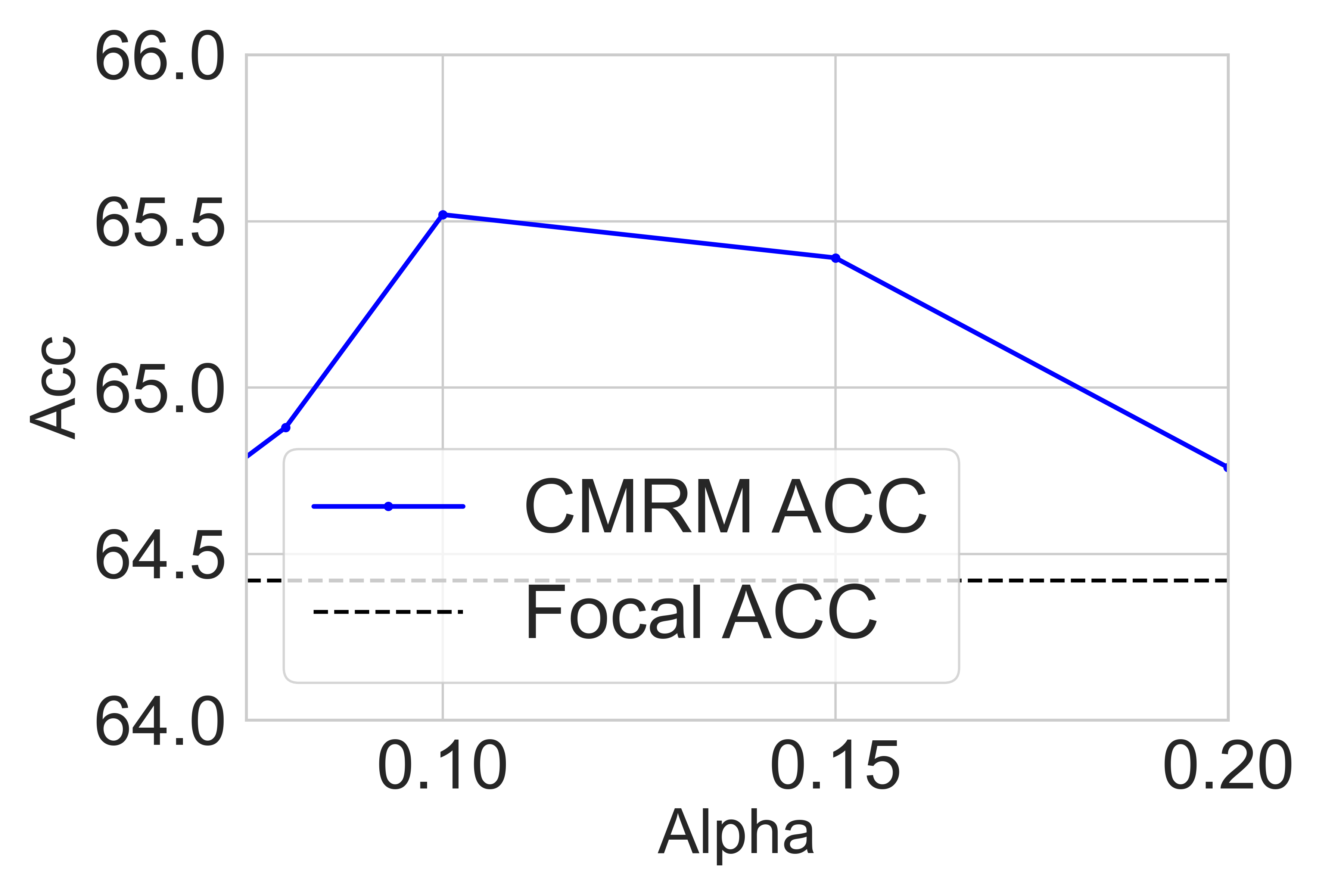}
    \end{minipage}
    \begin{minipage}[t]{0.24\linewidth}
    \centering
    \textbf{(c)} LDAM+CMRM
    \includegraphics[width=\linewidth]{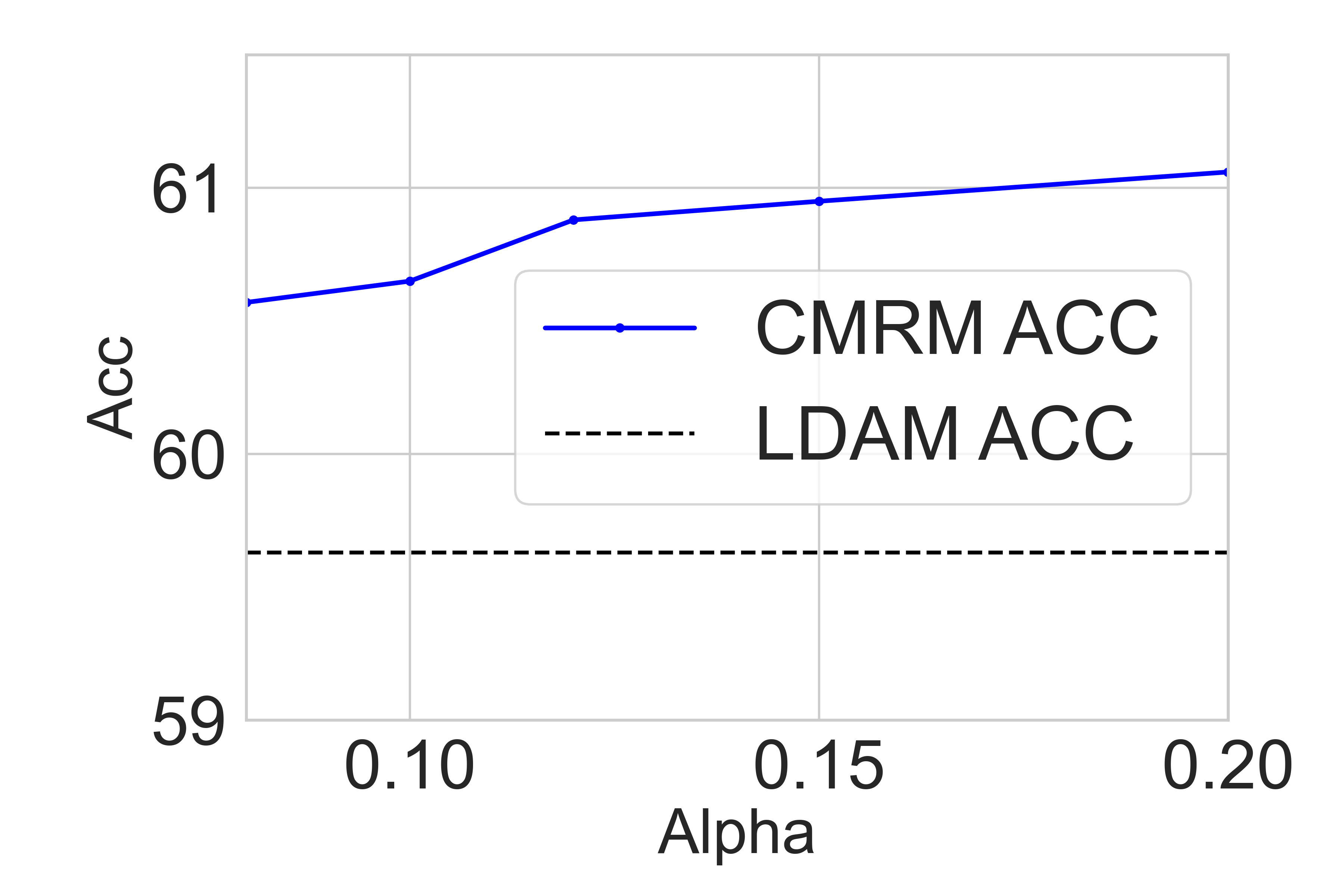}
    \end{minipage}
    \begin{minipage}[t]{0.24\linewidth}
    \centering
    \textbf{(d)} GCE+CMRM
    \includegraphics[width=\linewidth]{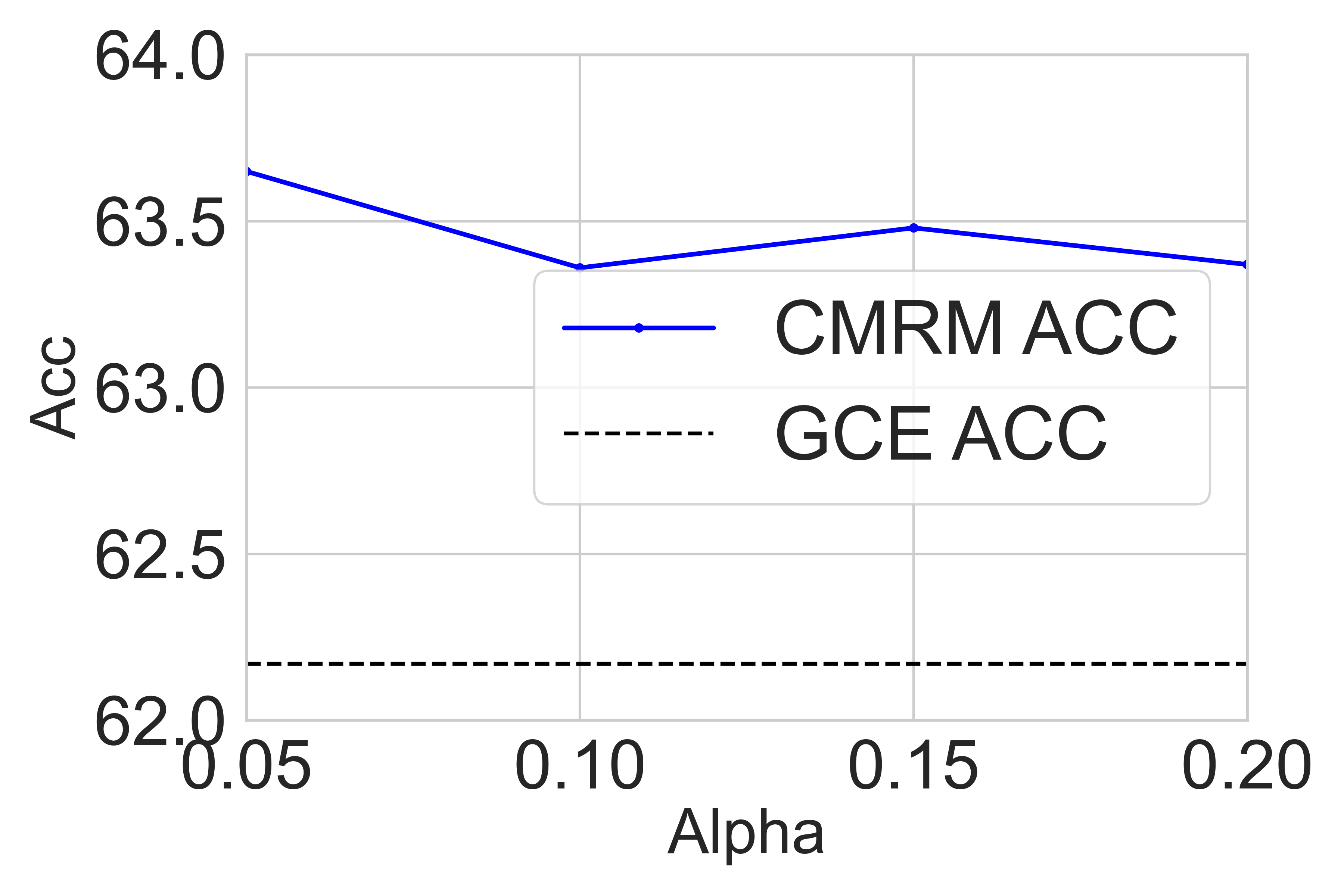}
    \end{minipage}
    \caption{
    \textbf{The sensitivity of $\alpha$ of different base losses} 
    on CIFAR-100 with $20\%$ synthetical label noise. 
    Subfigure \textbf{(a)} CE+CERM; 
    Subfigure \textbf{(b)} Focal+CERM; 
    Subfigure \textbf{(c)} LDAM+CERM; 
    Subfigure \textbf{(d)} GCE+CERM; 
    CMRM maintains higher accuracy than CE across a range of $\alpha$ values, indicating robustness to hyperparameter $\alpha$.
    }
    \label{fig:multi_sensitivity}
\end{figure*}

\begin{figure*}[!ht]
    \centering
    \begin{minipage}[t]{0.24\linewidth}
    \centering
    \textbf{(a)} CE+CMRM
    \includegraphics[width=\linewidth]{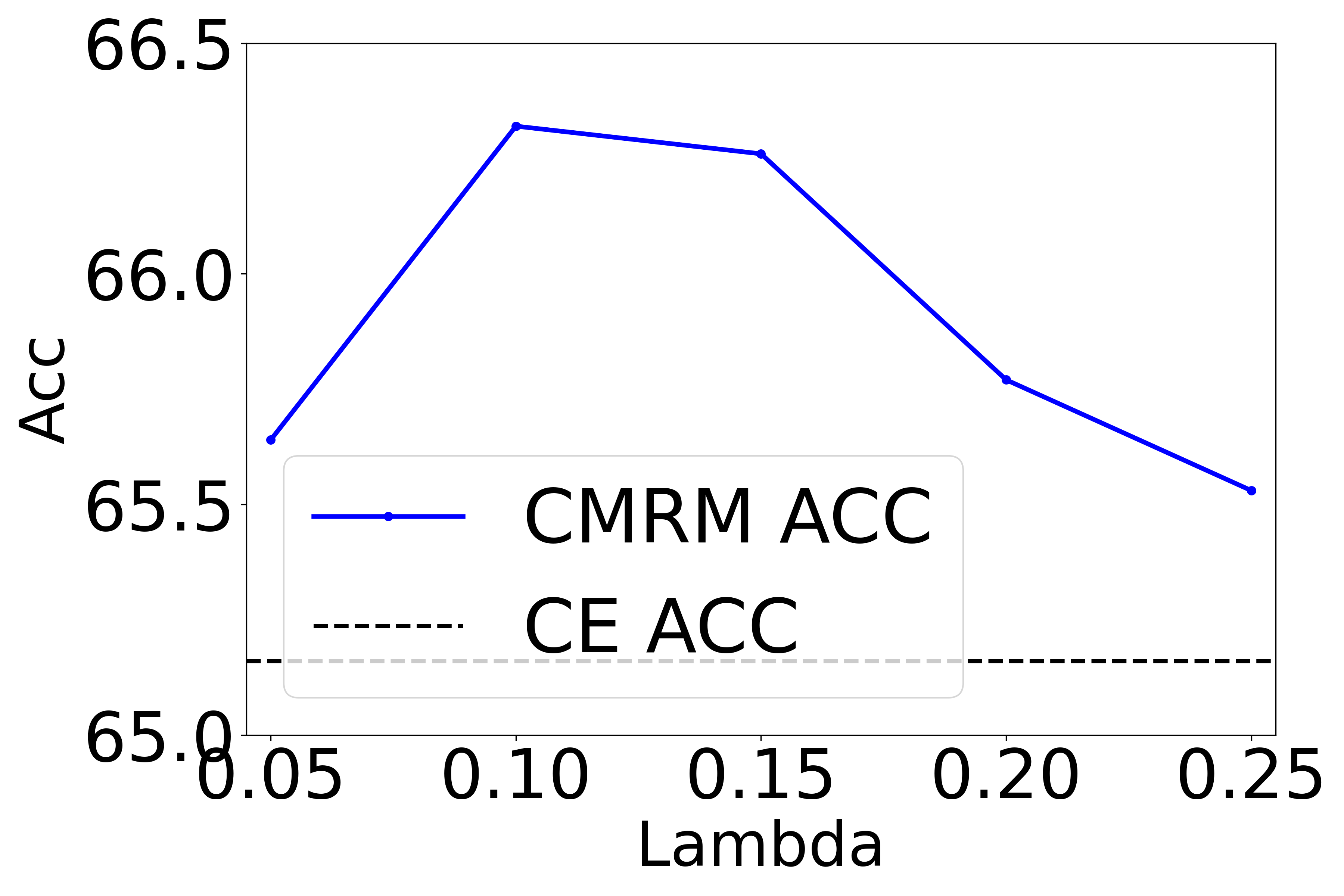}
    \end{minipage}
    \begin{minipage}[t]{0.24\linewidth}
    \centering
    \textbf{(b)} Focal+CMRM
    \includegraphics[width=\linewidth]{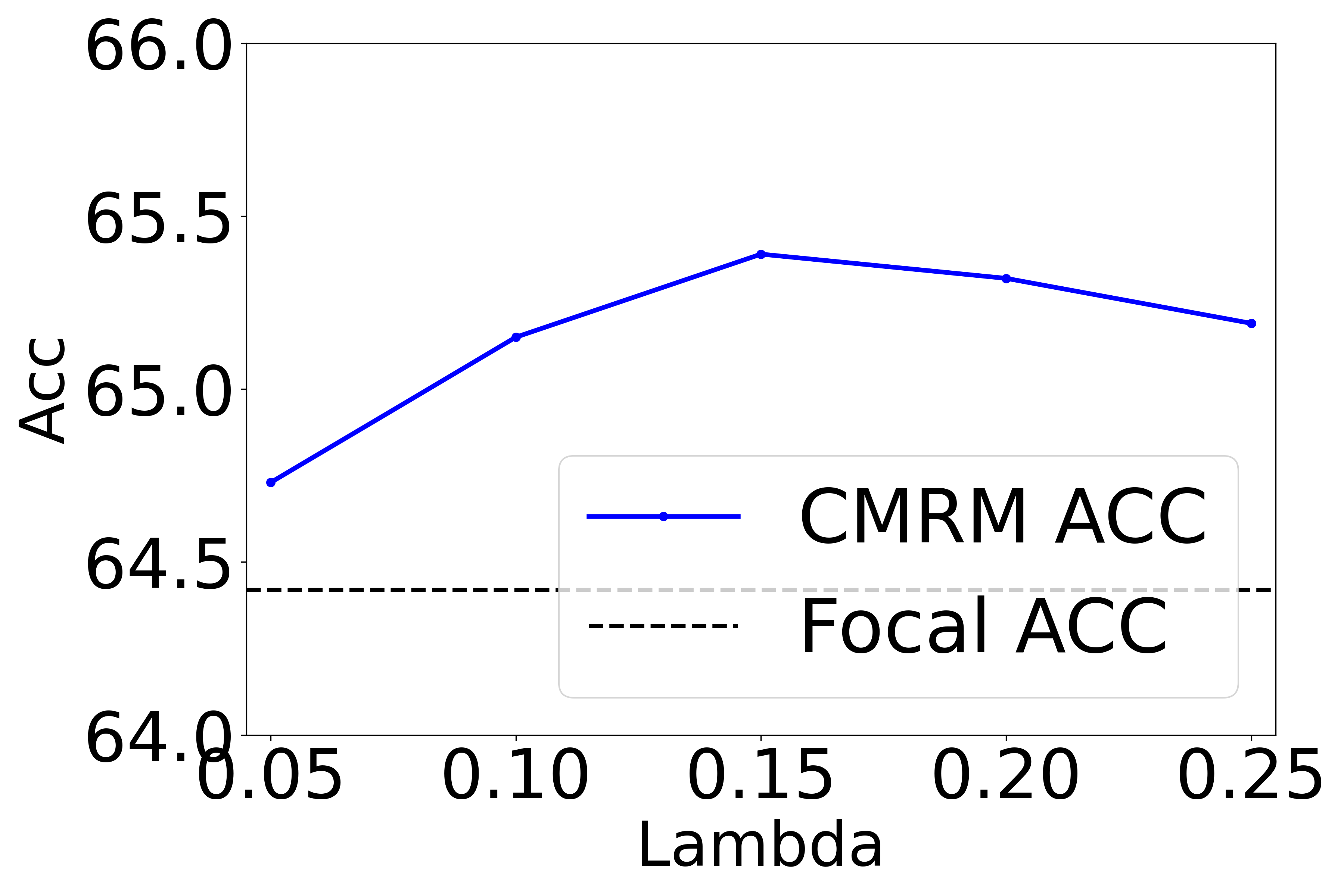}
    \end{minipage}
    \begin{minipage}[t]{0.24\linewidth}
    \centering
    \textbf{(c)} LDAM+CMRM
    \includegraphics[width=\linewidth]{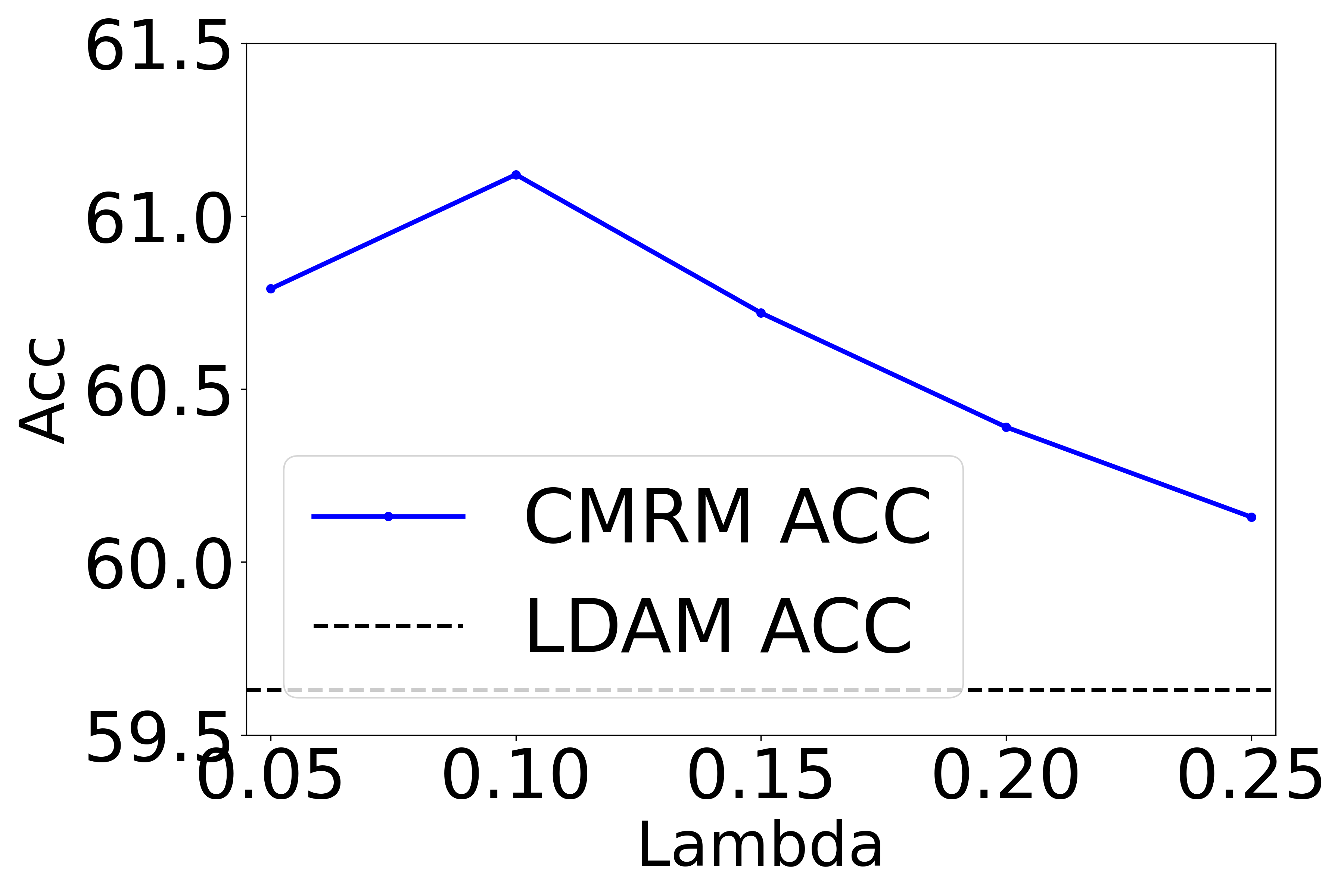}
    \end{minipage}
    \begin{minipage}[t]{0.24\linewidth}
    \centering
    \textbf{(d)} GCE+CMRM
    \includegraphics[width=\linewidth]{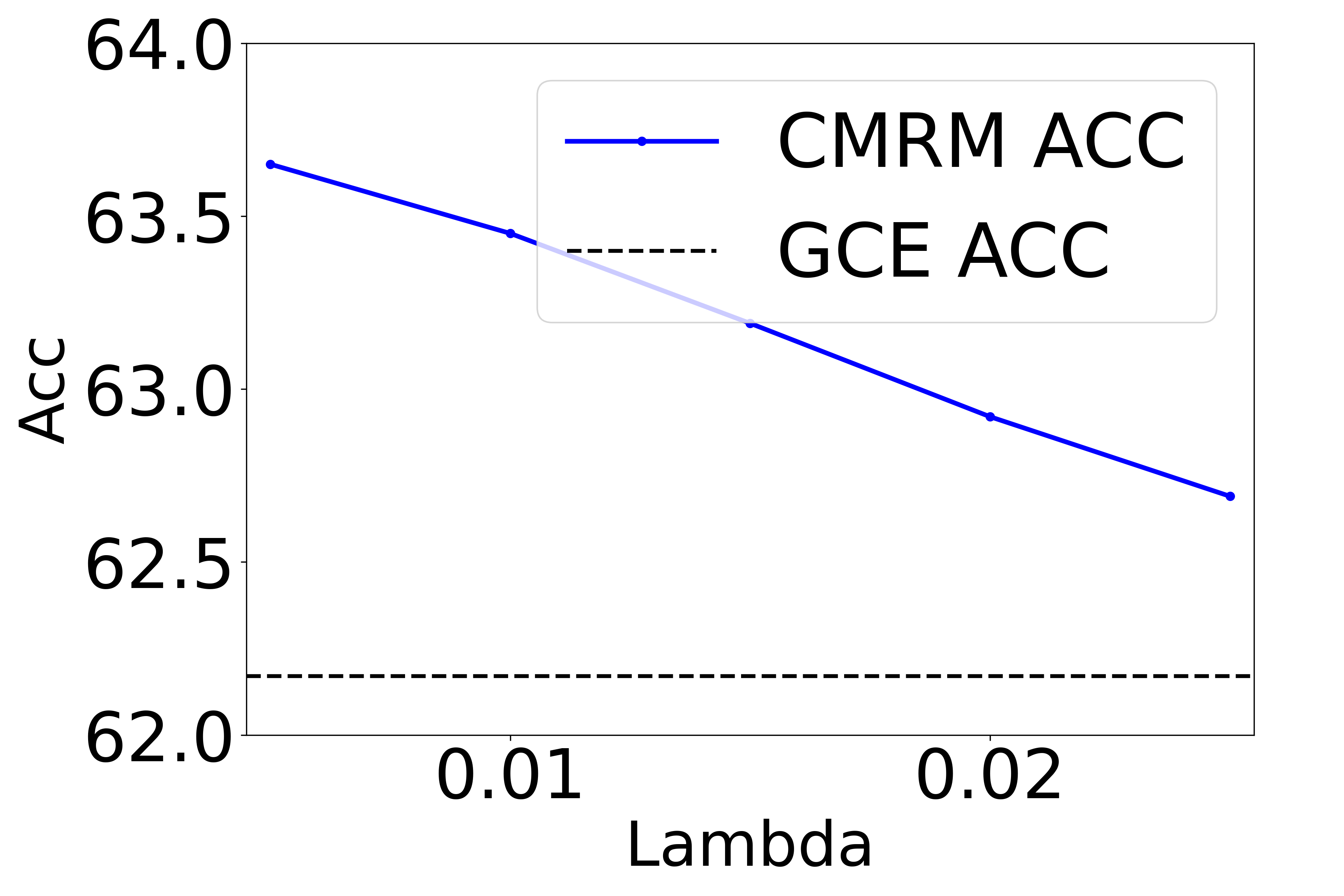}
    \end{minipage}
    \caption{
    \textbf{The sensitivity of $\lambda$ of different base losses} 
    on CIFAR-100 with $20\%$ synthetical label noise. 
    Subfigure \textbf{(a)} CE+CERM; 
    Subfigure \textbf{(b)} Focal+CERM; 
    Subfigure \textbf{(c)} LDAM+CERM; 
    Subfigure \textbf{(d)} GCE+CERM; 
    CMRM maintains higher accuracy than CE across a range of $\lambda$ values, indicating robustness to hyperparameter $\lambda$.
    }
    \label{fig:multi_sensitivity_lambda}
\end{figure*}

\begin{figure*}[!ht]
    \centering
    \begin{minipage}[t]{0.24\linewidth}
    \centering
    \textbf{(a)} CE+CMRM
    \includegraphics[width=\linewidth]{figure/hist_margin_kde.png}
    \end{minipage}
    \begin{minipage}[t]{0.24\linewidth}
    \centering
    \textbf{(b)} Focal+CMRM
    \includegraphics[width=\linewidth]{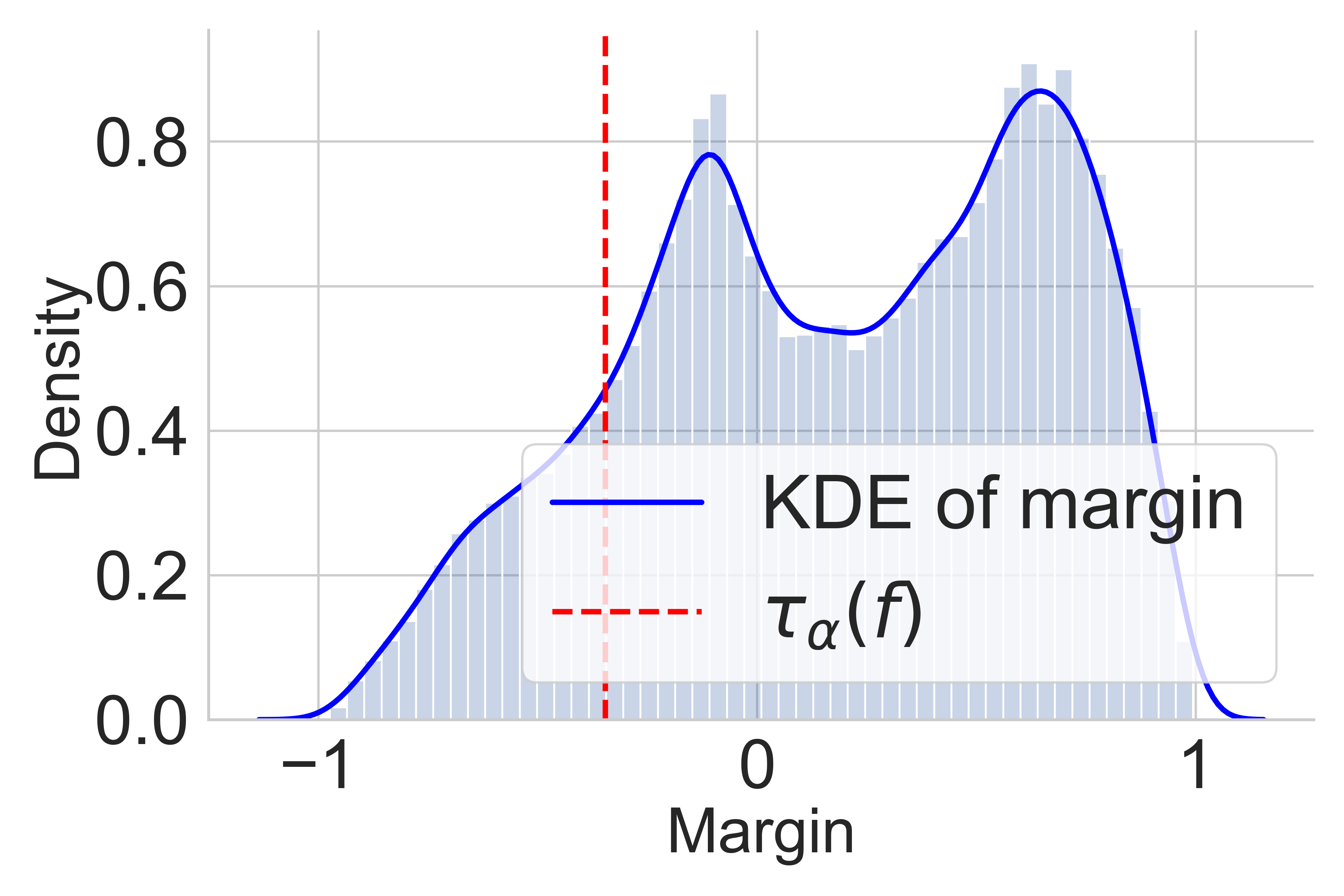}
    \end{minipage}
    \begin{minipage}[t]{0.24\linewidth}
    \centering
    \textbf{(c)} LDAM+CMRM
    \includegraphics[width=\linewidth]{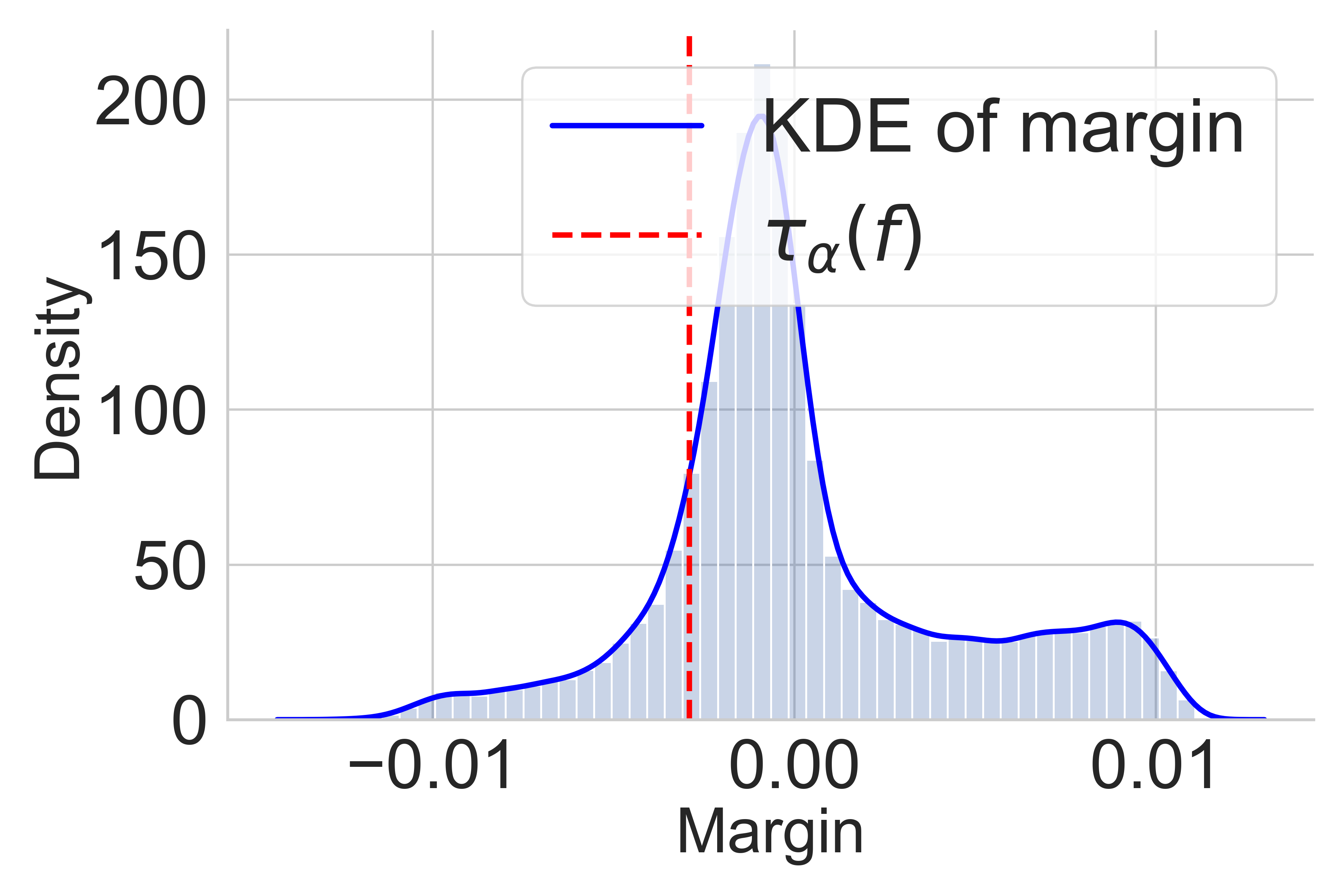}
    \end{minipage}
    \begin{minipage}[t]{0.24\linewidth}
    \centering
    \textbf{(d)} GCE+CMRM
    \includegraphics[width=\linewidth]{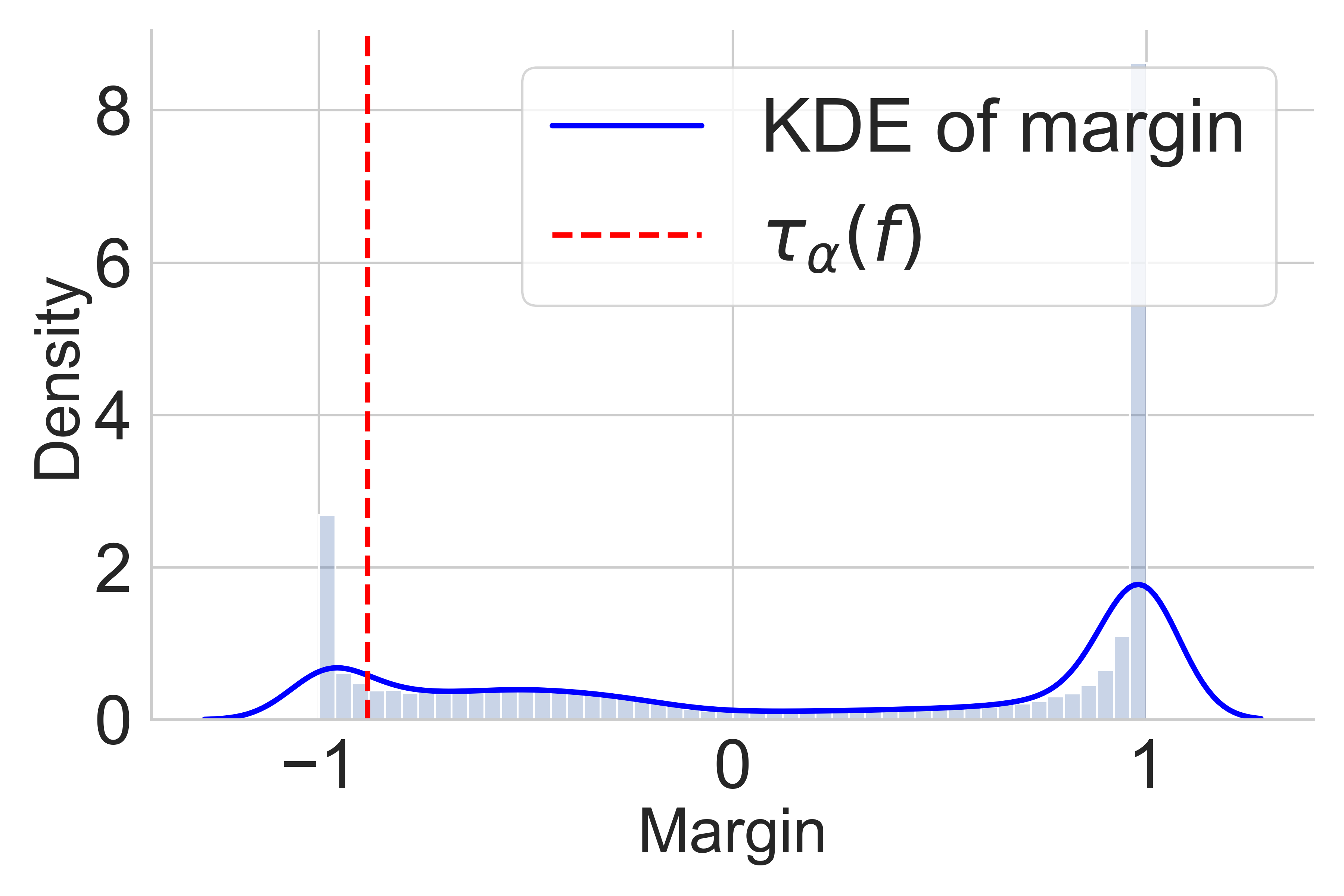}
    \end{minipage}
    \caption{
    \textbf{the kernel density estimate (KDE) of the margin distribution of different base losses} 
    on CIFAR-100 with $20\%$ synthetical label noise. 
    Subfigure \textbf{(a)} CE+CMRM ($\alpha = 0.15$); 
    Subfigure \textbf{(b)} Focal+CMRM ($\alpha = 0.1$); 
    Subfigure \textbf{(c)} LDAM+CMRM ($\alpha = 0.15$); 
    Subfigure \textbf{(d)} GCE+CMRM ($\alpha = 0.05$);
    The vertical dashed line indicating the estimated $\tau_\alpha(f)$. 
    The density curve is smooth and strictly positive around $\tau_\alpha(f)$, supporting the differentiability and positive-density assumption in Proposition~\ref{proposition:quantile_gap}.
    }
    \label{fig:multi_kde}
\end{figure*}

\textbf{Result: CMRM improves accuracy and reduces uncertainty under different types of noise.}

Table~\ref{tab:results_multi_noise_rates} summarizes results on CIFAR-100 with synthetic label noise at rates $\{0\%, 5\%, 10\%, 20\%, 30\%, 40\%\}$, where $0\%$ corresponds to the clean-label setting.
For each objective (CE, Focal, LDAM, and GCE), we compare the Base model with its +CMRM variant.
On average across all objectives and noise levels, CMRM improves accuracy by $1.34$ and reduces M.APSS by $3.89\%$.
The most notable gains occur under moderate to high noise. 
For example, CE and Focal achieve up to +$2.14$ and $2.22$ accuracy improvements, while GCE shows the largest uncertainty reduction (up to $-18.44\%$ in M.APSS).
Even when combined with LDAM, which already promotes margin separation, CMRM consistently provides additional accuracy improvements without increasing predictive uncertainty.

Table~\ref{tab:results_multi_seeds} reports the mean $\pm$ standard deviation of Top-1 accuracy across multiple random seeds on CIFAR-100 with $20\%$ synthetic label noise.
For each objective (CE, Focal, LDAM, and GCE), we compare the Base model with its +CMRM variant.
Across all objectives, CMRM consistently improves accuracy under the multi-seed evaluation.
In particular, CMRM yields accuracy gains of $+1.21$, $+0.93$, $+1.12$, and $+1.14$ for CE, Focal, LDAM, and GCE, respectively.
These results demonstrate that the performance improvements brought by CMRM remain stable across different random seeds.

Table \ref{tab:results_multi_cifarn_full} summarizes results on CIFAR-10N and CIFAR-100N with human-annotated label noise. 
Across all datasets, CMRM consistently improves both accuracy and uncertainty compared to NI-ERM.
On average, CMRM increases accuracy by $0.52$ and reduces M.APSS by $2.72\%$, indicating more confident and reliable predictions under noisy supervision.
The improvements are particularly pronounced on the most challenging settings, i.e., CIFAR-10N (Worst) and CIFAR-100N, where accuracy rises by up to $1.48$ and predictive uncertainty decreases by as much as $13.42\%$.
These consistent gains demonstrate that CMRM effectively enhances both robustness and calibration under real-world label noise.

\textbf{Result: CMRM loss convergences.}
Figure~\ref{fig:multi_loss} shows the training dynamics of the classification loss and the CMRM regularization loss. 
Both components decrease steadily and stabilize as training progresses, indicating smooth joint optimization. 
The CMRM term integrates with standard objectives and does not introduce instability or slowing down of convergence, demonstrating that CMRM can be efficiently optimized.

\textbf{Result: CMRM filters out noisy samples during training.}
Figure~\ref{fig:multi_noise_ratio} shows the fraction of noisy samples among those excluded by CMRM at each epoch. 
This proportion rapidly increases during the early training phase and stabilizes above $78\%$, indicating that CMRM consistently identifies and filters out mislabeled examples via its margin-based thresholding mechanism.

\textbf{Result: CMRM is robust to the choices of hyperparameter $\alpha$ and $\lambda$.}
Figure~\ref{fig:multi_sensitivity} and \ref{fig:multi_sensitivity_lambda} examine the sensitivity of CMRM to the hyperparameter $\alpha$ and $\lambda$, respectively.
Across a range of $\alpha$ and $\lambda$ values, CMRM consistently achieves higher accuracy than CE, indicating that its performance is robust to the choice of $\alpha$ and $\lambda$, and does not rely on careful hyperparameter tuning.

\textbf{Result: Assumptions in Proposition~\ref{proposition:quantile_gap} are empirically valid.}
Figure~\ref{fig:multi_kde} presents the kernel density estimate (KDE) of the margin distribution, with the vertical dashed line indicating the estimated $\tau_{\alpha}(f)$.
The density curve is smooth and strictly positive in the neighborhood of $\tau_{\alpha}(f)$, supporting the differentiability and positive-density assumption in Proposition~\ref{proposition:quantile_gap}.

\section{Additional Experiments for Binary Classification}
\label{ssup:sec:experiment_binary}

\subsection{Additional Experimental Setup Details}
\label{supp:subsec:binary_setup}

\textbf{Datasets.}
We evaluate our methods on four standard binary classification benchmarks, three of which are sourced from the UCI Machine Learning Repository:
\begin{itemize}
    \item \textbf{EMAIL}: This dataset contains features extracted from emails and aims to classify whether an email is spam or not. We treat spam emails as the positive class (label 1) and non-spam emails as the negative class (label 0). Following prior work, we invert the original labels in the dataset to conform to this definition.
    \item \textbf{ADULT}: This dataset involves predicting whether an individual's income exceeds $50K$ based on demographic and employment features. We define the positive class as individuals earning $\leq 50K$ (label 1) and the negative class as those earning $>50K$ (label 0), effectively focusing on identifying lower-income individuals.
    \item \textbf{CREDIT}: This dataset contains information on credit card clients and whether they defaulted on their payment in the following month. We define the positive class as non-defaulting clients (label 1) and the negative class as clients who defaulted (label 0), inverting the original label to emphasize reliable borrowers.
\end{itemize}

\begin{table*}[!t]
\centering
\setlength{\tabcolsep}{4pt}
\resizebox{\textwidth}{!}{%
\begin{NiceTabular}{@{}l l *{8}{c}@{}}
\toprule
\textbf{Dataset} & \textbf{Method} & \multicolumn{8}{c}{\textbf{Measure}} \\
\cmidrule(lr){3-10}
& & AUROC ($\uparrow$) & AUPRC ($\uparrow$) & FNR ($\downarrow$) & FPR ($\downarrow$)
  & ACC ($\uparrow$) & M.APSS ($\downarrow$) & PC APSS ($\downarrow$) & NC APSS ($\downarrow$) \\
\midrule
\multirow{8}{*}{Adult}
& LR              & 0.784 & 0.885 & \textbf{0.073} & 0.571 & 0.802 & 1.223 & 1.154 & 1.432 \\
& LR + CMRM        & \textbf{0.852} & \textbf{0.925} & 0.082 & \textbf{0.422} & \textbf{0.833} & \textbf{1.209} & \textbf{1.109} & \textbf{1.308} \\
& Focal           & 0.809 & 0.890 & 0.136 & 0.388 & 0.801 & 1.257 & 1.224 & 1.356 \\
& Focal + CMRM     & \textbf{0.872} & \textbf{0.942} & \textbf{0.128} & \textbf{0.324} & \textbf{0.823} & \textbf{1.221} & \textbf{1.148} & \textbf{1.295} \\
& SVM             & 0.808 & 0.925 & \textbf{0.029} & 0.807 & 0.776 & 1.276 & 1.370 & 1.512 \\
& SVM + CMRM       & \textbf{0.847} & \textbf{0.937} & 0.048 & \textbf{0.585} & \textbf{0.817} & \textbf{1.199} & \textbf{1.322} & \textbf{1.343} \\
& GCE             & 0.819 & 0.904 & \textbf{0.119} & 0.424 & \textbf{0.804} & 1.286 & \textbf{1.176} & 1.396 \\
& GCE + CMRM      & \textbf{0.846} & \textbf{0.928} & 0.172 & \textbf{0.286} & 0.800 & \textbf{1.273} & 1.207 & \textbf{1.340} \\
\midrule
\multirow{8}{*}{Email}
& LR              & 0.831 & 0.869 & \textbf{0.246} & 0.200 & 0.773 & 1.405 & 1.415 & 1.392 \\
& LR + CMRM        & \textbf{0.875} & \textbf{0.910} & 0.259 & \textbf{0.129} & \textbf{0.793} & \textbf{1.335} & \textbf{1.247} & \textbf{1.291} \\
& Focal           & 0.833 & 0.858 & \textbf{0.228} & 0.208 & 0.780 & 1.404 & 1.449 & 1.345 \\
& Focal + CMRM     & \textbf{0.907} & \textbf{0.916} & 0.246 & \textbf{0.080} & \textbf{0.822} & \textbf{1.235} & \textbf{1.152} & \textbf{1.202} \\
& SVM             & 0.952 & 0.964 & \textbf{0.043} & 0.219 & 0.885 & 1.029 & 1.003 & 1.001 \\
& SVM + CMRM       & \textbf{0.954} & \textbf{0.967} & 0.060 & \textbf{0.125} & \textbf{0.913} & \textbf{0.975} & \textbf{0.996} & \textbf{0.993} \\
& GCE             & 0.931 & 0.925 & \textbf{0.069} & 0.101 & 0.918 & \textbf{0.976} & \textbf{0.977} & \textbf{0.975} \\
& GCE + CMRM      & \textbf{0.938} & \textbf{0.933} & 0.074 & \textbf{0.082} & \textbf{0.922} & 0.984 & 0.982 & 0.986 \\
\midrule
\multirow{8}{*}{Credit}
& LR              & 0.690 & 0.866 & 0.134 & 0.600 & 0.764 & 1.468 & 1.414 & 1.522 \\
& LR + CMRM        & \textbf{0.714} & \textbf{0.877} & \textbf{0.121} & \textbf{0.565} & \textbf{0.782} & \textbf{1.400} & \textbf{1.365} & \textbf{1.436} \\
& Focal           & 0.673 & 0.862 & \textbf{0.169} & 0.588 & 0.739 & \textbf{1.507} & \textbf{1.467} & 1.547 \\
& Focal + CMRM     & \textbf{0.677} & \textbf{0.861} & 0.176 & \textbf{0.547} & \textbf{0.743} & 1.527 & 1.543 & \textbf{1.511} \\
& SVM             & 0.688 & 0.856 & \textbf{0.079} & 0.618 & 0.803 & \textbf{1.387} & \textbf{1.335} & 1.438 \\
& SVM + CMRM       & \textbf{0.711} & \textbf{0.869} & 0.084 & \textbf{0.553} & \textbf{0.813} & 1.421 & 1.437 & \textbf{1.405} \\
& GCE             & 0.671 & 0.861 & \textbf{0.134} & 0.845 & 0.751 & 1.446 & 1.366 & 1.526 \\
& GCE + CMRM      & \textbf{0.701} & \textbf{0.875} & \textbf{0.115} & \textbf{0.627} & \textbf{0.773} & \textbf{1.417} & 1.339 & \textbf{1.495} \\
\bottomrule
\end{NiceTabular}
}%
\caption{\textbf{Performance comparison on binary classification.}
We evaluate three datasets (Adult, Email, and Credit) under a $20\%$ label noise setting.
Models are assessed across accuracy (ACC), ranking (AUROC, AUPRC), calibration (FNR, FPR), and uncertainty metrics: marginal average prediction set size (M.\,APSS), positive-class prediction set size (PC APSS), and negative-class prediction set size (NC APSS).
$\uparrow$ means higher is better; $\downarrow$ lower is better.
Best results for each base model (LR, Focal, SVM, GCE) are in \textbf{bold}.
Our method consistently improves uncertainty estimation (M.APSS, PC APSS, NC APSS), ranking metrics (AUROC, AUPRC), and accuracy (ACC, FPR), while slightly increasing FNR.
}
\label{tab:binary_appendix}
\end{table*}

\textbf{Hyperparameters for training.}
We set datasets, base losses, base models, learning rate, $\lambda^+$, $\lambda^-$, $\alpha^+$, and $\alpha^-$ as hyperparameter choices. 
We search for hyperparameters on learning rate ($\eta$) $\in \{0.001, 0.005, 0.01\}$, $\lambda^+ \in \{0, 0.05, 0.1, 0.15, 0.2, 0.3, 0.4, 0.5, 0.6, 0.7, 0.8, 0.9, 1.0, 1.1, 1.2, 1.3, 1.4, 1.5\}$, $\lambda^- \in \{0, 0.05, 0.1, 0.2, 0.3, 0.4, 0.5, 0.6, 0.7, 0.8, 0.9, 1.0, 1.1, 1.2, 1.3, 1.4, 1.5\}$, $\alpha^+ \in \{0.1, 0.2, 0.3, 0.4, 0.5\}$, and $\alpha^- \in \{0.1, 0.2, 0.3, 0.4, 0.5\}$ to select the best combination of hyperparameters of each methods.
The hyperparameters employed to get the results presented in the main paper are summarized in Table \ref{tab:finetune_hyper_params}.

\begin{table}[!ht]
\centering
\resizebox{0.85\textwidth}{!}{
\begin{tabular}{|c|c|c|c|c|c|c|c|c|}
\hline
Data & loss & Architecture & Epochs & $\eta$ & $\lambda^+$ & $\lambda^-$ & $\alpha^+$ & $\alpha^-$ \\ \hline
\multirow{4}{*}{Adult} & LR+CMRM & MLP & 150 & 0.01 & 0.4 & 0.5 & 0.3& 0.1 \\ \cline{2-9} 
& Focal+CMRM & MLP & 150 & 0.01 & 0.1 & 0.4 & 0.3& 0.5 \\ \cline{2-9} 
& SVM+CMRM & SVM & 150 & 0.01 & 0.15 & 1.5 & 0.2 & 0.5 \\ \cline{2-9} 
& GCE+CMRM & MLP & 150 & 0.001 & 0.0 & 0.2 & 0.1 & 0.5 \\ \hline 
\multirow{4}{*}{Email} & LR+CMRM & MLP & 150 & 0.01 & 0.7 & 0.2 & 0.2 & 0.2 \\ \cline{2-9} 
& Focal+CMRM & MLP & 150 & 0.01 & 0.4 & 0.6 & 0.1& 0.1 \\ \cline{2-9} 
& SVM+CMRM & SVM & 150 &  0.01 & 0.3 & 1.0 & 0.3 & 0.3  \\ \cline{2-9} 
& GCE+CMRM & MLP & 150 & 0.01 & 0.9 & 0.7 & 0.4 & 0.4 \\ \hline 
\multirow{4}{*}{Credit} & LR+CMRM & MLP & 150 & 0.005 & 0.0 & 0.3 & 0.1 & 0.1 \\ \cline{2-9} 
& Focal+CMRM & MLP & 150 & 0.001 & 0.05 & 0.3 & 0.3 & 0.1 \\ \cline{2-9} 
& SVM+CMRM & SVM & 150 & 0.001 & 0.0 & 1.0 & 0.1 & 0.5 \\ \cline{2-9}
& GCE+CMRM & MLP & 150 & 0.0005 & 0.0 & 0.3 & 0.1 & 0.1 \\ \hline 
\end{tabular}
}
\caption{\textbf{The details we used to train our models for binary classification}. We reported the hyperparameters that give the best combination os metrics.}
\label{tab:finetune_hyper_params}
\end{table}

\subsection{Additional Experimental Results}
\label{supp:subsec:binary_results}

\begin{figure}[!t]
    \centering
    \begin{minipage}[t]{0.24\linewidth}
    \centering
    \textbf{(a)} LR + CMRM 
    \includegraphics[width=\linewidth]{figure/email_MLP_0.2_0.8_0.7_0.5_0.5_0.2_0.2_0.01_loss_plot.png}
    \\
    \includegraphics[width=\linewidth]{figure/email_MLP_0.2_0.8_0.7_0.5_0.5_0.2_0.2_0.01_thresholds_plot.png}
    \end{minipage}
    \begin{minipage}[t]{0.24\linewidth}
    \centering
    \textbf{(b)} Focal+CMRM
    \includegraphics[width=\linewidth]{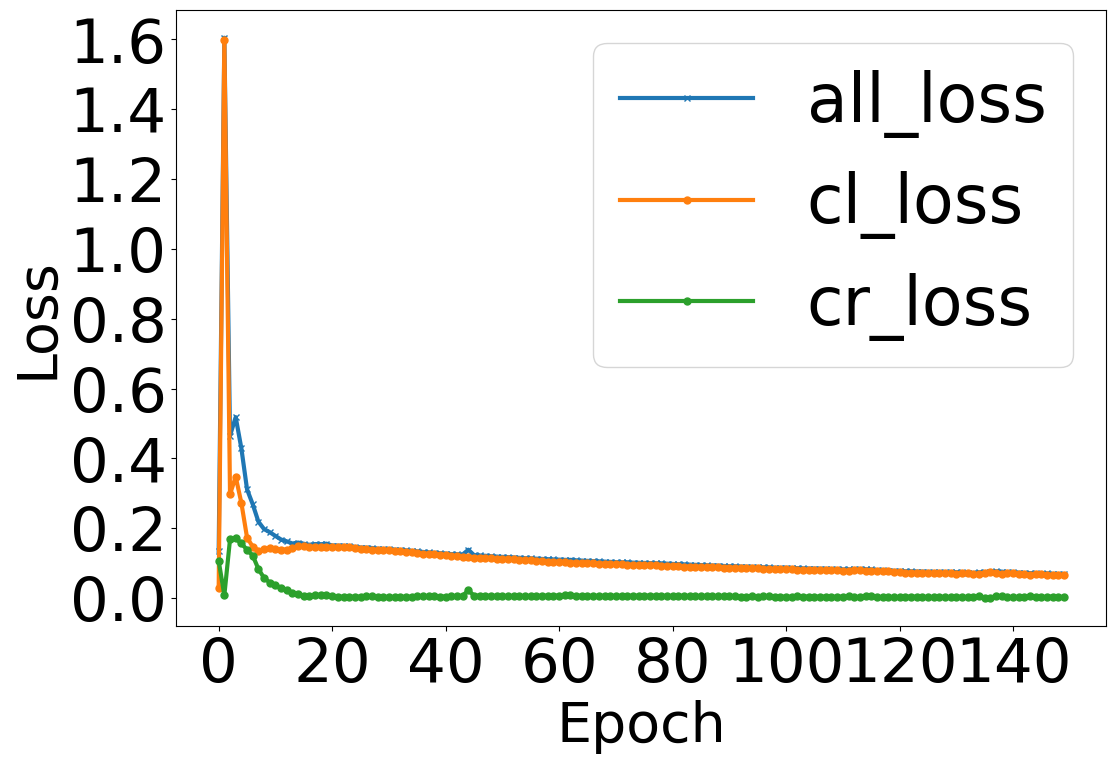}
    \\
    \includegraphics[width=\linewidth]{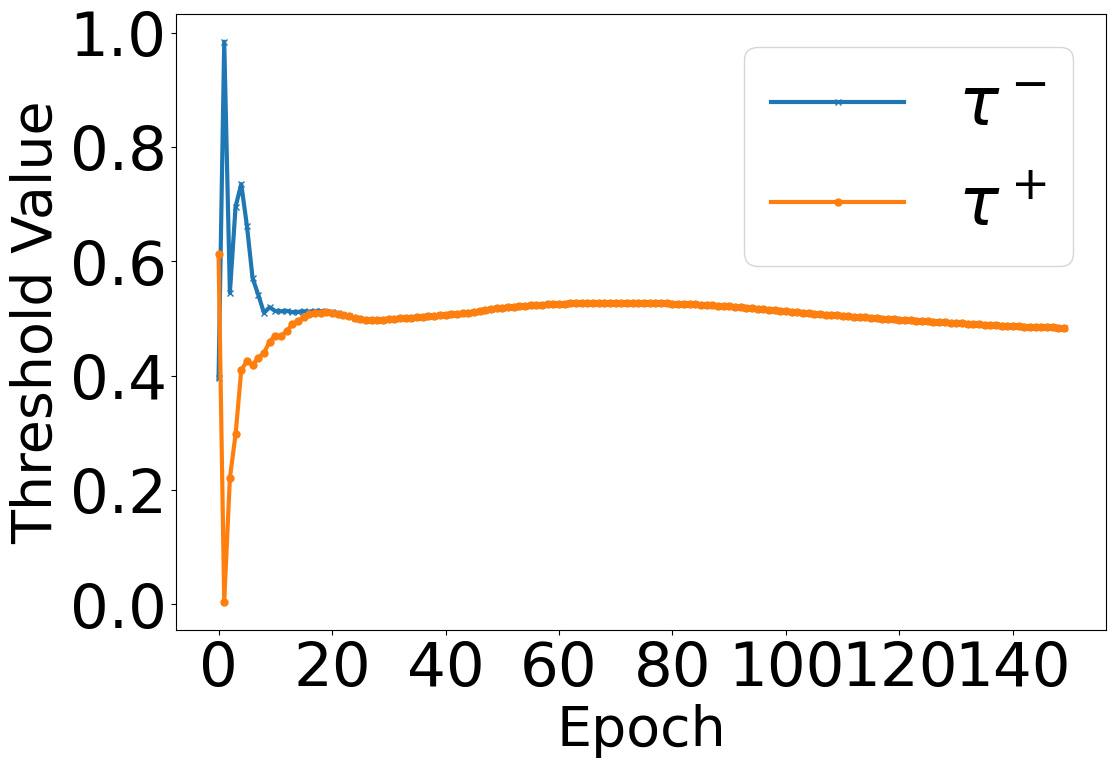}
    \end{minipage}
    \begin{minipage}[t]{0.24\linewidth}
    \centering
    \textbf{(c)} SVM+CMRM
    \includegraphics[width=\linewidth]{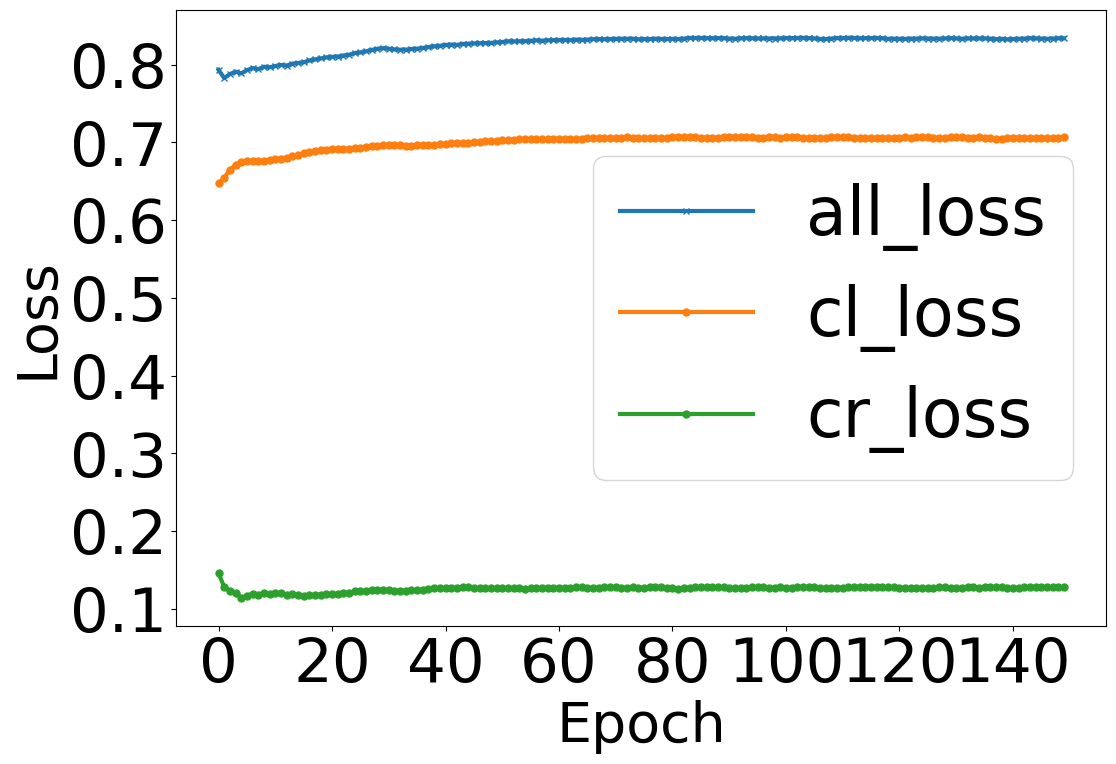}
    \\
    \includegraphics[width=\linewidth]{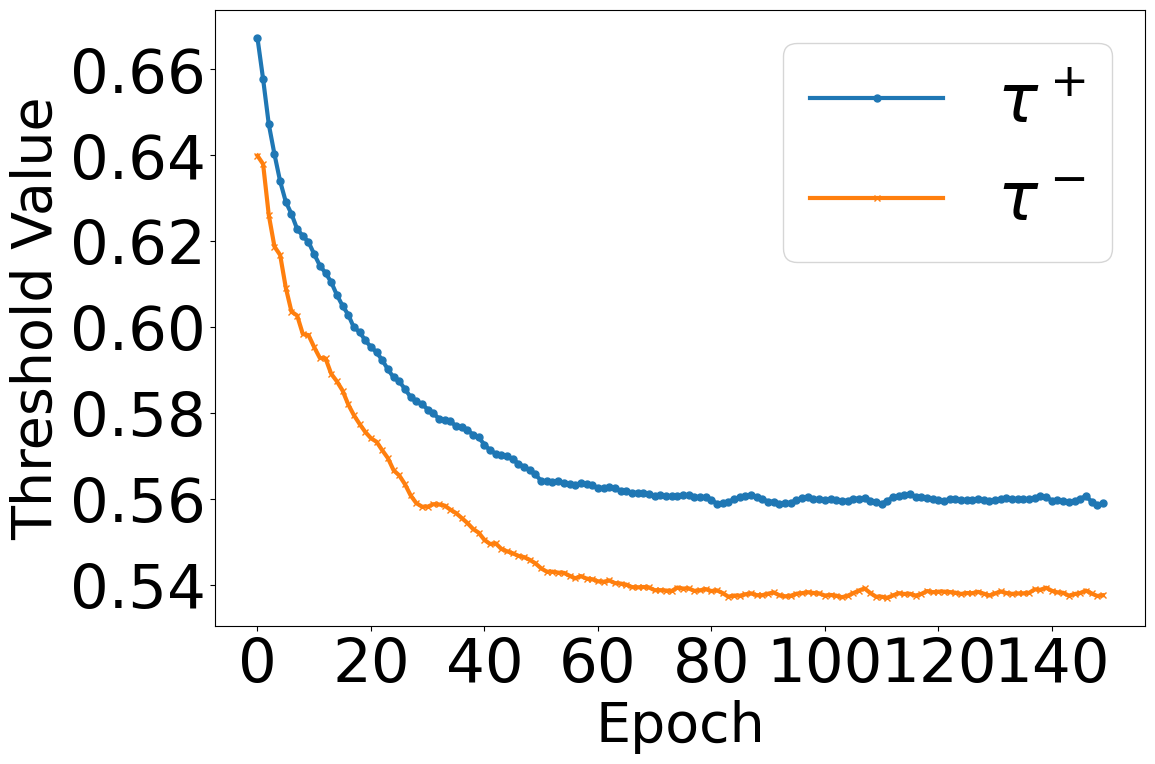}
    \end{minipage}
    \begin{minipage}[t]{0.24\linewidth}
    \centering
    \textbf{(d)} GCE+CMRM
    \includegraphics[width=\linewidth]{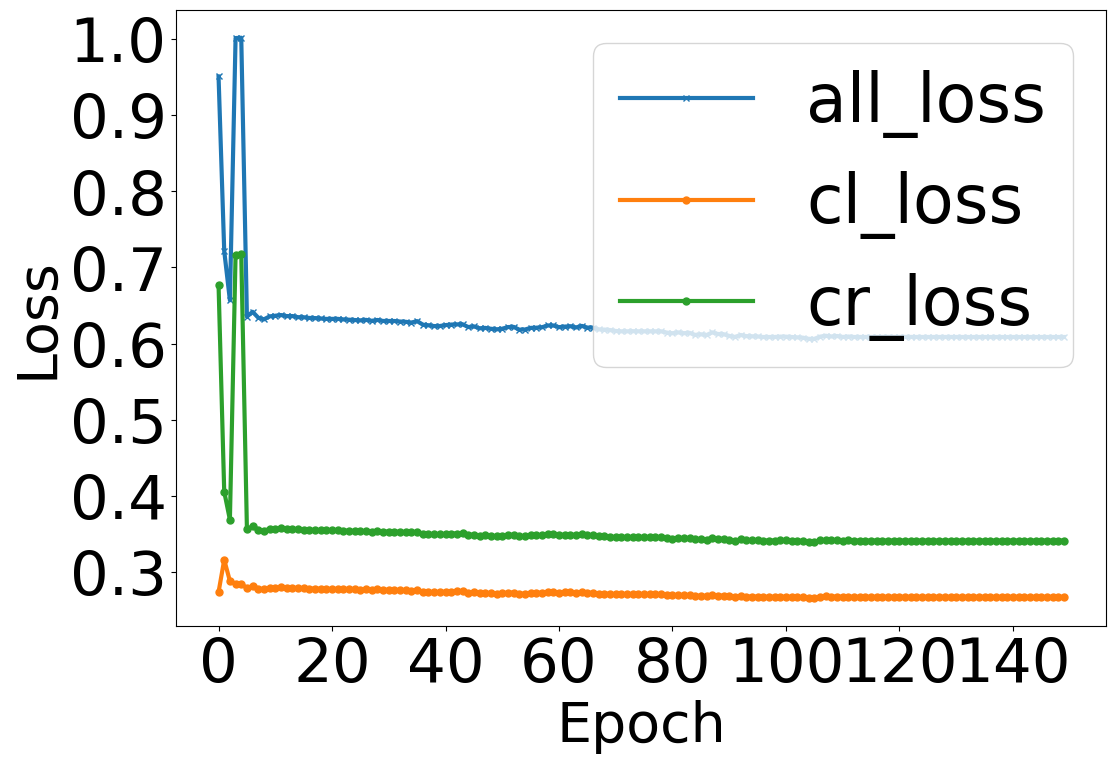}
    \\
    \includegraphics[width=\linewidth]{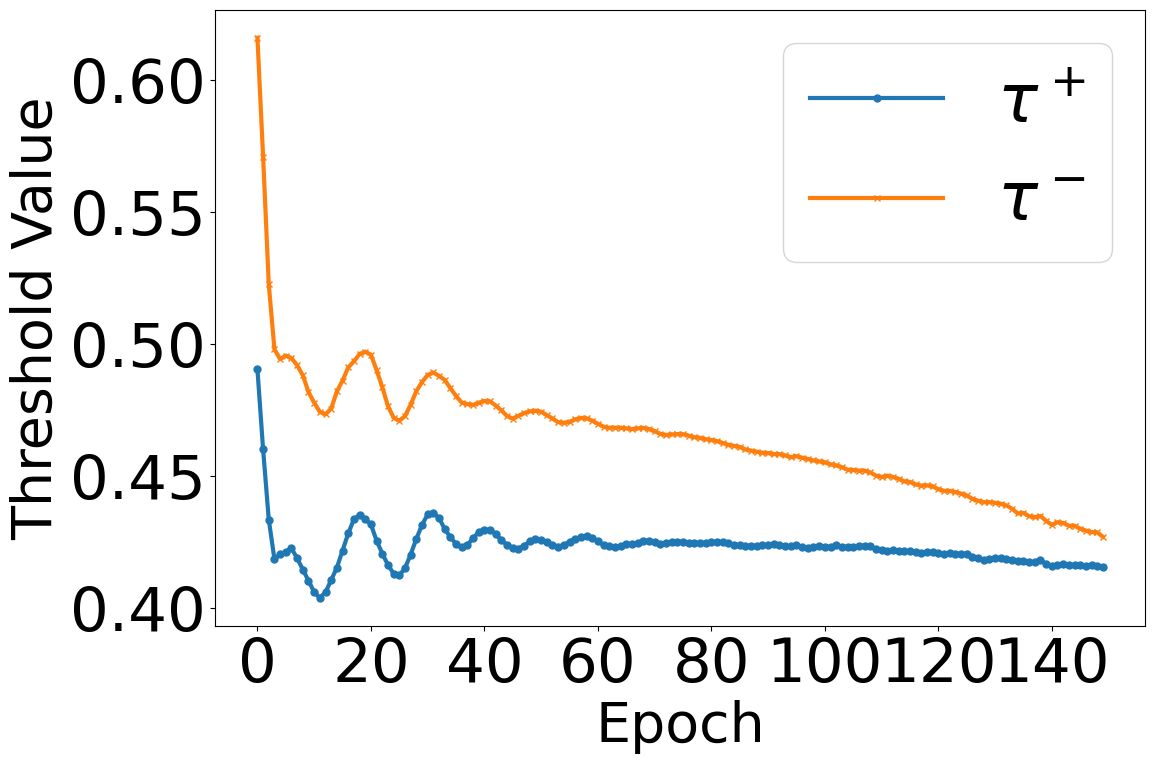}
    \end{minipage}
    \caption{
    \textbf{
    Dynamics of losses (Top Row) and two thresholds ($\tau^+$ and $\tau^-$, Bottom Row) for binary classification of all base loss (LR, Focal, SVM, GCE) with CMRM
    } on Email datasets.
    \textbf{Top row} shows the training dynamics of total loss (all loss), classification loss (cl loss), and CMRM regularization loss (CMRM loss) over epochs. 
    CMRM exhibits stable and monotonic convergence alongside standard loss components.
    \textbf{Bottom row} shows $\tau^+$ (negative class threshold) and $\tau^-$ (positive class threshold) of LR+CMRM during training.
    The separation between the thresholds increases, indicating that CMRM actively maximizes the margin between favorable and unfavorable classes.
    }
    \label{fig:binary_justification_full_email}
\end{figure}

\textbf{Result: CMRM improves robustness for binary classification.}
Table~\ref{tab:binary_appendix} reports results on the Adult, Email, and Credit datasets under a $20\%$ label noise setting.
Across all base models (LR, Focal, SVM, GCE), CMRM consistently improves ranking (AUROC, AUPRC) and classification (ACC, FPR) performance, while maintaining comparable FNR.
It also reduces predictive uncertainty, as shown by lower M.APSS, PC APSS, and NC APSS values.
These results demonstrate that CMRM enhances model robustness and uncertainty estimation under noisy supervision without requiring any noise priors.

\textbf{CMRM Loss convergences.}
To verify the stability and trainability of our method, we monitor the learning dynamics of CMRM during optimization. As shown in the top row of Figure \ref{fig:binary_justification_full_email}, both the classification loss and CMRM loss steadily decrease and stabilize, indicating that the joint objective converges smoothly. The CMRM regularization term integrates seamlessly with standard classification training and does not introduce optimization instability or slow down convergence. This confirms that CMRM can be efficiently optimized using standard gradient-based methods and is compatible with commonly used loss functions.

\textbf{Margin $\tau^- - \tau^+$ of CMRM grows over Time.}
The bottom row of Figure \ref{fig:binary_justification_full_email} illustrates the evolution of the class-conditional thresholds $\tau^+$ and $\tau^-$ over training. These thresholds define the CMRM margin region, with $\tau^-$ indicating the lower bound for confident positives and $\tau^+$ the upper bound for confident negatives. As training progresses, we observe that $\tau^-$  increases while $\tau^+$ decreases, leading to an expanding margin between the two thresholds. This demonstrates that CMRM successfully enforces separation between favorable and unfavorable classes in high-confidence regions, which is critical for robustness under posterior shift.

\end{document}